\renewcommand{\tilde}{\widetilde}
\renewcommand{\P}{\mathbb{P}}
\newcommand{\bA}{\boldsymbol{A}}
\newcommand{\bB}{\boldsymbol{B}}
\newcommand{\bx}{\boldsymbol{x}}
\newcommand{\boldeta}{\boldsymbol{Z}}
\newcommand{\myboldeta}{\boldsymbol{\eta}}
\newcommand{\bX}{\boldsymbol{X}}
\newcommand{\bM}{\boldsymbol{M}}
\newcommand{\bH}{\boldsymbol{H}}
\newcommand{\bI}{\boldsymbol{I}}
\newcommand{\bu}{\boldsymbol{u}}
\newcommand{\by}{\boldsymbol{y}}
\newcommand{\bz}{\boldsymbol{z}}
\newcommand{\bS}{\boldsymbol{S}}
\newcommand{\bw}{\boldsymbol{w}}
\newcommand{\btilw}{\boldsymbol{\tilde{w}}}
\newcommand{\btilu}{\widetilde{\boldsymbol{u}}}
\newcommand{\bhatw}{\wh{\boldsymbol{w}}}
\newcommand{\bwstar}{\boldsymbol{w}^{\star}}
\newcommand{\bxstar}{\boldsymbol{x}^{\star}}
\newcommand{\bxstartp}{\boldsymbol{x}^{\star \top}}
\newcommand{\bV}{\boldsymbol{V}}
\newcommand{\bU}{\boldsymbol{U}}
\newcommand{\btilV}{\boldsymbol{\widetilde{V}}}
\newcommand{\bzero}{\boldsymbol{0}}
\newcommand{\sB}{\mathcal{B}}
\newcommand{\sD}{\mathcal{D}}
\newcommand{\sF}{\mathcal{F}}
\newcommand{\ind}[1]{\mathbb{I}\!\left\{ #1 \right\}}
\newcommand{\mtil}{\widetilde{m}}
\DeclareMathOperator*{\argmax}{arg\,max}
\newcommand{\field}[1]{\mathbb{#1}}
\newcommand{\R}{\field{R}}
\newcommand{\Var}{\mathrm{Var}}
\newcommand{\norm}[1]{\left\|{#1}\right\|}
\newcommand{\diag}[1]{\mbox{\rm diag}\!\left\{{#1}\right\}}
\newcommand{\scO}{\mathcal{O}}
\newcommand{\dt}{\displaystyle}
\newcommand{\wh}{\widehat}
\newcommand{\ve}{\varepsilon}
\newtheorem{lemma}{Lemma}
\newtheorem{theorem}{Theorem}
\newtheorem{corollary}{Corollary}
\newtheorem{proposition}{Proposition}
\newtheorem{definition}{Definition}
\newcommand{\reals}{\mathbb{R}}
\newcommand{\tp}{^{\top}}
\newcommand{\ip}[1]{\left\langle #1 \right\rangle}
\DeclareMathOperator*{\E}{\mathbb{E}}
\newtheoremstyle{named}{}{}{\itshape}{}{\bfseries}{}{.5em}{\thmnote{#3}#1}
\theoremstyle{named}
\newtheorem*{nameddef}{}
\newcommand{\tr}{\mathrm{tr}}
\newcommand{\bSigma}{\boldsymbol{\Sigma}}
\newcommand{\pr}[1]{\left( #1 \right)}
\newcommand{\br}[1]{\left[ #1 \right]}
\newcommand{\cbr}[1]{\left\{ #1 \right\}}
\newcommand{\wt}{\widetilde}
\newcommand{\defOtilde}{\stackrel{\wt{\scO}}{=}}
\newcommand{\ofu}{^{\text{\scshape{so}}}}
\newcommand{\ts}{^{\text{\scshape{ts}}}}
\newcommand{\rls}{^{\text{\scshape{rls}}}}
\newcommand{\optts}{^{\text{\scshape{opt-ts}}}}
\newcommand{\patch}[1]{#1}
\title{Efficient Linear Bandits through Matrix Sketching}
\author{Ilja Kuzborskij \thanks{ilja.kuzborskij@gmail.com}}
\author{Leonardo Cella \thanks{leonardocella@gmail.com}}
\author{Nicol\`{o} Cesa-Bianchi \thanks{nicolo.cesa-bianchi@unimi.it}}
\affil{
  Dipartimento di Informatica \& DSRC\\
  Universit\`{a} degli Studi di Milano\\
  20133 Milano, Italy}
\begin{document}

\date{September, 2018}
\maketitle

\begin{abstract}
We prove that two popular linear contextual bandit algorithms, OFUL and Thompson Sampling, can be made efficient using Frequent Directions, a deterministic online sketching technique. More precisely, we show that a sketch of size $m$ allows a $\scO(md)$ update time for both algorithms, as opposed to $\Omega(d^2)$ required by their non-sketched versions in general (where $d$ is the dimension of context vectors).
This computational speedup is accompanied by regret bounds of order $(1+\ve_m)^{3/2}d\sqrt{T}$ for OFUL and of order $\big((1+\ve_m)d\big)^{3/2}\sqrt{T}$ for Thompson Sampling, where $\ve_m$ is bounded by the sum of the tail eigenvalues not covered by the sketch.
In particular, when the selected contexts span a subspace of dimension at most $m$, our algorithms have a regret bound matching that of their slower, non-sketched counterparts.
Experiments on real-world datasets corroborate our theoretical results.
\end{abstract}

\section{Introduction}
The stochastic contextual bandit is a sequential decision-making problem where an agent interacts with an unknown environment in a series of rounds. In each round, the environment reveals a set of feature vectors (called contexts, or actions) to the agent. The agent chooses an action from the revealed set and observes the stochastic reward associated with that action (bandit feedback). The strategy used by the agent for choosing actions based on past observations is called a policy. The goal of the agent is to learn a policy minimizing the regret, defined as the difference between the total reward of the optimal policy (i.e., the policy choosing the action with highest expected reward at each round) and the total reward of the agent's policy.

Contextual bandits are a popular modelling tool in many interactive machine learning tasks. A typical area of application is personalized recommendation, where a recommender system selects a product for a given user from a set of available products (each described by a feature vector) and receives a feedback (purchase or non-purchase) for the selected product.

We focus on the \emph{stochastic linear bandit} model \citep{auer2002using,DaniHK08}, where the set of actions (or decision set) is a finite\footnote{
Note that our regret bounds do not actually depend on the cardinality of the sets $D_t$.  
} set $D_t \subset \reals^d$, and the reward for choosing action $\bx_t\in D_t$ is given by
$
Y_t = \bx_t\tp \bwstar + \eta_t
$
where $\bwstar \in \reals^d$ is a fixed and unknown vector of real coefficients and $\eta_t$ is a zero-mean random variable. The regret in this setting is defined by
\begin{equation}
  \label{eq:regret}
  R_T = \sum_{t=1}^T \bxstartp_t \bwstar - \sum_{t=1}^T \bx_t\tp \bwstar
\end{equation}
where $\bxstar_t = \argmax_{\bx \in D_t} \bx\tp \bwstar$ is the optimal action at round $t$. Bounds on the regret typically apply to any individual sequence of decision sets $D_t$ and depend on quantities arising from the interplay between $\bwstar$, the sequence of decision sets, and the randomness of the rewards. Note that $R_T$ is a random variable because the actions $\bx_t\in D_t$ selected by the policy are functions of the past observed rewards. For this reason, our regret bounds only hold with probability at least $1-\delta$, where $\delta$ is a confidence parameter. By choosing $\delta=T^{-1}$, we can instead bound the expected regret $\E\big[R_T\big]$ by paying only a $\ln T$ extra factor in the bound.

We consider two of the most popular algorithms for stochastic linear bandits: OFUL \citep{abbasi2011improved} and linear Thompson Sampling \citep{agrawal2013thompson} (linear TS for short). While exhibiting good theoretical and empirical performances, both algorithms require $\Omega\big(d^2\big)$ time to update their model after each round. In this work we investigate whether it is possible to significantly reduce this update time while ensuring that the regret remains nicely bounded. 

The quadratic dependence on $d$ is due to the computation of the inverse correlation matrix of past actions (a cubic dependence is avoided because each new inverse is a rank-one perturbation of the previous inverse). The occurrence of this matrix is caused by the linear nature of rewards: to compute their decisions, both algorithms essentially solve a regularized least squares problem at every round.
In order to improve the running time, we sketch the correlation matrix using a specific technique ---Frequent Directions, \citep{ghashami2016frequent}--- that works well in a sequential learning setting. While matrix sketching is a well-known approach \citep{woodruff2014sketching}, to the best of our knowledge this is the first work that applies sketching to linear contextual bandits while providing rigorous performance guarantees.

With a sketch size of $m$, a rank-one update of the correlation matrix takes only time $\scO(m d)$, which is linear in $d$ for a constant sketch size. However, this speed-up comes at a price, as sketching reduces the matrix rank causing a loss of information which ---in turn--- affects the least squares estimates used by the algorithms. Our main technical contribution shows that when OFUL and linear TS are run with a sketched correlation matrix, their regret blows up by a factor which is controlled by the spectral decay of the correlation matrix of selected actions. More precisely, we show that the sketched variant of OFUL, called SOFUL, achieves a regret bounded by
\begin{equation}
  \label{eq:regret_skoful}
  R_T \defOtilde \big(1 + \ve_m\big)^{\frac{3}{2}} \Big(m + d \ln\big(1 + \ve_m\big)\Big) \sqrt{T}
\end{equation}
where $m$ is the sketch size and $\ve_m$ is upper bounded by the spectral tail (sum of the last $d-m+1$ eigenvalues) of the correlation matrix for all $T$ rounds.
In the special case when the selected actions span a number of dimensions equal or smaller than the sketch size, then $\ve_m = 0$ implying a regret of order $m\sqrt{T}$.
Thus, we have a regret bound matching that of the slower, non-sketched counterpart.\footnote{The regret bound of OFUL in~\citep[Theorem 3]{abbasi2011improved} is stated as $\scO(d \sqrt{T})$, however, it can be improved for low-rank problems by using the ``log-det'' formulation of the confidence ellipsoid.}
When the correlation matrix has rank larger than the sketch size, the regret of SOFUL remains small to the extent the spectral tail of the matrix grows slowly with $T$.
In the worst case of a spectrum with heavy tails, SOFUL may incur linear regret. In this respect, sketching is only justified when the computational cost of running OFUL cannot be afforded.
Similarly, we prove that the efficient sketched formulation of linear TS enjoys a regret bound of order
\begin{equation}
R_T \defOtilde \Big(m + d \ln(1 + \ve_m\big)\Big) \big(1 + \ve_m\big)^{\frac{3}{2}} \sqrt{dT}~.
\end{equation}
Once again, for $\ve_m = 0$ our bound is of order $m\sqrt{dT}$,
which matches the regret bound for linear TS.
When the rank of the correlation matrix is larger than the sketch size, the bound for linear TS behaves similarly to the bound for SOFUL.

Finally, we show a problem-dependent regret bound for SOFUL. This bound, which exhibits a logarithmic dependence on $T$, depends on the smallest gap $\Delta$ between the expected reward of the best and the second best action across the $T$ rounds,
\begin{equation}
  \label{eq:regret_skoful_gap}
  R_T \defOtilde \frac{1}{\Delta} \big(1 + \ve_m\big)^3 \Big(m + d \ln\big(1 + \ve_m\big) \Big)^2 (\ln T)^2~.
\end{equation}
When $\ve_m(T) = 0$ this bound is of order $\frac{m^2}{\Delta}(\ln T)^2$
which matches the corresponding bound for OFUL.
Experiments on six real-world datasets support our theoretical results.

\vspace{-2mm}
\paragraph{Additional related work.}
For an introduction to contextual bandits, we refer the reader to the recent monograph of \cite{lattimore2018bandit}. The idea of applying sketching techniques to linear contextual bandits was also investigated by \cite{yu2017cbrap}, where they used random projections to preliminarly draw a random $m$-dimensional subspace which is then used in every round of play. However, the per-step computation time of their algorithm is cubic in $m$ rather than quadratic like ours. Morover, random projection introduces an additive error $\ve$ in the instantaneous regret which becomes of order $m^{-1/2}$ for any value of the confidence parameter $\delta$ bounded away from $1$. A different notion of compression in contextual bandits is explored by~\cite{jun2017scalable}, where they use hashing algorithms to obtain a computation time sublinear in the number $K$ of actions. An application of sketching (including Frequent Directions) to speed up 2nd order algorithms for online learning is studied by \cite{luo2016efficient}, in a RKHS setting by \cite{calandriello2017efficient}, and in stochastic optimization by~\cite{gonen2016solving}.
\vspace{-2mm}

\section{Notation and preliminaries}
Let $\sB(\bz,r) \subset \R^d$ be the Euclidean ball of center $\bz$ and radius $r > 0$ and let $\sB(r) = \sB(\bzero,r)$. Given a positive definite $d\times d$ matrix $\bA$, we define the inner product $\ip{\bx,\bz}_{\bA} = \bx\tp \bA \by$ and the induced norm $\norm{\bx}_{\bA} = \sqrt{\bx\tp A \bx}$, for any $\bx,\bz \in \R^d$. Throughout the paper, we write $f \defOtilde g$ to denote $f = \widetilde{\scO}(g)$. The contextual bandit protocol is described in Algorithm~\ref{alg:cb}.
\begin{algorithm}[H]
\caption{(Contextual Bandit)}
\label{alg:cb}
\begin{algorithmic}[1]
\For{$t = 1,2,\ldots$}
\State Get decision set $D_t \subset \reals^d$
\State Use current policy to select action $\bx_t \in D_t$
\State Observe reward $Y_t \in \reals$
\State Use pair $(\bx_t,Y_t)$ to update the current policy 
\EndFor
\end{algorithmic}
\end{algorithm}

We introduce some standard assumptions for the linear contexual bandit setting. At any round $t=1,2,\ldots$ the decision set $D_t \subset \reals^d$ is finite and such that $\|\bx\| \leq L$ for all $\bx \in D_t$ and for all $t \ge 1$. The noise sequence $\eta_1, \eta_2 \ldots, \eta_T$ is conditionally $R$-subgaussian for some fixed constant $R \geq 0$. Formally, for all $t \ge 1$ and all $\lambda\in\reals$,
$
\E\br{e^{\lambda \eta_t} \,\big|\, \eta_1,\dots,\eta_{t-1}} \leq \exp\pr{ {\lambda^2 R^2}/{2} }
$.
Note that this implies $\E[\eta_t \mid \eta_1,\dots,\eta_{t-1}] = 0$ and $\Var[\eta_t \mid \eta_1,\dots,\eta_{t-1}] \leq R^2$. Finally, we assume that a known upper bound $S$ on $\norm{\bwstar}$ is available.

Both OFUL and Linear TS operate by computing a confidence ellipsoid to which $\bwstar$ belongs with high probability. Let
$
\bX_t = [\bx_1, \ldots, \bx_t]\tp
$
be the $t \times d$ matrix of all actions selected up to round $t$ by an arbitrary policy for linear contextual bandits.
For $\lambda > 0$, define the regularized correlation matrix of actions $\bV_t$ and the regularized least squares (RLS) estimate $\bhatw_t$ as
\begin{equation}
\label{eq:rls}
  \bV_t = \bX_t\tp \bX_t + \lambda \bI \quad\text{and}\quad \bhatw_t = \bV_t^{-1} \sum_{s=1}^t \bx_s Y_s~.
\end{equation}
The following theorem \cite[Theorem 2]{abbasi2011improved} bounds in probability the distance, in terms of the norm $\norm{\cdot}_{\bV_t}$, between the optimal parameter $\bwstar$ and the \acs{RLS} estimate $\bhatw_t$.
\begin{theorem}[Confidence Ellipsoid]
\label{thm:confidence_oful}
Let $\bhatw_t$ be the RLS estimate constructed by an arbitrary policy for linear contextual bandits after $t$ rounds of play. For any $\delta \in (0,1)$, the optimal parameter $\bwstar$ belongs to the set
$C_t \equiv \cbr{ \bw \in \R^d ~:~ \|\bw - \bhatw_t\|_{\bV_t} \leq \beta_t(\delta) }
$
with probability at least $1-\delta$, where
\begin{equation}
\label{eq:beta}
\beta_t(\delta) = R \sqrt{ d \ln\pr{1 + \frac{t L^2}{\lambda d}} + 2 \ln\pr{\frac{1}{\delta}} } + S \sqrt{\lambda}~.
\end{equation}
\end{theorem}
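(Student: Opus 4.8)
The plan is to follow the self-normalized martingale argument behind this result. First I would put the estimation error in closed form: substituting $Y_s = \bx_s\tp\bwstar + \eta_s$ into the definition of $\bhatw_t$ and using $\bX_t\tp\bX_t = \bV_t - \lambda\bI$ gives
\begin{equation*}
\bhatw_t - \bwstar \;=\; \bV_t^{-1}\bxi_t \;-\; \lambda\,\bV_t^{-1}\bwstar, \qquad \bxi_t \defeq \sum_{s=1}^t \bx_s\eta_s~.
\end{equation*}
By the triangle inequality for $\norm{\cdot}_{\bV_t}$ this separates into a \emph{regularization bias} term and a \emph{noise} term. The bias term is elementary: $\lambda\norm{\bV_t^{-1}\bwstar}_{\bV_t} = \lambda\sqrt{\bwstar\tp\bV_t^{-1}\bwstar} \le \sqrt{\lambda}\,\norm{\bwstar} \le S\sqrt{\lambda}$, using $\bV_t^{-1}\preceq\lambda^{-1}\bI$ and the assumed bound on $\norm{\bwstar}$; this is exactly the additive $S\sqrt{\lambda}$ in $\beta_t(\delta)$. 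The noise term satisfies $\norm{\bV_t^{-1}\bxi_t}_{\bV_t} = \norm{\bxi_t}_{\bV_t^{-1}}$, so everything reduces to a uniform-in-$t$ tail bound on the self-normalized quantity $\norm{\bxi_t}_{\bV_t^{-1}}$.

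For that, I would use the method of mixtures. For fixed $\blambda\in\R^d$ the process $M_t^{\blambda} = \exp\!\big(\blambda\tp\bxi_t - \tfrac{R^2}{2}\,\blambda\tp\bX_t\tp\bX_t\,\blambda\big)$ is a nonnegative supermartingale with $M_0^{\blambda}=1$, since $R$-subgaussianity of $\eta_s$ conditional on the past and predictability of $\bx_s$ give $\E\big[\exp(\blambda\tp\bx_s\eta_s - \tfrac{R^2}{2}(\blambda\tp\bx_s)^2)\mid\eta_1,\dots,\eta_{s-1}\big]\le 1$. Integrating $M_t^{\blambda}$ against a centered Gaussian prior on $\blambda$ with covariance $(\lambda/R^2)^{-1}\bI$ and completing the square yields a mixture supermartingale equal to $\sqrt{\det(\lambda\bI)/\det(\bV_t)}\,\exp\!\big(\tfrac{1}{2R^2}\norm{\bxi_t}_{\bV_t^{-1}}^2\big)$. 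Applying a maximal inequality for nonnegative supermartingales (Ville's inequality) then gives, with probability at least $1-\delta$ and simultaneously for all $t\ge 0$,
\begin{equation*}
\norm{\bxi_t}_{\bV_t^{-1}}^2 \;\le\; 2R^2\ln\!\pr{\frac{1}{\delta}\sqrt{\frac{\det(\bV_t)}{\det(\lambda\bI)}}}~.
\end{equation*}

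Finally I would make the determinant explicit. Since $\tr(\bX_t\tp\bX_t) = \sum_{s=1}^t\norm{\bx_s}^2 \le tL^2$, AM--GM on the eigenvalues of $\bV_t$ gives $\det(\bV_t)\le(\lambda + tL^2/d)^d$, hence $\det(\bV_t)/\det(\lambda\bI)\le(1+tL^2/(\lambda d))^d$. Plugging this in, taking square roots, and recombining with the bias term via $\sqrt{a+b}\le\sqrt a+\sqrt b$ produces $\norm{\bhatw_t-\bwstar}_{\bV_t}\le R\sqrt{d\ln(1+tL^2/(\lambda d)) + 2\ln(1/\delta)} + S\sqrt{\lambda} = \beta_t(\delta)$, which is the claim. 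The main obstacle is the mixture step: after integrating out $\blambda$ one must still verify that the resulting process is a supermartingale (Fubini/Tonelli) and apply the maximal inequality correctly at the data-dependent time of interest, which needs a stopping-time argument. The bias bound and the trace--determinant estimate, by contrast, are routine.
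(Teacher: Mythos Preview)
Your argument is correct and is precisely the standard proof of this result. Note, however, that the paper does not actually prove Theorem~\ref{thm:confidence_oful}: it is quoted verbatim from \cite[Theorem~2]{abbasi2011improved} and used as a black box. The ingredients you invoke---the self-normalized martingale bound (stated in the paper as Theorem~\ref{thm:martingales}) and the determinant--trace inequality (Lemma~\ref{lem:det_trace})---are likewise cited from that reference. So there is no ``paper's own proof'' to compare against; your sketch simply reconstructs the original Abbasi-Yadkori--P\'al--Szepesv\'ari argument, and it does so faithfully.

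One small cosmetic remark: in the final recombination you do not need the inequality $\sqrt{a+b}\le\sqrt a+\sqrt b$. The triangle inequality already gives $\norm{\bhatw_t-\bwstar}_{\bV_t}\le\norm{\bxi_t}_{\bV_t^{-1}}+\lambda\norm{\bV_t^{-1}\bwstar}_{\bV_t}$ as a sum of two terms, and each is bounded separately by the corresponding summand in $\beta_t(\delta)$. Also, the Gaussian mixing covariance should be $(R^2\lambda)^{-1}\bI$ rather than $(\lambda/R^2)^{-1}\bI$ so that the quadratic form collapses to $\tfrac{R^2}{2}\blambda\tp\bV_t\blambda$; this is a typo that does not affect the outcome.
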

\paragraph{OFUL.} The actions selected by OFUL are solutions to the following constrained optimization problem
\begin{align*}
  &\bx_t = \argmax_{\bx \in D_t} \max_{\bw \in \reals^d} \bx\tp \bw\\
  &\text{such that} \quad
\|\bw - \bhatw_{t-1}\|_{\bV_{t-1}} \leq \beta_{t-1}(\delta)~.
\end{align*}
Using Lemma~\ref{lem:simple}, OFUL can be formulated as Algorithm~\ref{alg:oful}.
\begin{algorithm}
\caption{(OFUL)}
\label{alg:oful}
\begin{algorithmic}[1]
\Require{$\delta,\lambda > 0$}
\State $\bhatw_0 = \bzero, \bV_0^{-1} = \frac{1}{\lambda} \bI$.
\For{$t = 1,2,\ldots$}
\State Get decision set $D_t$
\State Play ${\dt \bx_t \gets \argmax_{\bx\in D_t} \cbr{ \bhatw_{t-1}^{\top}\bx + \beta_{t-1}(\delta) \norm{\bx}_{\bV_{t-1}^{-1}} } }$
\State Observe reward $Y_t$
\State Compute $\bV_t^{-1}$ and $\bhatw_t$ using~\eqref{eq:rls}
\EndFor
\end{algorithmic}
\end{algorithm}
Note that $\bx_t$ maximizes the expected reward estimate $\bhatw_{t-1}^{\top}\bx$ plus a term $\beta_{t-1}(\delta) \norm{\bx}_{\bV_{t-1}^{-1}}$ that provides an upper confidence bound for the RLS estimate in the direction of $\bx$.

\paragraph{Linear TS.} The linear Thompson Sampling algorithm of \cite{agrawal2013thompson} is Bayesian in nature: the selected actions and the observed rewards are used to update a Gaussian prior over the parameter space. Each action $\bx_t$ is selected by maximixing $\bx\tp\bhatw_t\ts$ over $\bx\in D_t$, where $\bhatw_t\ts$ is a random vector drawn from the posterior. As shown by \cite{abeille2017linear}, linear TS can be equivalently defined as a randomized algorithm based on the RLS estimate (see Algorithm~\ref{alg:ts}).
\begin{algorithm}
\caption{(Linear TS)}
\label{alg:ts}
\begin{algorithmic}[1]
\Require{$\delta,\lambda > 0, m \in \{1,\ldots, d-1\}$, $\sD\ts$ (sampling distribution)}
\State $\bhatw_0 = \bzero, \bV_0^{-1} = \frac{1}{\lambda} \bI_{d \times d}, \delta' = \delta / (4 T)$
\For{$t = 1,2,\ldots$}
\State Get decision set $D_t$
\State Sample $\boldeta_t \sim \sD\ts$
\State Play ${\dt \bx_t \gets \argmax_{\bx \in D_t} \bx\tp \Big(\bhatw_{t-1} + \tilde{\beta}_t(\delta') \bV_{t-1}^{-\frac{1}{2}} \boldeta_t\Big) }$
\State Observe reward $Y_t$
\State Compute $\bV_t^{-\frac{1}{2}}$ and $\bhatw_t$ using~\eqref{eq:rls}
\EndFor
\end{algorithmic}
\end{algorithm}
The random vectors $\boldeta_t$ are drawn i.i.d.\ from a suitable multivariate distribution $\sD\ts$ that need not be related to the posterior. In order to prove regret bounds, it is sufficient that the law of $\boldeta_t$ satisfies certain properties.
\begin{definition}[\acs{TS}-sampling distribution]
\label{def:ts_dist}
  A multivariate distribution $\sD\ts$ on $\reals^d$, absolutely continuous w.r.t.\ the Lebesgue measure, is \acs{TS}-sampling if it satisfies the following two properties:
  \begin{itemize}[topsep=0pt,parsep=0pt,itemsep=0pt]
  \item (Anti-concentration) There exists $p > 0$ such that for any $\bu$ with $\norm{\bu}=1$,
    $
    \P\big(\bu\tp \boldeta \geq 1\big) \geq p
    $.
  \item (Concentration) There exist $c,c' > 0$ such that for all $\delta \in (0,1)$,
    \[
    \P\pr{\|\boldeta\| \leq \sqrt{c d \ln\pr{\frac{c' d}{\delta}}}} \geq 1 - \delta~.
    \]
  \end{itemize}
\end{definition}
Similarly to OFUL, linear \acs{TS} uses the notion of confidence ellipsoid. However, due to the properties of the sampling distribution $\sD\ts$, the ellipsoid used by linear \acs{TS} is larger by a factor of order $\sqrt{d}$ than the ellipsoid used by OFUL. This causes an extra factor of $\sqrt{d}$ in the regret bound, which is not known to be necessary.

Note that both OFUL and linear TS need to maintain $\bV_t^{-1}$ (or $\bV_t^{-\frac{1}{2}}$), which requires time $\Omega\big(d^2\big)$ to update. In the next section, we show how this update time can be improved by sketching the regularized correlation matrix $\bV_t$.

\section{Sketching the correlation matrix}
\label{sec:skoful}
The idea of sketching is to maintain an approximation of $\bX_t$, denoted by $\bS_t \in \reals^{m \times d}$, where $m \ll d$ is a small constant called the sketch size. If we choose $m$ such that $\bS_t\tp \bS_t$ approximates $\bX_t\tp \bX_t$ well, we could use $\bS_t\tp \bS_t + \lambda \bI$ in place of $\bV_t$.
In the following we use the notation
$
  \btilV_t = \bS_t\tp \bS_t + \lambda \bI
$
to denote the sketched regularized correlation matrix. The \acs{RLS} estimate based upon it is denoted by
\begin{equation}
  \label{eq:btilw}
  \btilw_t = \btilV_t^{-1} \sum_{s=1}^t \bx_s Y_s~.
\end{equation}
A trivial replacement of $\bV$ with $\btilV$ does not yield an efficient algorithm. On the other hand, using the Woodbury identity we may write
\begin{align*}
  \btilV_t^{-1} = \frac{1}{\lambda} \pr{\bI_{d \times d} - \bS_t\tp \bH_t \bS_t}
\end{align*}
where $\bH_t = \pr{\bS_t \bS_t\tp + \lambda \bI_{m \times m}}^{-1}$.
Here matrix-vector multiplications involving $\bS_t$ require time $\scO(m d)$, while matrix-matrix multiplications involving $\bH_t$ require time $\scO(m^2)$. So, as long as $\bS_t$ and $\bH_t$ can be efficiently maintained, we obtain an algorithm for linear stochastic bandits where $\btilV_t^{-1}$ can be updated in time $\scO(m d + m^2)$. Next, we focus on a concrete sketching algorithm that ensures efficient updates of $\bS_t$ and $\bH_t$.
\paragraph{Frequent Directions.}
\ac{FD} \citep{ghashami2016frequent} is a deterministic sketching algorithm that maintains a matrix $\bS_t$ whose last row is invariably $\bzero$. On each round, we insert $\bx_t\tp$ into the last row of $\bS_{t-1}$, perform an eigendecomposition
$
  \bS_{t-1}\tp \bS_{t-1} + \bx_t \bx_t\tp = \bU_t\,\bSigma_t\,\bU_t\tp
$,
and then set $\bS_t = \big(\bSigma_t - \rho_t \bI_{m \times m}\big)^{\frac{1}{2}} \bU_t$, where $\rho_t$ is the smallest eigenvalue of $\bS_t\tp \bS_t$.
Observe that the rows of $\bS_t$ form an orthogonal basis, and therefore $\bH_t$ is a diagonal matrix which can be updated and stored efficiently. Now, the only step in question is an eigendecomposition, which can also be done in time $\scO(m d)$ ---see~\citep[Section 3.2]{ghashami2016frequent}. Hence, the total update time per round is $\scO(m d)$. The updates of matrices $\bS_t$ and $\bH_t$ are summarized in Algorithm~\ref{alg:fd_sketching}.
\begin{algorithm}[H]
\caption{(\ac{FD} Sketching)}
\label{alg:fd_sketching}
\begin{algorithmic}[1]
\Require{$\bS_{t-1} \in \reals^{m \times d}, \bx_t \in \reals^d, \lambda > 0$}
\State \text{Compute eigendecomposition} $\bU\tp \diag{\rho_1,\ldots,\rho_m} \bU = \bS_{t-1}\tp \bS_{t-1} + \bx_t \bx_t\tp$
\State $\bS_t \gets \diag{\sqrt{\rho_1-\rho_m},\ldots,\sqrt{\rho_{m-1}-\rho_m},0} \bU$
\State $\bH_t \gets \diag{ \frac{1}{\rho_1 - \rho_m + \lambda}, \ldots, \frac{1}{\lambda} }$
\Ensure{$\bS_t, \bH_t$}
\end{algorithmic}
\end{algorithm}
It is not hard to see that \ac{FD} sketching sequentially identifies the top-$m$ eigenvectors of the matrix $\bX_T\tp \bX_T$. Thus, whenever we use a sketched estimate, we lose a part of the spectrum tail. This loss is captured by the following notion of \emph{spectral error},
\begin{equation}
\label{eq:eps}
\ve_m = \min_{k=0,\dots,m-1}\frac{\lambda_{d-k} + \lambda_{d-k+1} + \cdots + \lambda_d}{\lambda (m-k)}
\end{equation}
where $\lambda_1 \geq \ldots \geq \lambda_d$ are the eigenvalues of the correlation matrix $\bX_T\tp \bX_T$. Note that $\ve_m \le (\lambda_m+\cdots+\lambda_d)/\lambda$. For matrices with low rank or light-tailed spectra we expect this spectral error to be small. In the following, we use $\mtil$ to denote the quantity $m + d \ln(1 + \ve_m)$ which occurs often in our bounds involving sketching. Note that $\mtil \ge m$ and $\mtil\to m$ as the spectral error vanishes.

Since the matrix $\bV_t$ is used to compute both the RLS estimate $\bhatw_t$ and the norm $\norm{\cdot}_{\bV_t}$, the sketching of $\bV_t$ clearly affects the confidence ellipsoid. The next theorem quantifies how much the confidence ellipsoid must be blown up in order to compensate for the sketching error. Let $\rho_t$ be the smallest eigenvalue of the FD-sketched correlation matrix $\bS_t\tp \bS_t$ and let $\bar{\rho}_t = \rho_1 + \cdots + \rho_t$. The following proposition due to \cite{ghashami2016frequent} (see the proof of Thm.~3.1, bound on $\Delta$) relates $\bar{\rho}_t$ to $\ve_m$ defined in~\eqref{eq:eps}.
\begin{proposition}
\label{prop:ve}
For any $t=0, \ldots, T$, any $\lambda > 0$, and any sketch size $m = 1, \ldots, d$, it holds that $\bar{\rho}_t/\lambda \le \ve_m$.
\end{proposition}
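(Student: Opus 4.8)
The plan is to establish the stronger statement $\bar\rho_T/\lambda \le \ve_m$ and then obtain the claim for every $t \le T$ by monotonicity. Each $\rho_s$ is an eigenvalue of a positive semidefinite matrix produced at round $s$ by Algorithm~\ref{alg:fd_sketching}, hence $\rho_s \ge 0$; consequently $t \mapsto \bar\rho_t$ is nondecreasing, with $\bar\rho_0 = 0 \le \lambda \ve_m$ and $\bar\rho_t \le \bar\rho_T$ for all $t$. So it suffices to bound the total shrinkage $\bar\rho_T = \sum_{s=1}^T \rho_s$ in terms of the eigenvalues $\lambda_1 \ge \cdots \ge \lambda_d$ of $\bX_T\tp\bX_T$; this is precisely the $\Delta$-bound underlying the \ac{FD} error analysis of \citep{ghashami2016frequent}, which I would reproduce in self-contained form.

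The first ingredient is a set of per-round invariants read off the shrink step. Write $\bA = \bX_T$ and $\bB_t = \bS_t$, with $\bB_0 = \bzero$. Since $\bB_{t-1}$ has a zero row, $M_t \defeq \bB_{t-1}\tp\bB_{t-1} + \bx_t\bx_t\tp$ has rank at most $m$; subtracting $\rho_t$ from its $m$ largest eigenvalues — the remaining ones being zero — produces $\bB_t\tp\bB_t$. Diagonalizing $M_t$ explicitly then gives, for every $t$, the domination relation $\bB_t\tp\bB_t \preceq \bB_{t-1}\tp\bB_{t-1} + \bx_t\bx_t\tp$, the near-equality $\bB_t\tp\bB_t \succeq \bB_{t-1}\tp\bB_{t-1} + \bx_t\bx_t\tp - \rho_t \bI$, and the trace identity $\tr(\bB_t\tp\bB_t) = \tr(\bB_{t-1}\tp\bB_{t-1}) + \|\bx_t\|^2 - m\rho_t$. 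Telescoping over $t = 1,\dots,T$ yields $\bB_T\tp\bB_T \preceq \bA\tp\bA$, $\bB_T\tp\bB_T \succeq \bA\tp\bA - \bar\rho_T \bI$, and $\tr(\bB_T\tp\bB_T) = \|\bA\|_F^2 - m\,\bar\rho_T = \sum_{i=1}^d \lambda_i - m\,\bar\rho_T$.

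The crux — the only genuinely non-bookkeeping step — is to turn these into a spectral-tail bound. Fix $k \in \{0,\dots,m-1\}$ and let $\bV_k \in \reals^{d\times k}$ hold the top-$k$ eigenvectors of $\bA\tp\bA$. Because $\bB_T\tp\bB_T \succeq 0$ and $\bV_k$ has orthonormal columns, $\tr(\bB_T\tp\bB_T) \ge \tr(\bV_k\tp \bB_T\tp\bB_T \bV_k)$; pushing $\bB_T\tp\bB_T \succeq \bA\tp\bA - \bar\rho_T \bI$ through the congruence by $\bV_k$ gives $\tr(\bV_k\tp \bB_T\tp\bB_T \bV_k) \ge \sum_{i=1}^k \lambda_i - k\,\bar\rho_T$. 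Combining with the trace identity and rearranging yields $(m-k)\,\bar\rho_T \le \sum_{i=k+1}^d \lambda_i$; dividing by $\lambda$ and minimizing over $k = 0,\dots,m-1$ recovers exactly the spectral error $\ve_m$ of~\eqref{eq:eps}, which together with the reduction above proves the proposition. I expect the only delicate point to be verifying the $\succeq$ invariant and the $-m\rho_t$ term in the trace identity without index slips, which is why I would diagonalize $M_t$ explicitly rather than argue abstractly.
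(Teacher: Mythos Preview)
The paper does not give its own proof of this proposition; it simply attributes the bound to \cite{ghashami2016frequent} (the ``bound on $\Delta$'' in the proof of their Theorem~3.1). Your argument is a faithful and correct reconstruction of exactly that \ac{FD} analysis: the per-round Loewner and trace invariants, their telescoping to $\bS_T\tp\bS_T \succeq \bX_T\tp\bX_T - \bar\rho_T\bI$ and $\tr(\bS_T\tp\bS_T)=\|\bX_T\|_F^2 - m\bar\rho_T$, and the $\bV_k$-compression step all go through and yield $(m-k)\,\bar\rho_T \le \sum_{i=k+1}^d \lambda_i$ for every $k\in\{0,\dots,m-1\}$, which after monotonicity of $\bar\rho_t$ gives the claim for all $t$.

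One caution on your last sentence. You assert that dividing by $\lambda(m-k)$ and minimizing over $k$ ``recovers exactly the spectral error $\ve_m$ of~\eqref{eq:eps}''. As literally printed, \eqref{eq:eps} sums $\lambda_{d-k}+\cdots+\lambda_d$ (the $k{+}1$ smallest eigenvalues), not the tail $\lambda_{k+1}+\cdots+\lambda_d$ that your inequality produces; with that literal reading the proposition would in fact be false (take $m=1$: then $\bar\rho_T=\sum_i\lambda_i$, while \eqref{eq:eps} would give $\ve_1=\lambda_d/\lambda$). Your bound is the standard and correct one from \cite{ghashami2016frequent}, and \eqref{eq:eps} evidently carries a typo in the summation indices; so your proof is fine, but the final identification rests on the intended rather than the printed definition of $\ve_m$.
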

\vspace{-2mm}
A key lemma in the analysis of regret is the following sketched version of \cite[Lemma~11]{abbasi2011improved}, which bounds the sum of the ridge leverage scores. Although sketching introduces the spectral error $\ve_m$, it also improves the dependence on the dimension from $d$ to $m$ whenever $\ve_m$ is sufficiently small.
\begin{lemma}[Sketched leverage scores]
\label{lem:compressed:potential}
\begin{align}
    &\sum_{t=1}^T \min\cbr{1, \|\bx_t\|_{\btilV_{t-1}^{-1}}^2}
    \leq 2 \pr{1 + \ve_m} \pr{ \mtil + m \ln\pr{1 + \frac{T L^2}{m \lambda}} }~. \label{eq:compressed:potential-1}
\end{align}
\end{lemma}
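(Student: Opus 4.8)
The plan is to adapt the log-determinant potential argument behind \cite[Lemma~11]{abbasi2011improved}, correcting for the rank reduction performed by Frequent Directions. First, by the matrix determinant lemma $\det(\btilV_{t-1}+\bx_t\bx_t\tp)=\det\btilV_{t-1}\,(1+\|\bx_t\|_{\btilV_{t-1}^{-1}}^2)$, and since $\min\{1,x\}\le 2\ln(1+x)$ for $x\ge0$,
\[
\sum_{t=1}^T\min\!\big\{1,\|\bx_t\|_{\btilV_{t-1}^{-1}}^2\big\}\ \le\ 2\sum_{t=1}^T\ln\frac{\det(\btilV_{t-1}+\bx_t\bx_t\tp)}{\det\btilV_{t-1}}~.
\]
Unlike for $\bV_t$, this sum does not telescope, because $\btilV_t\neq\btilV_{t-1}+\bx_t\bx_t\tp$: writing $\bM_t=\bS_{t-1}\tp\bS_{t-1}+\bx_t\bx_t\tp$ for the rank-$\le m$ matrix diagonalized inside the FD update, one has $\bS_t\tp\bS_t=\bM_t-\rho_t\bP_t$, where $\bP_t$ is the orthogonal projection onto the top-$m$ eigenspace of $\bM_t$ (so $\mathrm{rank}(\bP_t)\le m$) and $\rho_t$ is the shrinkage amount. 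Hence $\btilV_{t-1}+\bx_t\bx_t\tp=\bM_t+\lambda\bI=\btilV_t+\rho_t\bP_t$, and inserting $\det\btilV_t$ I would split
\[
\sum_{t=1}^T\ln\frac{\det(\btilV_{t-1}+\bx_t\bx_t\tp)}{\det\btilV_{t-1}}\ =\ \ln\frac{\det\btilV_T}{\det\btilV_0}\ +\ \sum_{t=1}^T\ln\frac{\det(\btilV_t+\rho_t\bP_t)}{\det\btilV_t}~.
\]

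The two pieces are then bounded separately. For the telescoped term, $\btilV_0=\lambda\bI$ and $\bS_T\tp\bS_T$ has at most $m$ nonzero eigenvalues, whose sum is $\tr(\bS_T\tp\bS_T)\le\tr(\bX_T\tp\bX_T)=\sum_{t\le T}\|\bx_t\|^2\le TL^2$ (using the FD guarantee $\bS_T\tp\bS_T\preceq\bX_T\tp\bX_T$ and $\|\bx_t\|\le L$); by concavity of $\log$ (AM--GM over the nonzero eigenvalues), $\ln(\det\btilV_T/\det\btilV_0)\le m\ln(1+TL^2/(m\lambda))$, which is where the dimension $d$ gets replaced by the sketch size $m$. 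For the correction term, $\ln\frac{\det(\btilV_t+\rho_t\bP_t)}{\det\btilV_t}=\ln\det(\bI+\rho_t\,\btilV_t\minph\bP_t\btilV_t\minph)$; since $\btilV_t\succeq\lambda\bI$, the matrix $\btilV_t\minph\bP_t\btilV_t\minph$ has rank $\le m$ and eigenvalues $\le 1/\lambda$, so this is at most $m\ln(1+\rho_t/\lambda)\le m\rho_t/\lambda$. Summing over $t$ and invoking Proposition~\ref{prop:ve} gives $\sum_t\ln\frac{\det(\btilV_t+\rho_t\bP_t)}{\det\btilV_t}\le\frac{m}{\lambda}\sum_t\rho_t=\frac{m\,\bar\rho_T}{\lambda}\le m\,\ve_m$.

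Assembling, $\sum_t\min\{1,\|\bx_t\|_{\btilV_{t-1}^{-1}}^2\}\le 2\big(m\ve_m+m\ln(1+TL^2/(m\lambda))\big)$; since $m\ve_m\le(1+\ve_m)\mtil$ and $1\le 1+\ve_m$, this is at most $2(1+\ve_m)\big(\mtil+m\ln(1+TL^2/(m\lambda))\big)$, which is the claimed inequality~\eqref{eq:compressed:potential-1} (possibly after a mild reorganization of the final loosening to match the stated constants). The main obstacle is exactly the non-monotonicity of $t\mapsto\btilV_t$ introduced by FD: one has to recognize that the per-step loss $\btilV_{t-1}+\bx_t\bx_t\tp-\btilV_t$ is precisely a rank-$\le m$, magnitude-$\le\rho_t$ perturbation $\rho_t\bP_t$, and that its cumulative effect $\bar\rho_T=\sum_t\rho_t$ is exactly the quantity governed by the spectral error $\ve_m$ through Proposition~\ref{prop:ve}; once this bookkeeping is set up, the remaining estimates are the standard determinant-potential calculations.
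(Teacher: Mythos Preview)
Your proof is correct and takes a genuinely different route from the paper's. The paper does not work directly with the sketched potentials: instead it first compares $\btilV_{t-1}^{-1}$ to the unsketched $\bV_{t-1}^{-1}$ pointwise (via Proposition~\ref{prop:fd_XX_SS_rho}, obtaining $\min\{1,\|\bx_t\|_{\btilV_{t-1}^{-1}}^2\}\le(1+\bar\rho_{t-1}/\lambda)\min\{1,\|\bx_t\|_{\bV_{t-1}^{-1}}^2\}$), then applies the classical elliptic-potential lemma to $\bV_{t-1}$, and finally uses a sketched determinant--trace inequality (Lemma~\ref{lem:det_trace_m}) to convert the $\ln\det\bV_T$ term into the $\mtil$-type bound. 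Your approach stays entirely in the $\btilV$ world and isolates the FD shrinkage as the per-step rank-$m$ correction $\rho_t\bP_t$; this is more direct and avoids the detour through $\bV_t$ altogether. As you note at the end, your intermediate bound $2\big(m\ve_m+m\ln(1+TL^2/(m\lambda))\big)$ is in fact strictly tighter than~\eqref{eq:compressed:potential-1}: it has neither the multiplicative $(1+\ve_m)$ nor the additive $d\ln(1+\ve_m)$, and your final loosening to recover the paper's stated form is valid. The price is that the paper's route, by passing through $\bV_t$, reuses the standard elliptic-potential machinery as a black box, whereas yours requires redoing the telescoping argument with the correction term; both are short.
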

We can now state the main result of this section.
\begin{theorem}[Sketched confidence ellipsoid]
\label{thm:confidence}
Let $\btilw_t$ be the RLS estimate constructed by an arbitrary policy for linear contextual bandits after $t$ rounds of play. For any $\delta \in (0, 1)$, the optimal parameter $\bwstar$ belongs to the set
$
\widetilde{C}_t \equiv \cbr{ \bw \in \R^d ~:~ \|\bw - \btilw_t\|_{\btilV_t} \leq \widetilde{\beta}_t(\delta) }
$
with probability at least $1-\delta$, where
\begin{align}
\widetilde{\beta}_t(\delta) &= R \sqrt{m \ln\pr{1 + \frac{t L^2}{m \lambda}} + 2 \ln\frac{1}{\delta} + d \ln\pr{1 + \frac{\bar{\rho}_t}{\lambda}}}
                            \cdot \sqrt{1 +  \frac{\bar{\rho}_t}{\lambda} } + S \sqrt{\lambda} \patch{ \pr{1 + \frac{1}{\lambda}}} \pr{1 + \frac{\bar{\rho}_t}{\lambda}} \label{eq:beta_tilde}\\
                            &\defOtilde R \sqrt{\mtil \pr{1 + \ve_m}} + S \sqrt{\lambda} \patch{ \pr{1 + \frac{1}{\lambda}}} \pr{1 + \ve_m}~. \label{eq:beta_tilde_bound}
\end{align}
\end{theorem}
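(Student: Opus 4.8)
The plan is to mirror the proof of \cite[Theorem~2]{abbasi2011improved}, replacing $\bV_t$ by the sketch $\btilV_t$ throughout and tracking how the Frequent Directions error propagates into both the RLS estimate and the norm $\norm{\cdot}_{\btilV_t}$. The first ingredient is the deterministic spectral sandwich for FD: setting $\bM_t \defeq \bX_t\tp\bX_t - \bS_t\tp\bS_t$, an induction on $t$ based on the update identity $\bS_t\tp\bS_t = \bS_{t-1}\tp\bS_{t-1} + \bx_t\bx_t\tp - \rho_t\bI$ (which remains PSD precisely because $\rho_t$ is the smallest eigenvalue of the truncated matrix) yields $\bzero \preceq \bM_t \preceq \bar\rho_t\bI$. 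Consequently $\btilV_t = \bV_t - \bM_t \preceq \bV_t$, and, absorbing $\bM_t \preceq \bar\rho_t\bI \preceq (\bar\rho_t/\lambda)\btilV_t$ via $\btilV_t \succeq \lambda\bI$, also $\bV_t \preceq (1+\bar\rho_t/\lambda)\,\btilV_t$; equivalently $\btilV_t^{-1} \preceq (1+\bar\rho_t/\lambda)\,\bV_t^{-1}$.

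Next I would expand the sketched RLS error. Writing $\bxi_t \defeq \sum_{s=1}^t \bx_s\eta_s$ and substituting $Y_s = \bx_s\tp\bwstar + \eta_s$ into \eqref{eq:btilw} gives $\btilw_t - \bwstar = \btilV_t^{-1}(\bM_t - \lambda\bI)\bwstar + \btilV_t^{-1}\bxi_t$; since $\norm{\btilV_t^{-1}\bv}_{\btilV_t} = \norm{\bv}_{\btilV_t^{-1}}$,
\[
\norm{\btilw_t - \bwstar}_{\btilV_t} \;\le\; \norm{(\bM_t - \lambda\bI)\bwstar}_{\btilV_t^{-1}} \;+\; \norm{\bxi_t}_{\btilV_t^{-1}}~.
\]
For the bias (first) term I would use $\btilV_t^{-1} \preceq \lambda^{-1}\bI$, the triangle inequality, $\norm{\bM_t}_2 \le \bar\rho_t$ and $\norm{\bwstar}\le S$ to obtain the bound $\sqrt\lambda\,S\,(1+\bar\rho_t/\lambda)$, which is exactly the last summand of \eqref{eq:beta_tilde}. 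The only conceptual point here is that sketching has turned the effective ridge penalty from $\lambda\bI$ into $\lambda\bI - \bM_t$, which is still PSD and $\bar\rho_t$-close to $\lambda\bI$.

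The core of the argument is the noise term, and this is where I expect the main obstacle. Because $\btilV_t$ is \emph{not} of the form $\lambda\bI + \sum_s\bx_s\bx_s\tp$, the self-normalized martingale bound of \cite[Theorem~1]{abbasi2011improved} cannot be applied to $\btilV_t$ directly. I would therefore use the sandwich from the first step to pass to the genuine correlation matrix, $\norm{\bxi_t}_{\btilV_t^{-1}}^2 \le (1+\bar\rho_t/\lambda)\,\norm{\bxi_t}_{\bV_t^{-1}}^2$, and only then invoke the self-normalized inequality with the true regularizer $\lambda\bI$, which gives (with probability $1-\delta$, uniformly over $t$) $\norm{\bxi_t}_{\bV_t^{-1}}^2 \le R^2\big(\ln(\det\bV_t/\lambda^d) + 2\ln(1/\delta)\big)$. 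The remaining task is to bound the log-determinant by $m\ln(1 + tL^2/(m\lambda)) + d\ln(1+\bar\rho_t/\lambda)$ --- i.e.\ to recover the \emph{reduced} dimension $m$ in the first logarithm. For this I would use $\bV_t = \btilV_t + \bM_t \preceq (1+\bar\rho_t/\lambda)\btilV_t$ to split off a factor $(1+\bar\rho_t/\lambda)^d$, so that $\ln(\det\bV_t/\lambda^d) \le d\ln(1+\bar\rho_t/\lambda) + \ln(\det\btilV_t/\lambda^d)$, and then note that $\bS_t\tp\bS_t$ has at most $m$ nonzero eigenvalues with $\tr(\bS_t\tp\bS_t) \le \tr(\bX_t\tp\bX_t) \le tL^2$, so concavity of $x\mapsto\ln(1+x)$ gives $\ln(\det\btilV_t/\lambda^d) \le m\ln(1 + tL^2/(m\lambda))$.

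Putting the bias and noise bounds together and taking a square root yields \eqref{eq:beta_tilde} verbatim; finally, invoking Proposition~\ref{prop:ve} to replace $\bar\rho_t/\lambda$ by $\ve_m$ and absorbing the $\ln(1/\delta)$ and $\ln(1+tL^2/(m\lambda))$ factors into $\widetilde\scO(\cdot)$ collapses the bracket to $\widetilde\scO(\mtil)$ and gives \eqref{eq:beta_tilde_bound}. I expect the delicate step to be the log-determinant bound: one has to resist applying the self-normalized inequality directly to the sketch, route through $\bV_t$, and then exploit the low-rank structure of the FD sketch (together with the trace bound and concavity) to turn the generic $d\ln(1+tL^2/(\lambda d))$ into the sought $m\ln(1+tL^2/(m\lambda))$.
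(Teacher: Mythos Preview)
Your proposal is correct and follows essentially the same route as the paper: decompose $\btilw_t-\bwstar$ into a bias term and a noise term, transfer the noise from $\norm{\cdot}_{\btilV_t^{-1}}$ to $\norm{\cdot}_{\bV_t^{-1}}$ via the FD spectral relation, invoke the self-normalized martingale bound (Theorem~\ref{thm:martingales}) there, and then reduce the log-determinant from $d$ to $m$ using the low-rank structure of $\bS_t^\top\bS_t$ (the paper packages this last step as Lemma~\ref{lem:det_trace_m}). The only visible difference is that you work with the sandwich $\bzero \preceq \bX_t^\top\bX_t - \bS_t^\top\bS_t \preceq \bar\rho_t\bI$, whereas the paper states and uses Proposition~\ref{prop:fd_XX_SS_rho} as an exact identity; both paths deliver the same $\widetilde\beta_t(\delta)$ (and note that your parenthetical remark that $\lambda\bI-\bM_t$ is ``still PSD'' is not actually used in your bound and need not hold once $\bar\rho_t>\lambda$).
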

Note that~\eqref{eq:beta_tilde_bound} is larger than its non-sketched counterpart~\eqref{eq:beta} due to the factors $1 + \ve_m$. However, when the spectral error $\ve_m$ vanishes, $\tilde{\beta}_t(\delta)$ becomes of order $R \sqrt{m} + S \sqrt{\lambda} \patch{ \pr{1 + \frac{1}{\lambda}}}$, which improves upon~\eqref{eq:beta} since we replace the dependence on the ambient space dimension $d$ with the dependence on the sketch size $m$.
In the following, we use the abbreviation $M_{\lambda} = \max\big\{1, 1/\sqrt{\lambda}\big\}$.
\section{Sketched OFUL}
Equipped with the sketched confidence ellipsoid and the sketched RLS estimate, we can now introduce SOFUL (Algorithm~\ref{alg:skoful}), the sketched version of OFUL.
\begin{algorithm}
\caption{(SOFUL)}
\label{alg:skoful}
\begin{algorithmic}[1]
\Require{$\delta,\lambda > 0, m \in \{1,\ldots,d-1\}$}
\State $\btilw_0 = \bzero, \btilV_0^{-1} = \frac{1}{\lambda} \bI_{d \times d}, \bS_0 = \bzero_{m \times d}$
\For{$t = 1,2,\ldots$}
\State Get decision set $D_t$
\State Play ${\dt \bx_t \gets \argmax_{\bx\in D_t} \cbr{ \btilw_{t-1}\tp \bx + \tilde{\beta}_{t-1}(\delta) \norm{\bx}_{\btilV_{t-1}^{-1}} } }$
\State Observe reward $Y_t$
\State Compute $\bS_t, \bH_t$ using Alg.~\ref{alg:fd_sketching} given $\bS_{t-1}, \bx_t$
\State $\btilV_t^{-1} \gets \frac{1}{\lambda} \pr{\bI_{d \times d} - \bS_t\tp \bH_t \bS_t}$
\State Compute $\btilw_t$ using~\eqref{eq:btilw}
\EndFor
\end{algorithmic}
\end{algorithm}
SOFUL enjoys the following regret bound, characterized in terms of the spectral error.
\begin{theorem}
\label{thm:regret}
The regret of SOFUL with \ac{FD}-sketching of size $m$ w.h.p.\ satisfies
\[
  R_T \defOtilde M_{\lambda} \big(1 + \ve_m\big)^{\frac{3}{2}} \mtil \pr{R + S \sqrt{\lambda} \patch{ \pr{1 + \frac{1}{\lambda}}}} \sqrt{T}~.
\]
\end{theorem}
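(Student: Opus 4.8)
The plan is to follow the classical OFUL regret decomposition of \cite{abbasi2011improved}, but with the sketched confidence ellipsoid $\widetilde{C}_t$ from Theorem~\ref{thm:confidence} replacing the exact one and with Lemma~\ref{lem:compressed:potential} replacing the exact potential/elliptical-potential bound. First I would condition on the high-probability event from Theorem~\ref{thm:confidence} (union-bounded over $t=1,\dots,T$, or stated via the self-normalized martingale bound uniformly in $t$), so that $\bwstar \in \widetilde{C}_{t-1}$ for all $t$. On that event, by the optimistic choice of $\bx_t$ in Algorithm~\ref{alg:skoful}, the instantaneous regret $r_t = \bxstartp_t\bwstar - \bx_t\tp\bwstar$ is bounded in the usual way: let $\btilw_{t-1}'$ be the maximizer in the ellipsoid achieving $\btilw_{t-1}\tp\bx_t + \widetilde{\beta}_{t-1}(\delta)\norm{\bx_t}_{\btilV_{t-1}^{-1}}$; then optimism gives $\bxstartp_t\bwstar \le \bx_t\tp\btilw'_{t-1}$, and a Cauchy–Schwarz step with $\bwstar \in \widetilde{C}_{t-1}$ gives $r_t \le 2\widetilde{\beta}_{t-1}(\delta)\min\{L\cdot\mathrm{(const)},\ \norm{\bx_t}_{\btilV_{t-1}^{-1}}\}$. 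One must be slightly careful to clip: since $r_t \le 2LS$ trivially, we get $r_t \le 2\widetilde{\beta}_{t-1}(\delta)\,\min\{c,\ \norm{\bx_t}_{\btilV_{t-1}^{-1}}\}$ for an appropriate absolute constant $c$, matching the $\min\{1,\cdot\}$ form needed for Lemma~\ref{lem:compressed:potential} up to rescaling by $\sqrt{\lambda}$ and $L$ (this is where $M_\lambda = \max\{1,1/\sqrt{\lambda}\}$ enters, to handle $\norm{\bx_t}_{\btilV_{t-1}^{-1}}$ possibly exceeding $1$ when $\lambda$ is small).

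Next I would sum over $t$ and apply Cauchy–Schwarz:
\begin{align*}
R_T = \sum_{t=1}^T r_t
  \le 2\,\widetilde{\beta}_T(\delta)\,M_\lambda \sqrt{T}\,\sqrt{\sum_{t=1}^T \min\!\big\{1,\ \norm{\bx_t}_{\btilV_{t-1}^{-1}}^2\big\}}~,
\end{align*}
using that $\widetilde{\beta}_t(\delta)$ is nondecreasing in $t$ to pull out $\widetilde{\beta}_T(\delta)$. Then Lemma~\ref{lem:compressed:potential} bounds the sum under the square root by $2(1+\ve_m)(\mtil + m\ln(1 + TL^2/(m\lambda)))$, whose square root is $\defOtilde \sqrt{(1+\ve_m)\,\mtil}$ (absorbing the logarithmic factor into $\widetilde{\scO}$). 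Finally, substituting the bound $\widetilde{\beta}_T(\delta) \defOtilde R\sqrt{\mtil(1+\ve_m)} + S\sqrt{\lambda}(1+\ve_m)$ from~\eqref{eq:beta_tilde_bound} and multiplying through gives
\begin{align*}
R_T \defOtilde M_\lambda\,\sqrt{T}\,\Big(R\sqrt{\mtil(1+\ve_m)} + S\sqrt{\lambda}(1+\ve_m)\Big)\sqrt{(1+\ve_m)\,\mtil}
  \defOtilde M_\lambda\,(1+\ve_m)^{3/2}\,\mtil\,(R + S\sqrt{\lambda})\,\sqrt{T}~,
\end{align*}
since $\sqrt{\mtil(1+\ve_m)}\cdot\sqrt{(1+\ve_m)\mtil} = \mtil(1+\ve_m)$ and $(1+\ve_m)\cdot\sqrt{(1+\ve_m)} = (1+\ve_m)^{3/2}$, while $\sqrt{\mtil}\le\mtil$ on the $S\sqrt\lambda$ term. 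That is exactly the claimed bound.

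The main obstacle, and the step deserving the most care, is the clipping argument that reconciles the elliptical-potential term $\norm{\bx_t}_{\btilV_{t-1}^{-1}}^2$ with the $\min\{1,\cdot\}$ truncation required by Lemma~\ref{lem:compressed:potential}: unlike the non-sketched case, $\btilV_{t-1}$ is only $\lambda\bI$ plus a low-rank term, so $\norm{\bx_t}_{\btilV_{t-1}^{-1}}^2 \le \norm{\bx_t}^2/\lambda \le L^2/\lambda$, which can be larger than $1$. One handles this by splitting into rounds where $\norm{\bx_t}_{\btilV_{t-1}^{-1}} \le 1$ (use Lemma~\ref{lem:compressed:potential} directly) and rounds where it exceeds $1$ (bound $r_t$ crudely by $2LS$ and note $\norm{\bx_t}_{\btilV_{t-1}^{-1}}>1$ forces $\min\{1,\norm{\bx_t}_{\btilV_{t-1}^{-1}}^2\}=1$, so such rounds still contribute $O(1)$ each to the potential sum), or equivalently by carrying the $M_\lambda$ factor through from the start as in \cite{abbasi2011improved}'s proof of their Theorem~3. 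The rest is bookkeeping of $\widetilde{\scO}$-factors and invoking the union bound over $T$ rounds, which costs only the logarithmic factors already hidden in $\defOtilde$.
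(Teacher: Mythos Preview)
Your proposal is correct and follows essentially the same route as the paper: bound the instantaneous regret by $2\widetilde{\beta}_{t-1}(\delta)\norm{\bx_t}_{\btilV_{t-1}^{-1}}$ via optimism (the paper isolates this as a lemma), clip using $r_t\le 2LS$ together with $\widetilde{\beta}_t\ge S\sqrt{\lambda}$ to extract the $M_\lambda$ factor and reduce to $\min\{1,\norm{\bx_t}_{\btilV_{t-1}^{-1}}\}$, then apply Cauchy--Schwarz and Lemma~\ref{lem:compressed:potential}, and finally substitute~\eqref{eq:beta_tilde_bound}. The only cosmetic difference is that the paper handles the clipping in one algebraic chain ($\min\{LS,\widetilde{\beta}\,\|\bx\|\}\le \widetilde{\beta}\min\{L/\sqrt\lambda,\|\bx\|\}\le M_\lambda\,\widetilde{\beta}\min\{1,\|\bx\|\}$) rather than your case split, and Theorem~\ref{thm:confidence} already holds uniformly in $t$ via the self-normalized bound, so no explicit union bound over rounds is needed.
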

Similarly to \cite{abbasi2011improved}, we also prove a distribution dependent regret bound for SOFUL. This bound is polylogarithmic in time and depends on the smallest difference $\Delta$ between the rewards of the best and the second best action in the decision sets,
\[
  \Delta = \min_{t=1,\ldots,T} \max_{\bx\in D_t\setminus\{\bxstar_t\}} \pr{\bxstar_t - \bx}\tp \bwstar~.
\]
\begin{theorem}
\label{thm:regret_star}
The regret of SOFUL with FD-sketching of size $m$ w.h.p.\ satisfies
\[
  R_T \defOtilde M_{\lambda} \pr{1 + \ve_m}^3 \mtil^2  \pr{R^2 + S^2 \lambda \patch{\pr{1 + \frac{1}{\lambda}}^2}} \frac{(\ln T)^2}{\Delta}~.
\]
\end{theorem}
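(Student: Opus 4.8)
The plan is to run the distribution-dependent argument of \citep[Theorem~5]{abbasi2011improved}, but feeding in the sketched confidence ellipsoid of Theorem~\ref{thm:confidence} in place of the exact one and the sketched leverage-score bound of Lemma~\ref{lem:compressed:potential} in place of \citep[Lemma~11]{abbasi2011improved}; the one subtlety is to route the computation so that only a \emph{single} factor $M_\lambda$ survives. Throughout, work on the event $\sE$ that $\bwstar \in \widetilde{C}_t$ for all $t \le T$ --- this has probability at least $1-\delta$ because the self-normalized martingale inequality behind Theorem~\ref{thm:confidence} is uniform in $t$. On $\sE$, the optimism built into SOFUL together with $\bwstar \in \widetilde{C}_{t-1}$ yields, exactly as in the regret analysis underlying Theorem~\ref{thm:regret}, the per-round bound $r_t \defeq (\bxstar_t - \bx_t)\tp \bwstar \le 2\,\widetilde{\beta}_{t-1}(\delta)\,\|\bx_t\|_{\btilV_{t-1}^{-1}} \le 2\,\widetilde{\beta}_T(\delta)\,\|\bx_t\|_{\btilV_{t-1}^{-1}}$, using that $t \mapsto \widetilde{\beta}_t(\delta)$ is nondecreasing (both $t$ and $\bar{\rho}_t$ grow with $t$).

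Next I would use the gap. A positive gap $\Delta$ forces $r_t \in \{0\} \cup [\Delta,\infty)$ at every round; let $N_T$ denote the number of rounds with $r_t > 0$. For each such round the per-round bound gives $\|\bx_t\|_{\btilV_{t-1}^{-1}}^2 \ge \Delta^2/(4\widetilde{\beta}_T^2)$, hence $\min\{1, \|\bx_t\|_{\btilV_{t-1}^{-1}}^2\} \ge \Delta^2/(4\widetilde{\beta}_T^2)$ once $T$ is large enough that $2\widetilde{\beta}_T \ge \Delta$ (the remaining small-$T$ regime is already covered by the trivial bound $R_T \le 2LST$). Summing over the suboptimal rounds and invoking Lemma~\ref{lem:compressed:potential},
\[
  N_T \cdot \frac{\Delta^2}{4\widetilde{\beta}_T^2} \;\le\; \sum_{t=1}^{T} \min\{1, \|\bx_t\|_{\btilV_{t-1}^{-1}}^2\} \;\le\; 2(1+\ve_m)\Big(\mtil + m\ln\big(1 + \tfrac{T L^2}{m\lambda}\big)\Big),
\]
so $N_T \defOtilde \widetilde{\beta}_T^2\,(1+\ve_m)\,\mtil / \Delta^2$.

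Finally, on $\sE$, Cauchy--Schwarz over the $N_T$ suboptimal rounds gives $R_T = \sum_{t:\,r_t>0} r_t \le 2\widetilde{\beta}_T \sum_{t:\,r_t>0} \|\bx_t\|_{\btilV_{t-1}^{-1}} \le 2\widetilde{\beta}_T \sqrt{N_T}\,\big(\sum_{t:\,r_t>0}\|\bx_t\|_{\btilV_{t-1}^{-1}}^2\big)^{1/2}$. Because $\btilV_{t-1} \succeq \lambda \bI$ forces $\|\bx_t\|_{\btilV_{t-1}^{-1}}^2 \le L^2/\lambda$, one has $\|\bx_t\|_{\btilV_{t-1}^{-1}}^2 \le \scO(M_\lambda^2)\min\{1,\|\bx_t\|_{\btilV_{t-1}^{-1}}^2\}$, so Lemma~\ref{lem:compressed:potential} bounds the last factor by $\scO(M_\lambda)\sqrt{(1+\ve_m)\mtil}$ up to logarithmic terms --- the key point being that the $M_\lambda^2$ enters only under a square root, which is exactly what keeps the final exponent of $M_\lambda$ at one (the blunter route $R_T \le \Delta^{-1}\sum_t r_t^2$ would instead cost $M_\lambda^2$). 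Substituting the bound on $N_T$ together with $\widetilde{\beta}_T^2 \defOtilde R^2 \mtil (1+\ve_m) + S^2\lambda(1+\ve_m)^2$ from~\eqref{eq:beta_tilde_bound} gives
\[
  R_T \;\defOtilde\; \frac{M_\lambda\,\widetilde{\beta}_T^2\,(1+\ve_m)\,\mtil}{\Delta} \;\defOtilde\; M_\lambda\,(1+\ve_m)^3\,\mtil^2\,(R^2 + S^2\lambda)\,\frac{(\ln T)^2}{\Delta},
\]
where the $(\ln T)^2$ is the polylog absorbed by $\defOtilde$: one factor $\ln T$ from $\widetilde{\beta}_T^2$ (via $m\ln(1 + TL^2/(m\lambda))$ and $\ln\tfrac{1}{\delta}$ with $\delta$ of order $1/T$) and one from the leverage-score sum.

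I expect the only genuine obstacle to be bookkeeping rather than any new idea: tracking the powers of $1+\ve_m$ and, above all, of $M_\lambda$ --- for which the ``count the suboptimal rounds, then Cauchy--Schwarz'' routing above is what I would rely on to land on $M_\lambda^1$ --- together with cleanly dispatching the $2\widetilde{\beta}_T < \Delta$ / small-$T$ corner case. Everything else is assembled from Theorem~\ref{thm:confidence} and Lemma~\ref{lem:compressed:potential}, both already available.
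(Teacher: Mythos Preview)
Your proposal is correct, but it differs from the paper's own argument in an interesting way. The paper takes precisely what you call ``the blunter route'': it writes $R_T \le \Delta^{-1}\sum_t r_t^2$, bounds each $r_t^2$ by $\min\{4L^2S^2,\,4\tilde{\beta}_{t-1}^2\|\bx_t\|_{\btilV_{t-1}^{-1}}^2\}$ via Lemma~\ref{lem:inst_regret}, pulls out $\max_t\tilde{\beta}_t^2$ and a factor $\max\{1,2L^2/\lambda\}$, and applies Lemma~\ref{lem:potential}. This yields $M_\lambda^2$ in the final bound (indeed the paper's own proof ends with $M_\lambda^2/\Delta$, so the single $M_\lambda$ in the theorem statement appears to be a typo). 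Your counting-then-Cauchy--Schwarz route is a bit longer but genuinely buys you the stated exponent $M_\lambda^1$: by first bounding $N_T$ without any $M_\lambda$ cost and only picking up $M_\lambda$ once, under the square root of $\sum_{t:r_t>0}\|\bx_t\|_{\btilV_{t-1}^{-1}}^2$, you avoid the squaring. So your approach is not merely different bookkeeping --- it actually delivers the theorem as written, whereas the paper's simpler argument overshoots by one power of $M_\lambda$.
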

Proofs of the regret bounds appear in the supplementary material (Section~\ref{sec:regret_proofs}).

\section{Sketched linear TS}
\label{sec:thompson}
In this section we introduce a variant of linear \acs{TS} (Algorithm~\ref{alg:ts}) based on FD-sketching. Similarly to SOFUL, sketched linear \acs{TS} (see Algorithm~\ref{alg:sketched_ts}) uses the FD-sketched approximation $\btilV_{t-1}$ of the correlation matrix $\bV_{t-1}$ in order to select the action $\bx_t$.
\begin{algorithm}
\caption{(Sketched linear \acs{TS})}
\label{alg:sketched_ts}
\begin{algorithmic}[1]
\Require{$\delta,\lambda > 0, m \in \{1,\ldots, d-1\}$, $\sD\ts$ (\acs{TS}-sampling distribution)}
\State $\btilw_0 = \bzero, \btilV_0^{-1} = \frac{1}{\lambda} \bI_{d \times d}, \bS_0 = \bzero_{m \times d}, \delta' = \delta / (4 T)$
\For{$t = 1,2,\ldots$}
\State Get decision set $D_t$
\State Sample $\boldeta_t \sim \sD\ts$
\State Play ${\dt \bx_t \gets \argmax_{\bx \in D_t} \bx\tp \Big(\btilw_{t-1} + \tilde{\beta}_t(\delta') \btilV_{t-1}^{-\frac{1}{2}} \boldeta_t\Big) }$
\State Observe reward $Y_t$
\State Compute $\bS_t, \bH_t$ using Algorithm~\ref{alg:fd_sketching} given $\bS_{t-1}, X_t$
\State $\btilV_t^{-1} \gets \frac{1}{\lambda} \pr{\bI_{d \times d} - \bS_t\tp \bH_t \bS_t}$
\State Compute $\btilw_t$ using~\eqref{eq:btilw}
\EndFor
\end{algorithmic}
\end{algorithm}
Note that, in this case, we need both $\btilV_{t-1}^{-1}$ and $\btilV_{t-1}^{-\frac{1}{2}}$ to compute $\bx_t$. Using the generalized Woodbury identity (Corollary~\ref{cor:wood-use} in Section~\ref{sec:sketch_tools} for proofs), we can write
\[
	\btilV_t^{-\frac{1}{2}}
=
	\bS_t^{'\top} \pr{\bS_t' \bS_t^{'\top}}^{-1} \pr{\frac{\lambda}{2} \bI + \bS_t' \bS_t^{'\top}}^{-\frac{1}{2}} \bS_t'
\]
where
\[
    \bS_t' = \pr{\bSigma_t + \pr{\frac{\lambda}{2} - \rho_t} \bI_{m\times m}}^{\frac{1}{2}} \bU_t~.
\]
Note that $\btilV_t^{-\frac{1}{2}}$ can still be computed in time $\scO\big(md + m^2\big)$ because $\bS_t' \bS_t^{'\top}$ is a diagonal matrix.

The confidence ellipsoid stated in Theorem~\ref{thm:confidence} applies to any contextual bandit policy, and so also to the $\btilw_t$ constructed by sketched linear TS. However, as shown by \cite{abeille2017linear}, the analysis needs a confidence ellipsoid larger by a factor equal to the bound on $\norm{\boldeta}$ appearing in the concentration property of the TS-sampling distribution. More precisely, the \textsl{TS-confidence ellipsoid} is defined by 
\[
    \tilde{C}_t\ts \equiv \cbr{\bw \in \reals^d ~:~ \|\bw - \btilw_t\|_{\btilV_t} \leq \tilde{\gamma}_t\big(\delta/(4T)\big)}
\]
where
\begin{align}
\label{eq:gamma_tilde_bound}
  \tilde{\gamma}_t(\delta) = \tilde{\beta}_t(\delta) \sqrt{c d \ln\pr{\frac{c' d}{\delta}}}~.
\end{align}
The quantity $\tilde{\beta}_t(\delta)$ is defined in~\eqref{eq:beta_tilde} and $c,c'$ are the concentration constants of the TS-sampling distribution (Definition~\ref{def:ts_dist}). We are now ready to prove a bound on the regret of linear TS with FD-sketching.
\begin{theorem}
\label{thm:regret_ts}
The regret of FD-sketched linear TS, run with sketch size $m$ w.h.p.\ satisfies
\[
  R_T \defOtilde M_{\lambda} \pr{1 + \ve_m}^{\frac{3}{2}} \mtil \pr{ R + S \sqrt{\lambda} \patch{ \pr{1 + \frac{1}{\lambda}} } } \sqrt{dT}~.
\]
\end{theorem}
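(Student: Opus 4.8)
The plan is to carry the regret analysis of linear TS by \cite{abeille2017linear} over to the sketched setting, replacing throughout the correlation matrix $\bV_{t-1}$, the RLS estimate $\bhatw_{t-1}$, the confidence radius $\beta_t$, and the potential bound of \cite[Lemma~11]{abbasi2011improved} by their sketched counterparts $\btilV_{t-1}$, $\btilw_{t-1}$, $\tilde\beta_t(\delta')$ (Theorem~\ref{thm:confidence}), and Lemma~\ref{lem:compressed:potential}. Throughout, let $\delta' = \delta/(4T)$ as in Algorithm~\ref{alg:sketched_ts}, write $\btilw_t\ts \defeq \btilw_{t-1} + \tilde\beta_t(\delta')\,\btilV_{t-1}^{-\frac{1}{2}}\boldeta_t$ for the parameter sampled at round $t$, put $J_t(\bw) = \max_{\bx \in D_t} \bx\tp\bw$, and $\tilde\gamma_t(\delta') = \tilde\beta_t(\delta')\sqrt{c d \ln(c' d/\delta')}$ as in~\eqref{eq:gamma_tilde_bound}.

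First I would set up two high-probability events. By Theorem~\ref{thm:confidence} applied at level $\delta'$ together with a union bound over $t \le T$, the event $\tilde E^{\mathrm{RLS}} = \{\bwstar \in \tilde C_t(\delta')\ \text{for all}\ t \le T\}$ has probability at least $1-\delta/4$. Since $\btilw_t\ts - \btilw_{t-1} = \tilde\beta_t(\delta')\btilV_{t-1}^{-\frac{1}{2}}\boldeta_t$ and $\norm{\btilV_{t-1}^{-\frac{1}{2}}\boldeta_t}_{\btilV_{t-1}} = \norm{\boldeta_t}$, the concentration property of $\sD\ts$ (Definition~\ref{def:ts_dist}) and a union bound give that $\tilde E^{\mathrm{TS}} = \{\norm{\btilw_t\ts - \btilw_{t-1}}_{\btilV_{t-1}} \le \tilde\gamma_t(\delta')\ \text{for all}\ t \le T\}$ has probability at least $1-\delta/4$. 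On $\tilde E^{\mathrm{RLS}} \cap \tilde E^{\mathrm{TS}}$ both $\bwstar$ and $\btilw_t\ts$ lie in the TS-confidence ellipsoid $\tilde C_{t-1}\ts$, using $\tilde\gamma_t(\delta') \ge \tilde\beta_t(\delta') \ge \tilde\beta_{t-1}(\delta')$ (recall $\tilde\beta_t$ is nondecreasing in $t$, since $\bar\rho_t$ and the log-term in~\eqref{eq:beta_tilde} are).

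Next, decompose the instantaneous regret. Because $\bx_t$ maximizes $\bx\tp\btilw_t\ts$ over $D_t$ we have $J_t(\btilw_t\ts) = \bx_t\tp\btilw_t\ts$, so $r_t = \bxstartp_t\bwstar - \bx_t\tp\bwstar = \big(J_t(\bwstar) - J_t(\btilw_t\ts)\big) + \bx_t\tp(\btilw_t\ts - \bwstar)$. On the good events the second term (estimation error) is at most $2\tilde\gamma_{t-1}(\delta')\norm{\bx_t}_{\btilV_{t-1}^{-1}}$ by Cauchy--Schwarz and the triangle inequality in $\norm{\cdot}_{\btilV_{t-1}}$. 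The first term (optimism gap) is the TS-specific part, handled as in \cite{abeille2017linear}: conditionally on $\sF_{t-1}$, with probability at least $p$ the sample $\btilw_t\ts$ is optimistic, i.e.\ $J_t(\btilw_t\ts) \ge J_t(\bwstar)$ --- indeed, with $\bu = \btilV_{t-1}^{-\frac{1}{2}}\bxstar_t / \norm{\bxstar_t}_{\btilV_{t-1}^{-1}}$ a unit vector, $\langle\bxstar_t, \btilw_t\ts\rangle = \langle\bxstar_t,\btilw_{t-1}\rangle + \tilde\beta_t(\delta')\norm{\bxstar_t}_{\btilV_{t-1}^{-1}}\langle\bu,\boldeta_t\rangle$, anti-concentration gives $\langle\bu,\boldeta_t\rangle \ge 1$ with probability $\ge p$, and on $\tilde E^{\mathrm{RLS}}$ one has $J_t(\bwstar) = \langle\bxstar_t,\bwstar\rangle \le \langle\bxstar_t,\btilw_{t-1}\rangle + \tilde\beta_{t-1}(\delta')\norm{\bxstar_t}_{\btilV_{t-1}^{-1}} \le \langle\bxstar_t,\btilw_t\ts\rangle \le J_t(\btilw_t\ts)$. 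Combining this with the convexity of $J_t$ and the $\tilde\gamma_{t-1}$-concentration of $\btilw_t\ts$, the optimism-gap lemma of \cite{abeille2017linear} yields, on the good events, $\E\big[J_t(\bwstar) - J_t(\btilw_t\ts) \mid \sF_{t-1}\big] \le \tfrac{C\,\tilde\gamma_{t-1}(\delta')}{p}\,\E\big[\norm{\bx_t}_{\btilV_{t-1}^{-1}} \mid \sF_{t-1}\big]$ for a universal constant $C$, up to lower-order additive terms.

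Finally I would assemble the bound. Taking conditional expectations, summing over $t$, and peeling off the bounded martingale-difference sequences that arise (each $r_t = O(LS)$ on the good events, and the gap between $\norm{\bx_t}_{\btilV_{t-1}^{-1}}$ and its conditional expectation, both handled by Azuma--Hoeffding and contributing only an $O(LS\sqrt{T\ln(1/\delta)})$ term), the regret is bounded w.h.p.\ by $\tfrac{C'}{p}\,\tilde\gamma_T(\delta')\sum_{t=1}^T \norm{\bx_t}_{\btilV_{t-1}^{-1}}$. Cauchy--Schwarz gives $\sum_t \norm{\bx_t}_{\btilV_{t-1}^{-1}} \le \sqrt{T \sum_t \norm{\bx_t}_{\btilV_{t-1}^{-1}}^2}$; the clipping inequality $\norm{\bx_t}_{\btilV_{t-1}^{-1}}^2 \le \max\{1, L^2/\lambda\}\min\{1, \norm{\bx_t}_{\btilV_{t-1}^{-1}}^2\}$ produces the factor $M_{\lambda}$, and Lemma~\ref{lem:compressed:potential} gives $\sum_t \min\{1, \norm{\bx_t}_{\btilV_{t-1}^{-1}}^2\} \defOtilde (1 + \ve_m)\mtil$. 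Hence $R_T \defOtilde M_{\lambda}\,\tilde\gamma_T(\delta')\sqrt{T(1+\ve_m)\mtil}$. Substituting $\tilde\gamma_T(\delta') \defOtilde \sqrt{d}\,\tilde\beta_T(\delta') \defOtilde \sqrt{d}\big(R\sqrt{(1+\ve_m)\mtil} + S\sqrt\lambda(1+\ve_m)\big)$ from Theorem~\ref{thm:confidence}, and using $\mtil \ge 1$ together with $1 + \ve_m \le (1+\ve_m)^{\frac{3}{2}}$ to collapse the two resulting terms, gives $R_T \defOtilde M_{\lambda}(1+\ve_m)^{\frac{3}{2}}\mtil(R + S\sqrt\lambda)\sqrt{dT}$, as claimed. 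The main obstacle is not conceptual but careful bookkeeping: verifying that every step of the Abeille--Lazaric optimism/anti-concentration argument survives the substitution $\bV_{t-1}\to\btilV_{t-1}$ --- which it does, because each step uses only positive-definiteness of the matrix (true since $\lambda>0$) and the intrinsic concentration/anti-concentration of $\sD\ts$ --- and then threading the sketched radius $\tilde\beta_t$ and the sketched potential bound of Lemma~\ref{lem:compressed:potential} all the way to the final rate; these last two are the only genuinely new inputs beyond \cite{abeille2017linear}.
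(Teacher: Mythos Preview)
Your proposal is correct and follows essentially the same route as the paper: the regret is split into an optimism gap $J_t(\bwstar) - J_t(\btilw_t\ts)$ and an estimation term $\bx_t\tp(\btilw_t\ts - \bwstar)$, the latter is handled by Cauchy--Schwarz on the good events, and the former via the anti-concentration/convexity argument of \cite{abeille2017linear} combined with an Azuma--Hoeffding step to replace the conditional expectation of the leverage score by $\norm{\bx_t}_{\btilV_{t-1}^{-1}}$, after which Lemma~\ref{lem:compressed:potential} and the sketched radius~\eqref{eq:beta_tilde_bound} yield the final rate. The only minor discrepancies are bookkeeping (the paper states the optimism probability as $p/2$ rather than $p$, and applies Azuma only to the leverage-score martingale, not to $r_t$ itself), but your outline is otherwise a faithful sketch of the paper's proof.
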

The proof of Theorem~\ref{thm:regret_ts} closely follows the analysis of~\cite{abeille2017linear} with some key modifications due to the sketching operations. For completeness, we include the proof in Section~\ref{sec:thompson_proof}.

\section{Proofs}
\label{sec:proofs}
We start with the proof of a simple lemma that is used in the definition of OFUL (see Algorithm~\ref{alg:oful}).
\begin{lemma}
\label{lem:simple}
For any positive definite $d\times d$ matrix $\bA$, for any $\bw_0, \bx\in\R^d$ and $c > 0$, the solution of
\begin{align*}
	\max_{\bw\in\R^d} & \quad\bw^{\top}\bx
\\
	\text{s.t.} & \quad \norm{\bw-\bw_0}_{\bA} \le c
\end{align*}
has value 
$
	\bw_0^{\top}\bx + c\norm{\bx}_{\bA^{-1}}
$.
\end{lemma}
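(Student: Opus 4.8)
The plan is to solve the maximization directly by a change of variables that turns the $\bA$-norm constraint into a standard Euclidean ball constraint, and then apply Cauchy–Schwarz. First I would substitute $\bv = \bA^{1/2}(\bw - \bw_0)$, so that the constraint $\norm{\bw - \bw_0}_{\bA} \le c$ becomes simply $\norm{\bv} \le c$, and the objective $\bw^\top\bx = \bw_0^\top\bx + (\bw - \bw_0)^\top\bx$ becomes $\bw_0^\top\bx + \bv^\top\bA^{-1/2}\bx$ (using that $\bA^{1/2}$ is symmetric since $\bA$ is positive definite). Thus the problem reduces to maximizing $\bv^\top\bA^{-1/2}\bx$ over the ball $\norm{\bv} \le c$.

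Next I would invoke the elementary fact that $\max_{\norm{\bv} \le c} \bv^\top\bb = c\norm{\bb}$, with the maximum attained at $\bv = c\,\bb/\norm{\bb}$ (assuming $\bb \neq \bzero$; if $\bb = \bzero$ the value is trivially $0 = c\norm{\bb}$). Here $\bb = \bA^{-1/2}\bx$, so $\norm{\bb} = \sqrt{\bx^\top\bA^{-1/2}\bA^{-1/2}\bx} = \sqrt{\bx^\top\bA^{-1}\bx} = \norm{\bx}_{\bA^{-1}}$. Putting the pieces together, the optimal value is $\bw_0^\top\bx + c\norm{\bx}_{\bA^{-1}}$, as claimed. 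Translating back, the maximizer is $\bw = \bw_0 + c\,\bA^{-1}\bx/\norm{\bx}_{\bA^{-1}}$.

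There is essentially no obstacle here: the only things to be slightly careful about are that $\bA^{1/2}$ is well-defined and invertible (which holds because $\bA$ is positive definite), and the degenerate case $\bx = \bzero$ (equivalently $\bb = \bzero$), which is handled trivially since both sides equal $\bw_0^\top\bx$. An alternative one-line argument avoids the substitution entirely: for any feasible $\bw$, $\bw^\top\bx - \bw_0^\top\bx = \inn{\bw - \bw_0}{\bx} = \inn{\bw - \bw_0}{\bA \cdot \bA^{-1}\bx}_{} \le \norm{\bw - \bw_0}_{\bA}\,\norm{\bA^{-1}\bx}_{\bA} \le c\,\norm{\bx}_{\bA^{-1}}$ by the Cauchy–Schwarz inequality for the inner product $\inn{\cdot}{\cdot}_{\bA}$, with equality achieved by the stated choice of $\bw$; I would likely present this version for brevity.
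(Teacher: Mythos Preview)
Your proposal is correct and follows essentially the same approach as the paper: the paper also performs the substitution $\bu = \bA^{1/2}(\bw - \bw_0)$ to reduce the constraint to a Euclidean ball, then identifies the maximizer as $\bu = c\,\bA^{-1/2}\bx/\norm{\bx}_{\bA^{-1}}$. Your additional remarks on the degenerate case $\bx = \bzero$ and the alternative one-line Cauchy--Schwarz argument are nice touches not present in the paper's proof.
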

\begin{proof}
Let $\bu = \bA^{\frac{1}{2}}(\bw-\bw_0)$ so that $\bw = \bA^{-\frac{1}{2}}\bu + \bw_0$. Then the optimization problem can be equivalently rewritten as
\begin{align*}
	\max_{\bw\in\R^d} & \quad\bu^{\top}\bA^{-\frac{1}{2}}\bx + \bw_0^{\top}\bx
\\
	\text{s.t.} & \quad \norm{\bu} \le c
\end{align*}
Then the solution is clearly $\bu = c\,\bA^{-\frac{1}{2}}\bx\big/\norm{\bx}_{\bA^{-1}}$, which achieves the claimed value.
\end{proof}
Our regret analyses follow \citep{abbasi2011improved,abeille2017linear} and related works. However, due to the sketching of the correlation matrix, some key components of the proofs now depend on the spectral error~\eqref{eq:eps}. In Section~\ref{sec:sketch_tools}, we present tools specific to the analysis of linear bandits with \ac{FD}-sketching. These tools are used to bound the instantaneous regret $\big(\bxstar - \bx_t\big)\tp \bwstar$ in terms of the norm $\|\bwstar - \btilw_t\|_{\btilV_{t-1}}$ and the ridge leverage scores $\sum_{t=1}^T \|\bx_t\|_{\btilV_{t-1}^{-1}}^2$. Armed with these results, we then prove our regret bounds in Sections~\ref{sec:regret_proofs} and~\ref{sec:thompson_proof}.

Next, we recall some standard tools from the analysis of linear bandits. All results in Section~\ref{sec:std_tools} are by~\cite{abbasi2011improved}.
\subsection{Tools from the analysis of linear contextual bandits}
Recall that $\bV_t = \sum_{s=1}^t \bx_s \bx_s\tp + \lambda \bI$ with $\lambda > 0$.
\label{sec:std_tools}
\begin{lemma}[Determinant-trace inequality]
\label{lem:det_trace}
\[
	\ln \det\pr{\bV_t} \leq d \ln \pr{\lambda + \frac{t L^2}{d}}~.
\]
\end{lemma}
\begin{lemma}[Ridge leverage scores]
\label{lem:potential_d}
\begin{equation}
\label{eq:elliptic_potential_1}
\sum_{t=1}^T \min\cbr{1, \|\bx_t\|_{\bV_{t-1}^{-1}}^2} \leq 2 \ln\pr{ \frac{\det\pr{\bV_T}}{\lambda \bI} }~.
\end{equation}
For $\lambda \geq \max\cbr{1, L^2}$, we also have that
\begin{equation}
\label{eq:elliptic_potential_2}
\sum_{t=1}^T \|\bx_t\|_{\bV_{t-1}^{-1}}^2 \leq 2 d \ln\pr{1 + \frac{T L^2}{\lambda d} }~.
\end{equation}
\end{lemma}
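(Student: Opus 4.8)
The plan is to turn the left-hand sum into a telescoping product of determinants, pass to logarithms through an elementary scalar inequality, and then --- for the second bound --- feed in the determinant--trace inequality (Lemma~\ref{lem:det_trace}).

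First I would use the rank-one recursion $\bV_t = \bV_{t-1} + \bx_t\bx_t\tp$ together with the matrix determinant lemma to get $\det(\bV_t) = \det(\bV_{t-1})\bigl(1 + \norm{\bx_t}_{\bV_{t-1}^{-1}}^2\bigr)$. Iterating down to $\bV_0 = \lambda\bI$, so that $\det(\bV_0) = \lambda^d$, yields $\det(\bV_T) = \lambda^d \prod_{t=1}^T \bigl(1 + \norm{\bx_t}_{\bV_{t-1}^{-1}}^2\bigr)$, equivalently $\sum_{t=1}^T \ln\bigl(1 + \norm{\bx_t}_{\bV_{t-1}^{-1}}^2\bigr) = \ln\bigl(\det(\bV_T)/\lambda^d\bigr)$. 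This single identity drives both parts of the statement (here the denominator written in~\eqref{eq:elliptic_potential_1} should be read as $\det(\lambda\bI) = \lambda^d$).

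For~\eqref{eq:elliptic_potential_1} I would then invoke the elementary inequality $\min\{1,u\} \le 2\ln(1+u)$, valid for all $u \ge 0$: on $[0,1]$ the gap $2\ln(1+u) - u$ vanishes at $0$ and is nondecreasing there (its derivative $2/(1+u)-1$ being nonnegative on $[0,1]$), while on $[1,\infty)$ the right-hand side is already at least $2\ln 2 > 1$. Summing this with $u = \norm{\bx_t}_{\bV_{t-1}^{-1}}^2$ and substituting the determinant identity gives the claim. For~\eqref{eq:elliptic_potential_2}, the hypothesis $\lambda \ge \max\{1, L^2\}$ is exactly what makes the truncation vacuous: since $\bV_{t-1} \succeq \lambda\bI$ we have $\norm{\bx_t}_{\bV_{t-1}^{-1}}^2 \le \norm{\bx_t}^2/\lambda \le L^2/\lambda \le 1$, hence $\norm{\bx_t}_{\bV_{t-1}^{-1}}^2 = \min\{1, \norm{\bx_t}_{\bV_{t-1}^{-1}}^2\}$ and the bound just proved applies verbatim. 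To finish, Lemma~\ref{lem:det_trace} gives $\det(\bV_T) \le (\lambda + TL^2/d)^d$, so $2\ln\bigl(\det(\bV_T)/\lambda^d\bigr) \le 2d\ln\bigl(1 + TL^2/(\lambda d)\bigr)$.

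I do not expect a genuine obstacle here. The only points needing care are verifying the scalar inequality $\min\{1,u\} \le 2\ln(1+u)$ with the correct constant $2$ over the whole ray $u \ge 0$, and being explicit that it is the stronger requirement $\lambda \ge \max\{1, L^2\}$ (rather than merely $\lambda > 0$ as in~\eqref{eq:elliptic_potential_1}) that licenses dropping the $\min\{1,\cdot\}$ truncation in the second inequality.
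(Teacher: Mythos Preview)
Your argument is correct and is exactly the standard proof of this ``elliptic potential'' lemma. The paper does not actually supply its own proof here---the lemma is quoted as a known tool from \cite{abbasi2011improved}---and the route you outline (matrix determinant lemma, the scalar bound $\min\{1,u\}\le 2\ln(1+u)$, then the determinant--trace inequality of Lemma~\ref{lem:det_trace}) is precisely the one used in that reference.
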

\begin{theorem}[Self-normalized bound for vector-valued martingales]
\label{thm:martingales}
Let
\[
	S_t = \sum_{s=1}^t \eta_s \bx_s \qquad t \ge 1
\]
where $\eta_1,\eta_2,\ldots$ is a conditionally $R$-subgaussian real-valued stochastic process and $\bx_1,\bx_2,\dots$ is any $\reals^d$-valued stochastic process such that $\bx_t$ is measurable with respect to the $\sigma$-algebra generated by $\eta_1,\dots,\eta_{t-1}$. Then, for any $\delta > 0$, with probability at least $1-\delta$,
$
	\norm{S_t}_{\bV_t^{-1}}^2 \leq B_t(\delta)
$
for all $t \ge 0$, where
\begin{equation}
\label{eq:B_t}
	B_t(\delta) = 2 R^2 \ln\pr{\frac{1}{\delta} \det\pr{\bV_t}^{\frac{1}{2}} \det\pr{\lambda \bI}^{-\frac{1}{2}} }~.
\end{equation}
\end{theorem}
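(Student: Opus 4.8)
The plan is to prove this self-normalized bound by the \emph{method of mixtures} applied to an exponential supermartingale, the standard route to such tail inequalities. Let $\sF_t = \sigma(\eta_1,\dots,\eta_t)$, so that $\bx_t$ is $\sF_{t-1}$-measurable. First I would fix an auxiliary vector $\bmu\in\R^d$ and define the process
\[
  M_t^{\bmu} = \exp\pr{ \frac{1}{R}\,\bmu\tp S_t - \frac{1}{2}\sum_{s=1}^t \pr{\bmu\tp \bx_s}^2 }~,\qquad M_0^{\bmu}=1~.
\]
The first step is to show $(M_t^{\bmu})_{t\ge 0}$ is a nonnegative supermartingale with $\E[M_t^{\bmu}]\le 1$. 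Writing the one-step ratio $M_t^{\bmu}/M_{t-1}^{\bmu} = \exp\pr{\frac1R\eta_t\bmu\tp\bx_t - \frac12(\bmu\tp\bx_t)^2}$ and using that $\bx_t$ is $\sF_{t-1}$-measurable, the conditional $R$-subgaussianity applied with the dummy scalar set to $\bmu\tp\bx_t/R$ gives $\E\br{\exp\pr{\tfrac1R\eta_t\bmu\tp\bx_t}\mid\sF_{t-1}}\le\exp\pr{\tfrac12(\bmu\tp\bx_t)^2}$, whence $\E\br{M_t^{\bmu}\mid\sF_{t-1}}\le M_{t-1}^{\bmu}$. Taking expectations and iterating yields $\E[M_t^{\bmu}]\le1$.

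Next I would average over $\bmu$ against a Gaussian prior $\bmu\sim\mathcal N(\bzero,\lambda^{-1}\bI)$ with density $h$, defining $M_t = \int_{\R^d} M_t^{\bmu}\,h(\bmu)\,d\bmu = \E_{\bmu}\br{M_t^{\bmu}}$. Since the integrand is nonnegative, conditional Tonelli lets me interchange $\E[\cdot\mid\sF_{t-1}]$ with the integral over $\bmu$, so $(M_t)_{t\ge0}$ is again a nonnegative supermartingale with $\E[M_0]=1$ and hence $\E[M_t]\le1$. The mixture can be evaluated in closed form: because $\sum_{s=1}^t(\bmu\tp\bx_s)^2 = \bmu\tp(\bV_t-\lambda\bI)\bmu$, the exponent is quadratic in $\bmu$, and completing the square (equivalently, the standard Gaussian integral with precision matrix $(\bV_t-\lambda\bI)+\lambda\bI=\bV_t$) gives
\[
  M_t = \pr{\frac{\det(\lambda\bI)}{\det\bV_t}}^{\frac12}\exp\pr{\frac{1}{2R^2}\norm{S_t}_{\bV_t^{-1}}^2}~.
\]

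Finally I would invoke Ville's maximal inequality for nonnegative supermartingales, $\P\pr{\sup_{t\ge0}M_t\ge 1/\delta}\le \delta\,\E[M_0]\le\delta$. On the complementary event, $M_t<1/\delta$ holds simultaneously for all $t\ge0$; taking logarithms of the closed form above and rearranging turns this into $\norm{S_t}_{\bV_t^{-1}}^2 < 2R^2\ln\pr{\tfrac1\delta\det(\bV_t)^{\frac12}\det(\lambda\bI)^{-\frac12}} = B_t(\delta)$, which is exactly the claim. The main obstacle is not the algebra but the measure-theoretic bookkeeping behind the \emph{uniform-in-$t$} statement: one must justify that the mixture $M_t$ is a bona fide supermartingale (integrability and the Tonelli interchange) and that the maximal inequality applies over the whole time axis rather than a fixed horizon. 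Following \cite{abbasi2011improved}, the clean fix is to control $M_\infty=\lim_t M_t$ via the supermartingale convergence theorem (or equivalently to work with a stopped version $M_\tau$), after which the "for all $t\ge0$" conclusion follows without extra union bounds.
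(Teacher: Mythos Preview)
Your proposal is correct and follows the standard \emph{method of mixtures} argument of \cite{abbasi2011improved}. Note, however, that the present paper does not actually prove Theorem~\ref{thm:martingales}: it is stated in Section~\ref{sec:std_tools} as a known tool, with the remark that all results there are due to \cite{abbasi2011improved}. So there is no ``paper's own proof'' to compare against; your sketch reproduces precisely the argument in that reference (exponential supermartingale for fixed $\bmu$, Gaussian mixture, closed-form evaluation, Ville's maximal inequality), and the measure-theoretic caveats you flag about the uniform-in-$t$ statement are the right ones.
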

Theorem~\ref{thm:martingales} is key to showing that $\bwstar$ lies within the confidence ellipsoid centered at the estimate $\btilw_t$ at time step $t$, this irrespective of the process that selected the $\bx_s$ used to build $\btilw_t$.

\subsection{Linear algebra and sketching tools}
\label{sec:sketch_tools}
We start by introducting a basic relationship between the correlation matrix of actions $\bX_s\tp \bX_s$ and its \ac{FD}-sketched estimate $\bS_t\tp \bS_t$ with sketch size $m \le d$. Recall that $\rho_t$ is the smallest eigenvalue of $\bS_t\tp \bS_t$ for $t=1,\ldots,T$ and $\bar{\rho}_t = \rho_1 + \cdots + \rho_t$. Recall also that $\btilV = \bS_t\tp \bS_t + \lambda \bI$.
\patch{ 
\begin{proposition}
\label{prop:fd_XX_SS_rho}
Let $\bS_s$ be the matrix computed by FD-sketching at time step $s=1,\ldots,t$ (where $\bS_0 = \bzero$). Then
\begin{align*}
    \bX_s\tp\bX_s = \bS_s\tp \bS_s + \sum_{k=1}^s \rho_k \bU_k \bU_k\tp
\end{align*}
where $\bU_k \in \reals^{d \times m}$ is a matrix of eigenvectors of $\bS_{k-1}\tp \bS_{k-1} + \bx_k \bx_k\tp$.
Moreover,
\begin{align*}
    \bS_s\tp \bS_s \preceq \bX_s\tp\bX_s \preceq \bS_s\tp \bS_s + \bar{\rho}_s \bI
  \end{align*}
\end{proposition}
\begin{proof}
By construction,
$
\bS_{s-1}\tp \bS_{s-1} + \bx_s \bx_s\tp = \bU_s \bSigma_s \bU_s\tp
$
where $\bS_s = \pr{\bSigma_s - \rho_s \bI_{m \times m}}^{\frac{1}{2}} \bU_s$.
Thus,
\begin{align*}
\bS_s\tp \bS_s = \bU_s \bSigma_s \bU_s\tp - \rho_s \bI = \bS_{s-1}\tp \bS_{s-1} + \bx_s \bx_s\tp - \rho_s \bU_s \bU_s\tp~.
\end{align*}
Summing both sides of the above over $s=1,\ldots,t$ we get
\begin{align*}
\bS_t\tp \bS_t = \sum_{s=1}^t \bx_s \bx_s\tp - \sum_{s=1}^t \rho_s \bU_s \bU_s\tp
\end{align*}
which implies the desired result.
\end{proof}
} 
In the following lemma, we show a sketch-specific version of the determinant-trace inequality (Lemma~\ref{lem:det_trace}). When the spectral error is small, the right-hand side of the inequality depends on the sketch size $m$ rather than the ambient dimension $d$.
\begin{lemma}
\label{lem:det_trace_m}
\[
	\ln\pr{\frac{\det(\bV_t)}{\det(\lambda \bI)}}
\le
	d \ln\pr{1 + \frac{\bar{\rho}}{\lambda}} + m \ln\pr{1 + \frac{t L^2}{m \lambda}}~.
\]
\end{lemma}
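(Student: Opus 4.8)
The goal is to bound $\ln\det(\bV_t) - \ln\det(\lambda\bI)$ by splitting the spectrum of $\bV_t$ into a ``sketched part'' that contributes an $m\ln(1+tL^2/(m\lambda))$ term and a ``tail part'' that contributes the $d\ln(1+\bar\rho/\lambda)$ term. The key structural fact available to us is Proposition~\ref{prop:fd_XX_SS_rho}, which says $\bX_t\tp\bX_t = \bS_t\tp\bS_t + \bar\rho_t\,\bI$, hence
\[
\bV_t = \bX_t\tp\bX_t + \lambda\bI = \bS_t\tp\bS_t + (\lambda+\bar\rho_t)\bI~.
\]
So I would first write $\det(\bV_t) = \det\big(\bS_t\tp\bS_t + (\lambda+\bar\rho_t)\bI\big)$ and factor out $(\lambda+\bar\rho_t)$ from the ``tail'' direction. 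Concretely, since $\bS_t\tp\bS_t$ has rank at most $m$ (in fact at most $m-1$, but $m$ suffices), at least $d-m$ of its eigenvalues are zero; on those directions $\bV_t$ acts as $(\lambda+\bar\rho_t)\bI$. Writing $\sigma_1\ge\cdots\ge\sigma_m\ge 0$ for the top eigenvalues of $\bS_t\tp\bS_t$ (with the rest zero),
\[
\ln\frac{\det(\bV_t)}{\det(\lambda\bI)}
= \sum_{i=1}^{m}\ln\frac{\sigma_i+\lambda+\bar\rho_t}{\lambda}
+ (d-m)\ln\frac{\lambda+\bar\rho_t}{\lambda}~.
\]

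**Splitting the two terms.** For the second term, $(d-m)\ln(1+\bar\rho_t/\lambda)\le d\ln(1+\bar\rho_t/\lambda)$, which is exactly the first term in the claimed bound. For the first sum, I would bound each summand as
\[
\ln\frac{\sigma_i+\lambda+\bar\rho_t}{\lambda}
= \ln\Big(1+\frac{\bar\rho_t}{\lambda}\Big) + \ln\Big(1+\frac{\sigma_i}{\lambda+\bar\rho_t}\Big)
\le \ln\Big(1+\frac{\bar\rho_t}{\lambda}\Big) + \ln\Big(1+\frac{\sigma_i}{\lambda}\Big)~,
\]
using $\lambda+\bar\rho_t\ge\lambda$. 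Summing over $i=1,\dots,m$ gives an extra $m\ln(1+\bar\rho_t/\lambda)$, which is absorbed into the $d\ln(1+\bar\rho_t/\lambda)$ term (since $m\le d$), plus $\sum_{i=1}^m\ln(1+\sigma_i/\lambda)$. It remains to show $\sum_{i=1}^m\ln(1+\sigma_i/\lambda)\le m\ln(1+tL^2/(m\lambda))$. By concavity of $\ln$ (Jensen), $\sum_{i=1}^m\ln(1+\sigma_i/\lambda)\le m\ln\big(1+\frac{1}{m\lambda}\sum_{i=1}^m\sigma_i\big)$, and $\sum_{i=1}^m\sigma_i = \tr(\bS_t\tp\bS_t)\le\tr(\bX_t\tp\bX_t)=\sum_{s=1}^t\|\bx_s\|^2\le tL^2$, where the first inequality is $\bS_t\tp\bS_t\preceq\bX_t\tp\bX_t$ (again from Proposition~\ref{prop:fd_XX_SS_rho}, since $\bar\rho_t\ge0$). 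This yields the second term $m\ln(1+tL^2/(m\lambda))$ and closes the argument.

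**Main obstacle.** The calculation is essentially routine once the eigenvalue decomposition $\bV_t=\bS_t\tp\bS_t+(\lambda+\bar\rho_t)\bI$ is in hand; the only subtlety is bookkeeping the $\ln(1+\bar\rho_t/\lambda)$ contributions so that the $m$ of them from the top block plus the $(d-m)$ from the tail block combine to no more than $d\ln(1+\bar\rho_t/\lambda)$ — this works because $m+(d-m)=d$ exactly. One should double-check that $\bS_t\tp\bS_t$ truly has at least $d-m$ zero eigenvalues: $\bS_t$ is $m\times d$, so $\bS_t\tp\bS_t$ has rank at most $m$, which is all that is needed. (In fact FD makes the last row of $\bS_t$ zero, so the rank is at most $m-1$, but this only helps.) No high-probability or randomness arguments enter here — the lemma is deterministic.
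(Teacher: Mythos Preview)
Your proof is correct and follows essentially the same approach as the paper: both use Proposition~\ref{prop:fd_XX_SS_rho} to write $\bV_t=\bS_t\tp\bS_t+(\lambda+\bar\rho_t)\bI$, exploit that $\bS_t\tp\bS_t$ has rank at most $m$ to split the log-determinant into a top-$m$ block and a $(d-m)$-dimensional tail, and then bound the top block via the trace inequality $\tr(\bS_t\tp\bS_t)\le tL^2$. The only cosmetic difference is that the paper applies AM--GM to the product $\prod_{i=1}^m(\sigma_i/\lambda+1+\bar\rho/\lambda)$ before taking logs, whereas you take logs first and invoke Jensen for the concave logarithm --- these are the same inequality.
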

\begin{proof}
  Let $\tilde{\lambda}_1, \tilde{\lambda}_2, \ldots, \tilde{\lambda}_d \geq 0$ be the eigenvalues of $\bS_t\tp \bS_t$. We start by looking at the ratio of determinants.
  Throughout the proof, unless stated explicitly, denote $\bar{\rho} = \bar{\rho}_{t-1}$. 
  Using Proposition~\ref{prop:fd_XX_SS_rho} we can write
\patch{ 
  \begin{align}
  \frac{\det(\bV_t)}{\det(\lambda \bI)}
    &=
  \frac{\det\!\big(\bS_t\tp \bS_t + \sum_{s=1}^t \rho_s \bU_s \bU_s\tp + \lambda \bI\big)}{\det(\lambda \bI)} \nonumber\\
&\leq
                                                                                                                            \frac{\det\big(\bS_t\tp \bS_t + \bar{\rho} \bI + \lambda \bI\big)}{\det(\lambda \bI)} \nonumber\\
&=
\prod_{i=1}^d \pr{ \frac{\tilde{\lambda}_i}{\lambda} + 1 + \frac{\bar{\rho}}{\lambda} } \nonumber\\
                                             &= \pr{ 1 + \frac{\bar{\rho}}{\lambda} }^{d-m} \prod_{i=1}^m \pr{ \frac{\tilde{\lambda}_i}{\lambda} + 1 + \frac{\bar{\rho}}{\lambda} } \label{eq:det_trace_sketch_1}
  \end{align}
  }
since $\tilde{\lambda}_{m+1} = \cdots = \tilde{\lambda}_d = 0$ because $\bS_t\tp \bS_t$ has rank at most $m$. We now use the AM-GM inequality, stating that
\[
\pr{\prod_{i=1}^m \alpha_i}^{\frac{1}{m}} \leq \frac{1}{m} \sum_{i=1}^m \alpha_i \qquad \forall\; \alpha_1, \ldots, \alpha_m \ge 0~.
\]
Using the AM-GM inequality, the product in~\eqref{eq:det_trace_sketch_1} can be bounded as
\begin{align}
  \prod_{i=1}^m \pr{ \frac{\tilde{\lambda}_i}{\lambda} + 1 + \frac{\bar{\rho}}{\lambda} }
  &\leq \pr{1 + \frac{\bar{\rho}}{\lambda} + \frac{1}{m \lambda} \sum_{i=1}^m \tilde{\lambda}_i}^m \nonumber\\
  &= \pr{1 + \frac{\bar{\rho}}{\lambda} + \frac{\tr(\bS_t\tp \bS_t)}{m \lambda}}^m \nonumber\\
  &\leq \pr{1 + \frac{\bar{\rho}}{\lambda} + \frac{t L^2}{m \lambda}}^m \label{eq:det_trace_sketch_2}
\end{align}
where the last inequality holds because

\begin{align*}
  \tr(\bS_t\tp \bS_t) &= \tr\pr{\btilV_t - \lambda \bI}\\
                &\leq \tr\pr{\bV_t - \lambda \bI} \tag{by Proposition~\ref{prop:fd_XX_SS_rho}}\\
                &= \sum_{s=1}^t \tr(\bx_s \bx_s\tp) \tag{by definition of $\bV_t$}\\
                &\leq t L^2~.
\end{align*}

Finally, substituting~\eqref{eq:det_trace_sketch_2} into~\eqref{eq:det_trace_sketch_1} and taking logs on both sides gives
\begin{align*}
  \ln\pr{\frac{\det(\bV_t)}{\det(\lambda \bI)}} &\leq
(d-m) \ln\pr{1 + \frac{\bar{\rho}}{\lambda}} +
    m \ln\pr{1 + \frac{\bar{\rho}}{\lambda} + \frac{t L^2}{m \lambda}}\\
&= d \ln\pr{1 + \frac{\bar{\rho}}{\lambda}} +
    m \ln\pr{1 + \frac{\frac{t L^2}{m \lambda}}{1 + \frac{\bar{\rho}}{\lambda}}}\\
&\leq d \ln\pr{1 + \frac{\bar{\rho}}{\lambda}} +
    m \ln\pr{1 + \frac{t L^2}{m \lambda}}
\end{align*}
concluding the proof.
\end{proof}
The next lemma is similar to \cite[Lemma~11]{abbasi2011improved}. However, now the statement depends on the sketched matrix $\btilV_{t-1}$ instead of $\bV_{t-1}$.
Although we pay in terms of the spectral error $\ve_m$, we also improve the dependence on the dimension from $d$ to $m$ whenever $\ve_m$ is sufficiently small.
\begin{lemma}[Sketched leverage scores]
\label{lem:potential}
  \begin{align}
    \sum_{t=1}^T \min\cbr{1, \|\bx_t\|_{\btilV_{t-1}^{-1}}^2} \label{eq:potential-1}
    \leq 2 \pr{1 + \ve_m} \pr{ d \ln\pr{1 + \ve_m} + m \ln\pr{1 + \frac{T L^2}{m \lambda}} }~.
\end{align}
\end{lemma}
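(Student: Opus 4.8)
The plan is to reduce this bound to the sketched determinant-trace inequality (Lemma~\ref{lem:det_trace_m}) combined with a potential argument analogous to Lemma~\ref{lem:potential_d}, but carried out with the sketched matrix $\btilV_{t-1}$ in place of $\bV_{t-1}$. The first step is to relate $\|\bx_t\|_{\btilV_{t-1}^{-1}}^2$ to the drop in log-determinant of the sketched matrix along the round. The obstacle here, and the main point where the argument departs from the non-sketched case, is that FD-sketching is \emph{not} a simple rank-one update: when we append $\bx_t\tp$ to $\bS_{t-1}$ and re-diagonalize, we also subtract $\rho_t \bI_{m\times m}$, so $\btilV_t$ is not $\btilV_{t-1} + \bx_t\bx_t\tp$. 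I would handle this by comparing $\btilV_t$ with the intermediate matrix $\btilV_{t-1} + \bx_t\bx_t\tp = \bS_{t-1}\tp\bS_{t-1} + \bx_t\bx_t\tp + \lambda\bI$: since $\bS_t\tp\bS_t = \bS_{t-1}\tp\bS_{t-1} + \bx_t\bx_t\tp - \rho_t\bI$ only subtracts a PSD matrix, we have $\btilV_t \preceq \btilV_{t-1}+\bx_t\bx_t\tp$, hence $\det(\btilV_t) \le \det(\btilV_{t-1}+\bx_t\bx_t\tp)$. This gives a one-sided telescoping that is all we need, since we want an \emph{upper} bound on the sum of leverage scores.

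Next I would invoke the standard matrix-determinant lemma and the $\min\{1,x\}\le 2\ln(1+x)$ inequality: from $\det(\btilV_{t-1}+\bx_t\bx_t\tp) = \det(\btilV_{t-1})\bigl(1 + \|\bx_t\|_{\btilV_{t-1}^{-1}}^2\bigr)$ we obtain
\[
  \min\bigl\{1, \|\bx_t\|_{\btilV_{t-1}^{-1}}^2\bigr\} \le 2\ln\!\pr{1 + \|\bx_t\|_{\btilV_{t-1}^{-1}}^2} = 2\ln\frac{\det(\btilV_{t-1}+\bx_t\bx_t\tp)}{\det(\btilV_{t-1})} \le 2\ln\frac{\det(\btilV_t)}{\det(\btilV_{t-1})}~,
\]
where the last step uses $\det(\btilV_t)\ge\det(\btilV_{t-1})$ — which holds because $\bS_t\tp\bS_t \succeq \bS_{t-1}\tp\bS_{t-1}$; this monotonicity follows from Proposition~\ref{prop:fd_XX_SS_rho} (the sums $\bar\rho_s$ are nondecreasing and $\bX_s\tp\bX_s$ grows), or directly from the FD property that the sketch never loses captured mass. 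Wait — I should double-check the direction: actually we need $\det(\btilV_t) \ge \det(\btilV_{t-1})$ for that last inequality to go the right way, and since $\btilV_t = \bS_t\tp\bS_t+\lambda\bI$ with $\bS_t\tp\bS_t$ monotone nondecreasing in $t$ (by the telescoping in the proof of Proposition~\ref{prop:fd_XX_SS_rho}, each step adds $\bx_s\bx_s\tp$ and subtracts $\rho_s\bI$ but the cumulative effect $\bX_t\tp\bX_t - \bar\rho_t\bI$ is still monotone since $\bX_t\tp\bX_t$ grows faster than $\bar\rho_t\bI$), this is fine.

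Summing over $t=1,\dots,T$, the right-hand side telescopes to $2\ln\bigl(\det(\btilV_T)/\det(\btilV_0)\bigr) = 2\ln\bigl(\det(\btilV_T)/\det(\lambda\bI)\bigr)$. The final step is to bound this log-determinant ratio. By Proposition~\ref{prop:fd_XX_SS_rho}, $\btilV_T = \bX_T\tp\bX_T - \bar\rho_T\bI + \lambda\bI \preceq \bV_T$, but that would only give the non-sketched bound; instead I use exactly the argument of Lemma~\ref{lem:det_trace_m} applied to $\btilV_T$ directly: since $\bS_T\tp\bS_T$ has rank at most $m$, only $m$ eigenvalues are nonzero, so
\[
  \ln\frac{\det(\btilV_T)}{\det(\lambda\bI)} = \sum_{i=1}^m \ln\!\pr{1 + \frac{\tilde\lambda_i}{\lambda}} \le m\ln\!\pr{1 + \frac{\tr(\bS_T\tp\bS_T)}{m\lambda}} \le m\ln\!\pr{1 + \frac{TL^2}{m\lambda}}
\]
by AM-GM and $\tr(\bS_T\tp\bS_T)\le\tr(\bV_T-\lambda\bI)\le TL^2$. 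This already yields the bound $2\bigl(m\ln(1+TL^2/(m\lambda))\bigr)$ with no $(1+\ve_m)$ factor at all — which is \emph{stronger} than the claimed inequality, since $(1+\ve_m)\ge 1$ and the $d\ln(1+\ve_m)$ term is nonnegative. So the claimed bound follows a fortiori. Alternatively, if one prefers to route through $\btilV_{t-1}$ bounds that appear elsewhere in the regret analysis and genuinely need the $(1+\ve_m)$ slack (e.g. because $\btilV_{t-1}^{-1}$ must later be compared to $\bV_{t-1}^{-1}$, where $\btilV_{t-1}^{-1} \preceq (1+\bar\rho_{t-1}/\lambda)\,\bV_{t-1}^{-1}$ and $\bar\rho_{t-1}/\lambda\le\ve_m$ by Proposition~\ref{prop:ve}), one inflates each leverage score by the factor $(1+\ve_m)$ and then applies Lemma~\ref{lem:det_trace_m}; that route produces precisely the stated right-hand side $2(1+\ve_m)\bigl(d\ln(1+\ve_m) + m\ln(1+TL^2/(m\lambda))\bigr)$. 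I would present this second route, since it is the one whose intermediate quantities are reused verbatim in the regret proofs. The only delicate point throughout is the non-rank-one nature of the FD update, handled by the PSD sandwiching above; everything else is AM-GM and telescoping.
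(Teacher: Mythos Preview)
Your main argument has a genuine gap: the telescoping step fails in the direction you need. From $\btilV_t \preceq \btilV_{t-1}+\bx_t\bx_t\tp$ you correctly deduce $\det(\btilV_t)\le\det(\btilV_{t-1}+\bx_t\bx_t\tp)$, but the inequality you then write,
\[
2\ln\frac{\det(\btilV_{t-1}+\bx_t\bx_t\tp)}{\det(\btilV_{t-1})}\le 2\ln\frac{\det(\btilV_t)}{\det(\btilV_{t-1})},
\]
requires the \emph{opposite} direction, $\det(\btilV_t)\ge\det(\btilV_{t-1}+\bx_t\bx_t\tp)$, which is false. Your attempted justification via $\bS_t\tp\bS_t\succeq\bS_{t-1}\tp\bS_{t-1}$ is also incorrect: by Proposition~\ref{prop:fd_XX_SS_rho} the difference is exactly $\bx_t\bx_t\tp-\rho_t\bI$, a rank-one matrix minus a multiple of the identity, which is not PSD whenever $\rho_t>0$. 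Consequently the ``stronger'' bound you derive, $2m\ln\bigl(1+TL^2/(m\lambda)\bigr)$ with no $(1+\ve_m)$ factor, is not a theorem. A concrete counterexample: take $m=1$, $\lambda\le 1$, and orthonormal $\bx_1,\dots,\bx_T$. Then each FD step has $\rho_t=\|\bx_t\|^2=1$, so $\bS_t=\bzero$ and $\btilV_{t-1}=\lambda\bI$ for all $t$; hence $\min\{1,\|\bx_t\|_{\btilV_{t-1}^{-1}}^2\}=1$ and the left side equals $T$, while your claimed bound is $2\ln(1+T/\lambda)=O(\ln T)$.

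The route you describe as ``alternative'' is in fact the one that works, and it is exactly what the paper does: use Proposition~\ref{prop:fd_XX_SS_rho} to show $\|\bx\|_{\bV_{t-1}^{-1}}^2\ge\frac{\lambda}{\lambda+\bar\rho_{t-1}}\|\bx\|_{\btilV_{t-1}^{-1}}^2$, whence $\min\{1,\|\bx_t\|_{\btilV_{t-1}^{-1}}^2\}\le(1+\bar\rho_{t-1}/\lambda)\min\{1,\|\bx_t\|_{\bV_{t-1}^{-1}}^2\}$; then apply the standard potential bound (Lemma~\ref{lem:potential_d}) with the \emph{unsketched} $\bV_{t-1}$, which telescopes cleanly because $\bV_t=\bV_{t-1}+\bx_t\bx_t\tp$ really is a rank-one update; finally bound $\ln\bigl(\det(\bV_T)/\det(\lambda\bI)\bigr)$ via Lemma~\ref{lem:det_trace_m} and invoke Proposition~\ref{prop:ve}. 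The $(1+\ve_m)$ factor is not slack introduced for convenience --- it is necessary, as the counterexample shows.
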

\begin{proof}
  Throughout the proof, unless stated explicitly, we drop the subscripts containing $t$. Therefore, $\bV = \bV_{t-1}$, $\btilV = \btilV_{t-1}$, $\bx = \bx_t$, and $\bar{\rho} = \bar{\rho}_{t-1}$. 
  Consider a following decomposition:
\patch{ 
  \begin{align*}
    \|\bx\|_{\bV^{-1}}^2
    &= \bx\tp \pr{\btilV + \sum_{s=1}^t \rho_s \bU_s \bU_s\tp}^{-1} \bx \tag{by Proposition \ref{prop:fd_XX_SS_rho}}\\
    &\geq \bx\tp \pr{\btilV + \bar{\rho} \bI }^{-1} \bx \tag{since $\btilV$ is \ac{PSD}}\\
    &=\bx\tp \btilV \btilV^{-1} (\btilV + \bar{\rho} \bI )^{-1} \bx\\
    &= \bx\tp \pr{ \sum_{i=1}^d \btilu_i \btilu_i\tp
      \, \frac{1}{\tilde{\lambda}_i + \lambda}
      \, \frac{\tilde{\lambda}_i + \lambda}{\tilde{\lambda}_i + \lambda + \bar{\rho}} } \bx\\
    &\geq \frac{\lambda}{\lambda + \bar{\rho}} \bx\tp \pr{ \sum_{i=1}^d \btilu_i \btilu_i\tp
      \frac{1}{\tilde{\lambda}_i + \lambda} } \bx\\
    &= \frac{\lambda}{\lambda + \bar{\rho}} \|\bx\|_{\btilV^{-1}}^2~.
  \end{align*}
  }
Furthermore, this implies that
\begin{align*}
  &\min \cbr{1, \frac{\lambda}{\lambda + \bar{\rho}} \|\bx\|_{\btilV^{-1}}^2} \leq \min\cbr{1, \|\bx\|_{\bV^{-1}}^2}\\
  \Longrightarrow \quad &\min \cbr{1 + \frac{\bar{\rho}}{\lambda}, \|\bx\|_{\btilV^{-1}}^2} \leq \pr{1 + \frac{\bar{\rho}}{\lambda}} \min\cbr{1, \|\bx\|_{\bV^{-1}}^2} \tag{multiply both sides by $1 + \frac{\bar{\rho}}{\lambda}$}\\
  \Longrightarrow \quad &\min \cbr{1, \|\bx\|_{\btilV^{-1}}^2} \leq \pr{1 + \frac{\bar{\rho}}{\lambda}} \min\cbr{1, \|\bx\|_{\bV^{-1}}^2}~.
\end{align*}
Finally, combining the above with Lemma~\ref{lem:potential_d}, equation~\eqref{eq:elliptic_potential_2}, and using the fact that $\bar{\rho}_{t-1} \leq \bar{\rho}_T$, we obtain
\begin{align*}
\sum_{t=1}^T \min\cbr{1, \|\bx_t\|_{\btilV_{t-1}^{-1}}^2}
&\leq 2 \pr{1 + \frac{\bar{\rho}_T}{\lambda}} \ln\pr{ \frac{\det(\bV_T) }{\det(\lambda I)} }\\
&\leq 2 \pr{1 + \frac{\bar{\rho}_T}{\lambda}} \pr{ d \ln\pr{1 + \frac{\bar{\rho}_T}{\lambda}} + m \ln\pr{1 + \frac{T L^2}{m \lambda}} } \tag{by Lemma~\ref{lem:det_trace_m}}\\
&\leq 2 \pr{1 + \ve_m} \pr{ d \ln\pr{1 + \ve_m} + m \ln\pr{1 + \frac{T L^2}{m \lambda}} }
\end{align*}
where the last inequality follows from Proposition~\ref{prop:ve}.
\end{proof}
Now we prove Theorem~\ref{thm:confidence}, characterizing the confidence ellipsoid generated by the sketched estimate.
\begin{nameddef}[Theorem~\ref{thm:confidence}]\emph{(Sketched confidence ellipsoid -- restated)}.
For any $\delta \in (0, 1)$, the optimal parameter $\bwstar$ belongs to the set
\[
\widetilde{C}_t \equiv \cbr{ \bw \in \R^d ~:~ \|\bw - \btilw_t\|_{\btilV_t} \leq \widetilde{\beta}_t(\delta) }
\]%
with probability at least $1-\delta$, where
\begin{align*}
\widetilde{\beta}_t(\delta) &= R \sqrt{m \ln\pr{1 + \frac{t L^2}{m \lambda}} + 2 \ln\pr{\frac{1}{\delta}} + d \ln\pr{1 + \frac{\bar{\rho}_t}{\lambda}}} \sqrt{1 +  \frac{\bar{\rho}_t}{\lambda} } + S \sqrt{\lambda} \patch{ \pr{1 + \frac{1}{\lambda}}} \pr{1 + \frac{\bar{\rho}_t}{\lambda}} \\
&\defOtilde R \sqrt{\pr{m + d \ln(1 + \ve_m)} \pr{1 + \ve_m}} + S \sqrt{\lambda} \patch{ \pr{1 + \frac{1}{\lambda}}} \pr{1 + \ve_m}~.
\end{align*}
\end{nameddef}

\begin{proof}
  Denote $\bM = \sum_{s=1}^k \rho_s \bU_s \bU_s\tp$.
  Throughout the proof we frequently use Proposition~\ref{prop:fd_XX_SS_rho}, implying
$
\bX_t\tp \bX_t = \bS_t\tp \bS_t + \bar{\rho}_t \bI
$.
For brevity, in the following we drop subscripts containing $t$ in matrices. Let $\myboldeta_t = (\eta_1, \eta_2 \ldots, \eta_t)$, and by definition of the sketched estimate we have that
\patch{ 
\begin{align}
\nonumber
  \btilw_t &= \pr{\bS_t\tp \bS_t + \lambda \bI}^{-1} \bX_t\tp \pr{\bX_t \bwstar + \myboldeta_t}\\
\nonumber
           &= \pr{\bS_t\tp \bS_t + \lambda \bI}^{-1} \bX_t\tp \myboldeta_t + \pr{\bS_t\tp \bS_t + \lambda \bI}^{-1} \bX_t\tp \bX_t \bwstar\\
\nonumber
           &= \pr{\bS_t\tp \bS_t + \lambda \bI}^{-1} \bX_t\tp \myboldeta_t\\
\nonumber
           &+ \pr{\bS_t\tp \bS_t + \lambda \bI}^{-1} \pr{ \bX_t\tp \bX_t + (\lambda - \bM) \bI } \bwstar
             - (\lambda \bI - \bM) \pr{\bS_t\tp \bS_t + \lambda \bI}^{-1} \bwstar\\
           &= \pr{\bS_t\tp \bS_t + \lambda \bI}^{-1} \bX_t\tp \myboldeta_t
             + \bwstar
             - (\lambda \bI - \bM) \pr{\bS_t\tp \bS_t + \lambda \bI}^{-1} \bwstar \label{eq:use_prop_2_1}\\
           &= \btilV_t^{-1} \bX_t\tp \myboldeta_t
             + \bwstar
             - (\lambda \bI - \bM) \btilV_t^{-1} \bwstar~. \label{eq:conf_ellipsoid_1}
\end{align}
Then, by~\eqref{eq:conf_ellipsoid_1}, for any $\bx \in \reals^d$ we have that
\begin{align}
  \bx\tp \big(\btilw_t - \bwstar\big)
  &= \bx\tp {\btilV_t}^{-1} \bX_t\tp \myboldeta_t
    +
    \bx\tp (\bM - \lambda \bI) {\btilV_t}^{-1} \bwstar
    \label{eq:conf_ellip_sk:bx_dot}\\
\nonumber
  &\leq \big\|\bx\tp {\btilV_t}^{-1}\big\|_{\bV_t} \|\bX_t\tp \myboldeta_t\|_{\bV_t^{-1}} \tag{by Cauchy-Schwartz}
    +
    \bx\tp (\bM - \lambda \bI) {\btilV_t}^{-1} \bwstar~.
\end{align}
We now choose $\bx = \btilV_t (\btilw_t - \bwstar)$ and proceed by bounding terms in the above. By the choice of $\bx$, we have that $\bx\tp \big(\btilw_t - \bwstar\big) = \|\btilw_t - \bwstar\|_{\btilV_t}^2$,
$\big\|\bx\tp {\btilV_t}^{-1}\big\|_{\bV_t} = \|\btilw_t - \bwstar\|_{\bV_t}$
and
\begin{align*}
  \bx\tp (\bM - \lambda \bI) \btilV_t^{-1} \bwstar
  &=
    (\btilw_t - \bwstar)\tp \btilV_t\tp (\bM - \lambda \bI) \btilV_t^{-1} \bwstar\\
  &\leq
    |\bar{\rho}_t + \lambda| \|\btilw_t - \bwstar\|_2 \|\btilV_t\|_2 \|\btilV_t^{-1}\|_2 \|\bwstar\|_2\\
    &\leq
      |\bar{\rho}_t + \lambda| \|\btilw_t - \bwstar\|_2 \pr{1 + \frac{1}{\lambda}} S
\end{align*}
where we used the fact that by definition of $\bM$, $\|\bM - \lambda \bI\|_2 = \|\sum_{s=1}^t \rho_s \bU_s \bU_s\tp - \lambda \bI\|_2 \leq |\bar{\rho}_s + \lambda|$.
}

Finally, by Theorem~\ref{thm:martingales}, for any $\delta > 0$, with probability at least $1-\delta$,
\[
  \|\bX\tp \myboldeta_t\|_{\bV_t^{-1}} \leq \sqrt{B_t(\delta)} \qquad \forall t \geq 0~.
\]
The left-hand side of~\eqref{eq:conf_ellip_sk:bx_dot} can now upper bounded as
\begin{align}
  &\|\btilw_t - \bwstar\|_{\btilV_t}^2 \leq \sqrt{B_t(\delta)} \|\btilw_t - \bwstar\|_{\bV_t} + S (\lambda + \bar{\rho}_t) \patch{\pr{1 + \frac{1}{\lambda}}} \|\btilw_t - \bwstar\|_2 \nonumber\\
  \Longrightarrow \qquad
  &\|\btilw_t - \bwstar\|_{\btilV_t} \leq \sqrt{B_t(\delta)} \frac{\|\btilw_t - \bwstar\|_{\bV_t}}{\|\btilw_t - \bwstar\|_{\btilV_t}} + S (\lambda + \bar{\rho}_t) \patch{\pr{1 + \frac{1}{\lambda}}} \frac{\|\btilw_t - \bwstar\|_2}{\|\btilw_t - \bwstar\|_{\btilV_t}}~. \label{eq:conf_ellip_sk:norm_ratio}
\end{align}
Now we handle the ratios of norms in the right-hand side of~\eqref{eq:conf_ellip_sk:norm_ratio}. First,
by Proposition \ref{prop:fd_XX_SS_rho},
\begin{align*}
  \frac{\|\btilw_t - \bwstar\|_{\bV_t}}{\|\btilw_t - \bwstar\|_{\btilV_t}}
&\patch{\leq} \sqrt{\frac{\|\btilw_t - \bwstar\|_{\btilV_t}^2 + \bar{\rho}_t \|\btilw_t - \bwstar\|_2^2}{\|\btilw_t - \bwstar\|_{\btilV_t}^2}}\\
&= \sqrt{ 1 + \bar{\rho}_t \frac{\|\btilw_t - \bwstar\|_2^2}{\|\btilw_t - \bwstar\|_{\btilV_t}^2} }\\
&\leq \sqrt{1 + \frac{\bar{\rho}_t}{\lambda}}
\end{align*}
since $\|\btilw_t - \bwstar\|_{\btilV_t}^2 \geq \lambda \|\btilw_t - \bwstar\|_2^2$ and, using the same reasoning,
\[
\frac{\|\btilw_t - \bwstar\|_2}{\|\btilw_t - \bwstar\|_{\btilV_t}} \leq \frac{1}{\sqrt{\lambda}}~.
\]
Substituting these into~\eqref{eq:conf_ellip_sk:norm_ratio} gives
\begin{align*}
  \|\btilw_t - \bwstar\|_{\btilV_t} \leq \sqrt{B_t(\delta) \pr{1 +  \frac{\bar{\rho}_t}{\lambda}} } + S \sqrt{\lambda} \patch{\pr{1 + \frac{1}{\lambda}}} \pr{1 + \frac{\bar{\rho}_t}{\lambda}}~.
\end{align*}
Now we provide a deterministic bound on $B_t(\delta)$. Using Lemma~\ref{lem:det_trace_m} we have
\begin{align*}
  \sqrt{B_t(\delta)} &= R \sqrt{2 \ln\pr{\frac{1}{\delta} \det\pr{\bV_t}^{\frac{1}{2}} \det\pr{\lambda \bI}^{-\frac{1}{2}} } }\\
  &\leq R \sqrt{d \ln\pr{1 + \frac{\bar{\rho}_t}{\lambda}} + m \ln\pr{1 + \frac{t L^2}{m \lambda}} + 2 \ln\pr{\frac{1}{\delta}}}~.
\end{align*}
This proves the first statement~\eqref{eq:beta_tilde}.
Finally, \eqref{eq:beta_tilde_bound} follows by Proposition~\ref{prop:ve}, that is
$
1 + {\bar{\rho}_t}/{\lambda} \leq 1 + \ve_m
$.
\end{proof}

We close this section by computing a closed form for $\btilV_t^{-\frac{1}{2}}$, the square root of the inverse of the sketched correlation matrix. This is used by sketched linear TS for selecting actions. We make use of the following result ---see. e.g., \cite[Theorem~1.35]{higham2008functions}.
\begin{theorem}[Generalized Woodbury matrix identity]
  \label{thm:wood-gen}
  Let $\bA \in \mathbb{C}^{d \times m}$ and $\bB \in \mathbb{C}^{m \times d}$, with $d \geq m$, and assume that $\bB \bA$ is nonsingular.
  Let $f$ be defined on the spectrum of $\alpha \bI_{d \times d} + \bA \bB$, and if $d = m$ let $f$ be defined at $\alpha$.
  Then
  $
    f(\alpha \bI_{d \times d} + \bA \bB) = f(\alpha \bI_{d \times d}) + \bA (\bB \bA)^{-1} \pr{f(\alpha \bI_{m \times m} + \bB \bA) - f(\alpha \bI_{m \times m})} \bB
  $.
\end{theorem}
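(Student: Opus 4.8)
The plan is to reduce the identity to a purely polynomial statement and then lift it to general $f$ via Hermite interpolation. The only algebraic input for the polynomial case is the elementary relation $(\bA\bB)^j = \bA(\bB\bA)^{j-1}\bB$ for every $j\ge 1$, obtained by regrouping the factors of $(\bA\bB)^j$. For $f(x)=x^k$, expand with the binomial theorem (the matrices $\alpha\bI_d$ and $\bA\bB$ commute) and peel off the $j=0$ term:
\[
(\alpha\bI_d + \bA\bB)^k = \alpha^k\bI_d + \bA\Big(\sum_{j=1}^{k}\binom{k}{j}\alpha^{k-j}(\bB\bA)^{j-1}\Big)\bB .
\]
Since $\bB\bA$ is nonsingular, the inner sum equals $(\bB\bA)^{-1}\sum_{j=1}^{k}\binom{k}{j}\alpha^{k-j}(\bB\bA)^{j} = (\bB\bA)^{-1}\big((\alpha\bI_m+\bB\bA)^k-\alpha^k\bI_m\big)$; recognizing $\alpha^k\bI_d = f(\alpha\bI_d)$, $(\alpha\bI_m+\bB\bA)^k = f(\alpha\bI_m+\bB\bA)$, $\alpha^k\bI_m = f(\alpha\bI_m)$ gives exactly the asserted identity for monomials. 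Taking linear combinations, the identity holds for every polynomial $p$ in the form $p(\alpha\bI_d+\bA\bB)=p(\alpha\bI_d)+\bA(\bB\bA)^{-1}\big(p(\alpha\bI_m+\bB\bA)-p(\alpha\bI_m)\big)\bB$, using $\sum_k c_k\alpha^k\bI_d = p(\alpha)\bI_d = p(\alpha\bI_d)$. Nonsingularity of $\bB\bA$ is what is needed to make the factor $(\bB\bA)^{-1}$ meaningful.

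To pass from polynomials to a general $f$, I would invoke the defining property of a primary matrix function: $f(M)=p(M)$ whenever $p$ matches $f$, together with enough derivatives (up to the size of the largest Jordan block at each eigenvalue), on $\mathrm{spec}(M)$. The structural fact required is the classical comparison of the Jordan forms of $\bA\bB$ and $\bB\bA$ for a rectangular pair with $d\ge m$: they share the same nonzero eigenvalues with identical Jordan blocks, and $\bA\bB$ carries in addition $d-m$ eigenvalues equal to $0$, each in a $1\times1$ block because $\bB\bA$ nonsingular forces $\mathrm{rank}(\bA\bB)=m$ (so $\bA\bB$ is similar to $\bB\bA \oplus \bzero_{(d-m)\times(d-m)}$). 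Consequently every eigenvalue of $\alpha\bI_m+\bB\bA$, and the single point $\alpha$ relevant to $\alpha\bI_m$ and $\alpha\bI_d$, appears with at least the same index among the eigenvalues of $\alpha\bI_d+\bA\bB$ together with $\alpha$; when $d>m$ the point $\alpha$ is automatically an eigenvalue of $\alpha\bI_d+\bA\bB$, and when $d=m$ it is covered by the extra hypothesis that $f$ be defined at $\alpha$. Hence one can choose a single polynomial $p$ agreeing with $f$ to the required orders on $\mathrm{spec}(\alpha\bI_d+\bA\bB)\cup\{\alpha\}$; then $p(\alpha\bI_d+\bA\bB)=f(\alpha\bI_d+\bA\bB)$, $p(\alpha\bI_m+\bB\bA)=f(\alpha\bI_m+\bB\bA)$, $p(\alpha\bI_d)=f(\alpha\bI_d)$, $p(\alpha\bI_m)=f(\alpha\bI_m)$, and substituting these four equalities into the polynomial identity of the previous step yields the theorem.

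The computational step is routine bookkeeping with the binomial expansion and linearity. The main obstacle is the justification in the last step that one polynomial simultaneously represents $f$ on all four matrices $\alpha\bI_d+\bA\bB$, $\alpha\bI_m+\bB\bA$, $\alpha\bI_d$, $\alpha\bI_m$. I would isolate this as a lemma: for $d\ge m$ with $\bB\bA$ invertible, $\bA\bB$ is similar to $\bB\bA \oplus \bzero_{(d-m)\times(d-m)}$, so the spectrum-with-index of $\alpha\bI_m+\bB\bA$ (and of the scalar matrices) is dominated by that of $\alpha\bI_d+\bA\bB$ augmented by the point $\alpha$ — precisely the compatibility needed for the interpolating polynomial to transfer across dimensions. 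With that lemma in hand, the rest follows immediately.
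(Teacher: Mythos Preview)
Your proposal is correct. The paper does not actually prove this theorem; it merely cites it as \cite[Theorem~1.35]{higham2008functions} and moves on to use it in Corollary~\ref{cor:wood-use}. Your argument---first verifying the identity for monomials via $(\bA\bB)^j=\bA(\bB\bA)^{j-1}\bB$ and the binomial expansion, then extending by linearity to arbitrary polynomials, and finally passing to general $f$ through a single Hermite interpolant that simultaneously represents $f$ on the spectra of all four matrices---is the standard route and is essentially the proof given by Higham. Your Jordan-structure lemma (that $\bA\bB$ is similar to $\bB\bA\oplus\bzero_{(d-m)\times(d-m)}$ when $\bB\bA$ is nonsingular, because the characteristic polynomials satisfy $\det(x\bI_d-\bA\bB)=x^{d-m}\det(x\bI_m-\bB\bA)$ and $\mathrm{rank}(\bA\bB)=m$ forces equal algebraic and geometric multiplicities at $0$) is exactly the right ingredient to justify that one interpolating polynomial works across dimensions, and it cleanly explains the role of the extra hypothesis in the case $d=m$.
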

This is used to prove the following.
\begin{corollary}
\label{cor:wood-use}
For $\lambda > 0$, let
\[
    \label{eq:S_proxy}
    \bS_t' = \pr{\bSigma_t + \pr{\frac{\lambda}{2} - \rho_t} \bI_{m\times m}}^{\frac{1}{2}} \bU_t~.
\]
Then
\[
	\btilV_t^{-\frac{1}{2}}
=
	\bS_t^{'\top} \pr{\bS_t' \bS_t^{'\top}}^{-1} \pr{\frac{\lambda}{2} \bI + \bS_t' \bS_t^{'\top}}^{-\frac{1}{2}} \bS_t'~.
\]
\end{corollary}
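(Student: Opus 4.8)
The plan is to read off the closed form from the generalized Woodbury matrix identity (Theorem~\ref{thm:wood-gen}) applied with $f(x)=x^{-1/2}$. The point that dictates the shape of the statement is that the naive reduction of $\btilV_t=\lambda\bI_d+\bS_t\tp\bS_t$ to an $m\times m$ computation via $\bA=\bS_t\tp,\ \bB=\bS_t$ is inadmissible: by construction of FD (Algorithm~\ref{alg:fd_sketching}) the smallest of the top $m$ eigenvalues is zeroed out, so the last row of $\bS_t$ vanishes, the $m\times m$ block $\bB\bA=\bS_t\bS_t\tp=\bSigma_t-\rho_t\bI_m$ is singular, and the hypothesis ``$\bB\bA$ nonsingular'' of Theorem~\ref{thm:wood-gen} fails. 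This is exactly what the inflated sketch $\bS_t'$ repairs: it reuses the FD eigenbasis $\bU_t$ of $\bS_t$ and only lifts the spectrum, so that $\bS_t'\bS_t^{'\top}=\bS_t\bS_t\tp+\tfrac{\lambda}{2}\bI_m=\bSigma_t+\big(\tfrac{\lambda}{2}-\rho_t\big)\bI_m\succeq\tfrac{\lambda}{2}\bI_m$, which is invertible (and diagonal, hence cheap to manipulate).

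Concretely, I would proceed as follows. First, record from the FD update the facts $\bS_{t-1}\tp\bS_{t-1}+\bx_t\bx_t\tp=\bU_t\tp\bSigma_t\bU_t$ with $\bU_t\bU_t\tp=\bI_m$ and $\bS_t=(\bSigma_t-\rho_t\bI_m)^{1/2}\bU_t$, whence $\bS_t\tp\bS_t=\bU_t\tp(\bSigma_t-\rho_t\bI_m)\bU_t$ and $\bS_t^{'\top}\bS_t'=\bU_t\tp\big(\bSigma_t+(\tfrac{\lambda}{2}-\rho_t)\bI_m\big)\bU_t$. Second, use this to put $\btilV_t$ into the form $\tfrac{\lambda}{2}\bI_d+\bA\bB$ with $\bA=\bS_t^{'\top}$, $\bB=\bS_t'$, comparing $\btilV_t$ and $\tfrac{\lambda}{2}\bI_d+\bS_t^{'\top}\bS_t'$ both on the range of $\bU_t\tp$ and on its orthogonal complement. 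Third, invoke Theorem~\ref{thm:wood-gen} with $\alpha=\tfrac{\lambda}{2}$ and $f(x)=x^{-1/2}$, which expands $\big(\tfrac{\lambda}{2}\bI_d+\bS_t^{'\top}\bS_t'\big)^{-1/2}$ into $\big(\tfrac{\lambda}{2}\big)^{-1/2}\bI_d$ plus the finite-dimensional correction $\bS_t^{'\top}(\bS_t'\bS_t^{'\top})^{-1}\big[\big(\tfrac{\lambda}{2}\bI_m+\bS_t'\bS_t^{'\top}\big)^{-1/2}-\big(\tfrac{\lambda}{2}\big)^{-1/2}\bI_m\big]\bS_t'$. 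Fourth, simplify: $\bS_t^{'\top}(\bS_t'\bS_t^{'\top})^{-1}$ is the Moore--Penrose pseudoinverse of $\bS_t'$ (which has full row rank), the diagonal factors commute and telescope, and the $\big(\tfrac{\lambda}{2}\big)^{-1/2}\bI$ contributions regroup against the projector $\bS_t^{'\top}(\bS_t'\bS_t^{'\top})^{-1}\bS_t'=\bU_t\tp\bU_t$, leaving the claimed expression.

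The main obstacle I expect is precisely step two, the bookkeeping on the orthogonal complement of the sketch's row space: there $\bS_t^{'\top}\bS_t'$ vanishes, so $\tfrac{\lambda}{2}\bI_d+\bS_t^{'\top}\bS_t'$ acts as $\tfrac{\lambda}{2}\bI$ while $\btilV_t$ acts as $\lambda\bI$, and one has to verify that the Woodbury correction built from $\bS_t'$ supplies exactly the missing $\tfrac{\lambda}{2}\bI$ on that subspace before the $\big(\tfrac{\lambda}{2}\big)^{-1/2}\bI$ terms collapse. Everything else --- the spectral computation of $\bS_t\tp\bS_t$, the bound $\bS_t'\bS_t^{'\top}\succeq\tfrac{\lambda}{2}\bI_m$, and the diagonal-matrix algebra that collapses the product --- is routine once the eigendecomposition $\bU_t\tp\bSigma_t\bU_t$ is in hand.
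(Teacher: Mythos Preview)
Your approach is exactly the paper's: apply Theorem~\ref{thm:wood-gen} with $\alpha=\lambda/2$, $\bA=\bS_t^{'\top}$, $\bB=\bS_t'$ (so that $\bB\bA=\bS_t'\bS_t^{'\top}\succeq\tfrac{\lambda}{2}\bI_m$ is invertible), and then collapse the $\sqrt{2/\lambda}\,\bI$ terms. The paper carries this out in two lines, asserting $\btilV_t=\tfrac{\lambda}{2}\bI_{d\times d}+\bS_t^{'\top}\bS_t'$ and simplifying via ``$\bS_t^{'\top}(\bS_t'\bS_t^{'\top})^{-1}\bS_t'=\bI_{m\times m}$''.

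Your flagged obstacle is not mere bookkeeping, though, and it does not resolve the way you hope. On the orthogonal complement of the row space of $\bU_t$, the matrix $\tfrac{\lambda}{2}\bI_d+\bS_t^{'\top}\bS_t'$ acts as $\tfrac{\lambda}{2}\bI$ while $\btilV_t$ acts as $\lambda\bI$; these are genuinely different, and applying Woodbury to the former cannot produce $\btilV_t^{-1/2}$ on that subspace. Concretely, the right-hand side of the corollary equals $\bU_t\tp\big(\bSigma_t+(\lambda-\rho_t)\bI_m\big)^{-1/2}\bU_t$, which has rank $m<d$ and therefore cannot coincide with the full-rank matrix $\btilV_t^{-1/2}$; the missing piece is $\lambda^{-1/2}(\bI_d-\bU_t\tp\bU_t)$. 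The paper's own proof glosses over exactly this point (the quantity $\bS_t^{'\top}(\bS_t'\bS_t^{'\top})^{-1}\bS_t'$ is the $d\times d$ projector $\bU_t\tp\bU_t$, not an identity), so your plan reproduces the paper's argument faithfully --- you have simply been more careful in spotting where it is loose.
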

\begin{proof}
We apply Theorem~\ref{thm:wood-gen} with $f(\btilV) = \btilV_t^{-\frac{1}{2}}$. However, since $\bS_t \bS_t\tp$ is singular by design, 
we apply the theorem with $\bB$ set the non-singular proxy matrix $\bS_t'$, $\bA$ set to $\bS_t^{'\top}$, and $\alpha$ set to $\lambda/2$.
  Thus
  $
    \btilV_t = \bS_t^{'\top} \bS_t' + \frac{\lambda}{2} \bI_{d\times d}
  $
  and
  \begin{align}
    \pr{\bS_t^{'\top} \bS_t' + \frac{\lambda}{2} \bI_{d\times d}}^{-\frac{1}{2}}
    &= \sqrt{\frac{2}{\lambda}} \bI_{m\times m} + \bS_t^{'\top} \pr{\bS_t' \bS_t^{'\top}}^{-1}
      \pr{ \pr{\frac{\lambda}{2} \bI_{m\times m} + \bS_t' \bS_t^{'\top}}^{-\frac{1}{2}} - \sqrt{\frac{2}{\lambda}} \bI_{m\times m} } \bS_t'
\nonumber
\\
    &= \bS_t^{'\top} \pr{\bS_t' \bS_t^{'\top}}^{-1}
      \pr{\frac{\lambda}{2} \bI_{m\times m} + \bS_t' \bS_t^{'\top}}^{-\frac{1}{2}} \bS_t' \label{eq:sqrt_woodbury_1}
  \end{align}
  where~\eqref{eq:sqrt_woodbury_1} follows since $\bS_t^{'\top} \pr{\bS_t' \bS_t^{'\top}}^{-1} \bS_t' = \bI_{m\times m}$.
\end{proof}

\subsection{Proof of the regret bound for SOFUL (Theorem~\ref{thm:regret})}
\label{sec:regret_proofs}
We start with a preliminary lemma.
\begin{lemma}
\label{lem:inst_regret}
For any $\delta > 0$, the instantaneous regret of SOFUL satisfies
  \[
  (\bxstar_t - \bx_t)\tp \bwstar \leq 2 \tilde{\beta}_{t-1}(\delta) \|\bx_t\|_{\btilV_{t-1}^{-1}} \qquad t=1,\ldots,T~.
  \]
\end{lemma}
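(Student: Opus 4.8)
The plan is to run the classical optimism argument of OFUL, now with the sketched confidence ellipsoid $\widetilde{C}_{t-1}$ playing the role of $C_{t-1}$. First I would apply Lemma~\ref{lem:simple} with $\bA=\btilV_{t-1}$, $\bw_0=\btilw_{t-1}$ and $c=\tilde\beta_{t-1}(\delta)$: this shows that the quantity $\btilw_{t-1}\tp\bx+\tilde\beta_{t-1}(\delta)\norm{\bx}_{\btilV_{t-1}^{-1}}$ maximized by SOFUL over $\bx\in D_t$ equals $\max_{\bw\in\widetilde{C}_{t-1}}\bw\tp\bx$. Hence the action $\bx_t$ selected by SOFUL, together with a maximizer $\btilw_{t-1}\opt\in\widetilde{C}_{t-1}$ of $\bw\mapsto\bw\tp\bx_t$, satisfies $(\btilw_{t-1}\opt)\tp\bx_t=\max_{\bx\in D_t}\max_{\bw\in\widetilde{C}_{t-1}}\bw\tp\bx$.

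Next I would condition on the event of Theorem~\ref{thm:confidence}, applied at index $t-1$, so that $\bwstar\in\widetilde{C}_{t-1}$. On this event, optimism gives $\bxstar_t{}\tp\bwstar\le\max_{\bw\in\widetilde{C}_{t-1}}\bw\tp\bxstar_t\le\max_{\bx\in D_t}\max_{\bw\in\widetilde{C}_{t-1}}\bw\tp\bx=(\btilw_{t-1}\opt)\tp\bx_t$, whence $(\bxstar_t-\bx_t)\tp\bwstar\le(\btilw_{t-1}\opt-\bwstar)\tp\bx_t$. Then I would split $\btilw_{t-1}\opt-\bwstar=(\btilw_{t-1}\opt-\btilw_{t-1})+(\btilw_{t-1}-\bwstar)$ and apply Cauchy--Schwarz with the dual pair of norms $\norm{\cdot}_{\btilV_{t-1}}$, $\norm{\cdot}_{\btilV_{t-1}^{-1}}$ (legitimate since $\btilV_{t-1}=\bS_{t-1}\tp\bS_{t-1}+\lambda\bI$ is positive definite). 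Both $\btilw_{t-1}\opt$ and $\bwstar$ lie in $\widetilde{C}_{t-1}$, so each of $\norm{\btilw_{t-1}\opt-\btilw_{t-1}}_{\btilV_{t-1}}$ and $\norm{\btilw_{t-1}-\bwstar}_{\btilV_{t-1}}$ is at most $\tilde\beta_{t-1}(\delta)$, which produces the factor $2$ and finishes the bound.

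There is no genuinely hard step here: once Theorem~\ref{thm:confidence} is available, the argument is the same as in the non-sketched case with $\btilV,\btilw,\tilde\beta$ substituted for $\bV,\bhatw,\beta$. The two points that need care are (i) index alignment — the ellipsoid relevant at round $t$ is $\widetilde{C}_{t-1}$ with radius $\tilde\beta_{t-1}(\delta)$, which is exactly the object appearing inside the $\argmax$ defining SOFUL — and (ii) that the confidence event can be taken to hold simultaneously for all rounds, since it descends from the uniform-in-$t$ self-normalized martingale bound (Theorem~\ref{thm:martingales}); thus the per-round inequality holds for every $t=1,\dots,T$ on a single event of probability at least $1-\delta$.
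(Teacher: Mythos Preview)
Your proposal is correct and follows essentially the same route as the paper: identify SOFUL's selection rule with the joint maximization over $D_t\times\widetilde{C}_{t-1}$ (your $\btilw_{t-1}\opt$ is the paper's optimistic parameter $\btilw_{t-1}\ofu$), use $\bwstar\in\widetilde{C}_{t-1}$ from Theorem~\ref{thm:confidence} to get optimism, split $\btilw_{t-1}\opt-\bwstar$ through $\btilw_{t-1}$, and apply Cauchy--Schwarz in the $\btilV_{t-1}$ norm. Your explicit remarks on index alignment and on the uniformity-in-$t$ of the confidence event are accurate and match what the paper uses implicitly.
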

\begin{proof}
Let $\btilw_{t-1}\ofu$ be the FD-sketched RLS estimate of OFUL (Algorithm~\ref{alg:skoful}). Recall that the optimal action at time $t$ is $\bxstar_t = \argmax_{\bx \in D_t} \bx\tp \bwstar$, whereas
\[
	\big(\bx_t, \btilw_{t-1}\ofu\big) = \argmax_{(\bx, \bw) \in D_t \times \tilde{C}_{t-1}} \bx\tp \bw~.
\]
We use these facts to bound the instantaneous regret,
\begin{align*}
    \big(\bxstar_t - \bx_t\big)\tp \bwstar
  &\leq \bx_t\tp \btilw\ofu_{t-1} - \bx_t\tp \bwstar\\
  &= \bx_t\tp \pr{\btilw\ofu_{t-1} - \bwstar}\\
  &= \bx_t\tp \pr{\btilw\ofu_{t-1} - \btilw_{t-1}} + \bx_t\tp \pr{\btilw_{t-1} - \bwstar}\\
  &\leq \|\bx_t\|_{\btilV_{t-1}^{-1}} \Big( \|\btilw\ofu_{t-1} - \btilw_{t-1}\|_{\btilV_{t-1}} + \|\btilw_{t-1} - \bwstar\|_{\btilV_{t-1}} \Big) \tag{by Cauchy-Schwartz}\\
  &\leq 2 \tilde{\beta}_{t-1}(\delta) \|\bx_t\|_{\btilV_{t-1}^{-1}} \tag{by Theorem~\ref{thm:confidence}}
\end{align*}
concluding the proof.
\end{proof}
Now we are ready to prove the regret bound. 
\begin{proof}[Proof of Theorem~\ref{thm:regret}.]
Bounding the regret using Lemma~\ref{lem:inst_regret} gives
\begin{align*}
  R_T &= \sum_{t=1}^T \pr{\bxstar_t - \bx_t}\tp \bwstar\\
      &\leq 2 \sum_{t=1}^T \min\cbr{L S, \tilde{\beta}_{t-1}(\delta) \|\bx_t\|_{\btilV_{t-1}^{-1}}} \tag{since ${\dt \max_{t=1,\ldots,T}\max_{\bx \in D_t} |\bx\tp \bwstar| \leq L S}$ by Cauchy-Schwartz}\\
      &\leq 2 \sum_{t=1}^T \tilde{\beta}_{t-1}(\delta) \min\cbr{\frac{L}{\sqrt{\lambda}}, \|\bx_t\|_{\btilV_{t-1}^{-1}}} \tag{since ${\dt \min_{t=0,\ldots,T-1}\min_{\delta \in [0,1]} \tilde{\beta}_t(\delta) \geq S \sqrt{\lambda}}$}\\
      &\leq 2 \pr{ \max_{t=0,\ldots,T-1}\tilde{\beta}_t(\delta) } \sum_{t=1}^T \min\cbr{\frac{L}{\sqrt{\lambda}}, \|\bx_t\|_{\btilV_{t-1}^{-1}}}\\
      &\leq 2 \max\cbr{1, \frac{L}{\sqrt{\lambda}}} \pr{\max_{t=0,\ldots,T-1} \tilde{\beta}_t(\delta)} \sum_{t=1}^T \min\cbr{1, \|\bx_t\|_{\btilV_{t-1}^{-1}}}\\
      &\leq 2 \max\cbr{1, \frac{L}{\sqrt{\lambda}}} \pr{\max_{t=0,\ldots,T-1} \tilde{\beta}_t(\delta)} \sqrt{T \sum_{t=1}^T \min\cbr{1, \|\bx_t\|_{\btilV_{t-1}^{-1}}^2} } \tag{by Cauchy-Schwartz}~.
\end{align*}
Now we finish by further bounding the terms in the above. In particular, we bound $\tilde{\beta}_t(\delta)$ by~\eqref{eq:beta_tilde_bound}
\[
\max_{t=0,\ldots,T-1} \tilde{\beta}_t(\delta) \defOtilde R \sqrt{\big(m + d \ln(1 + \ve_m)\big) \pr{1 + \ve_m}} + S \sqrt{\lambda} \patch{\pr{1 + \frac{1}{\lambda}}} \pr{1 + \ve_m}
\]
while the bound on the summation term uses Lemma~\ref{lem:potential},
\[
  \sqrt{\sum_{t=1}^T \min\cbr{1, \|X_t\|_{\btilV_{t-1}^{-1}}^2}} \defOtilde \sqrt{\pr{1 + \ve_m} \big( d \ln\pr{1 + \ve_m} + m }\big)~.
\]
Then, using $M_{\lambda} = \max\cbr{1, \frac{L}{\sqrt{\lambda}}}$ and $\mtil = m + d \ln(1 + \ve_m)$,
\begin{align*}
R_T &\defOtilde M_{\lambda} \sqrt{T} \pr{R \sqrt{\mtil \pr{1 + \ve_m}} + S \sqrt{\lambda} \patch{\pr{1 + \frac{1}{\lambda}}} \pr{1 + \ve_m}} \sqrt{ \mtil \pr{1 + \ve_m} }\\
&\defOtilde M_{\lambda} \sqrt{T} \Big( R\,\mtil \pr{1 + \ve_m} + S \sqrt{\lambda} \patch{\pr{1 + \frac{1}{\lambda}}} \pr{1 + \ve_m}^{\frac{3}{2}} \sqrt{ \mtil } \Big)\\
&\defOtilde M_{\lambda} \pr{1 + \ve_m}^{\frac{3}{2}} \mtil \pr{R + S \sqrt{\lambda} \patch{\pr{1 + \frac{1}{\lambda}}}} \sqrt{T}
\end{align*}
which completes the proof.
\end{proof}
\begin{proof}[Proof of Theorem~\ref{thm:regret_star}]
Recall that
  \begin{align*}
    \Delta \leq \min_{t=1,\dots,T} \pr{\bxstar_t - \bx_t}\tp \bwstar~.
  \end{align*}
Similarly to the proof of Theorem~\ref{thm:regret}, we use Lemma~\ref{lem:inst_regret} to bound the instantaneous regret.
However, we first use the gap assumption to bound the regret in terms of the sum of squared instantaneous regrets,
  \begin{align}
    R_T &= \sum_{t=1}^T \pr{\bxstar_t - \bx_t}\tp \bwstar \nonumber\\
        &\leq \frac{1}{\Delta} \sum_{t=1}^T \pr{ \pr{\bxstar_t - \bx_t}\tp \bwstar }^2 \nonumber\\
        &\leq \frac{2}{\Delta} \sum_{t=1}^T \min\cbr{ 2 L^2 S^2, \tilde{\beta}_{t-1}(\delta)^2 \|\bx_t\|_{\btilV_{t-1}^{-1}}^2 } \label{eq:gap_regret_LS}\\
        &\leq \frac{2}{\Delta} \pr{\max_{t=0,\ldots,T-1} \tilde{\beta}_t(\delta)^2} \sum_{t=1}^T \min\cbr{ \frac{2 L^2}{\lambda}, \|\bx_t\|_{\btilV_{t-1}^{-1}}^2 } \label{eq:gap_regret_beta_sq_lb}\\
        &\leq \frac{2}{\Delta} \max\cbr{1, \frac{2 L^2}{\lambda}} \pr{\max_{t=0,\ldots,T-1} \tilde{\beta}_t(\delta)^2} \sum_{t=1}^T \min\cbr{ 1, \|\bx_t\|_{\btilV_{t-1}^{-1}}^2 } \label{eq:constituent}
  \end{align}
where~\eqref{eq:gap_regret_beta_sq_lb} holds because $\min_t\min_{\delta}\tilde{\beta}_t(\delta)^2 \geq S^2 \lambda$. Inequality~\eqref{eq:gap_regret_LS} holds because
\begin{align*}
  \pr{ \pr{\bxstar_t - \bx_t}\tp \bwstar }^2 &\leq 2 (\bxstartp_t \bwstar)^2 + 2 (\bx_t\tp \bwstar)^2
\\
                                           &\leq 4 L^2 S^2 \tag{by Cauchy-Schwartz}
\end{align*}
and because of Lemma~\ref{lem:inst_regret}.

We now finish bounding the regret by further bounding the individual terms in~\eqref{eq:constituent}. In particular, we use~\eqref{eq:beta_tilde_bound} to bound $\tilde{\beta}_t(\delta)$ as follows
\begin{align*}
\max_{t=0,\ldots,T-1} \tilde{\beta}_t(\delta)^2 &\defOtilde R^2 \pr{\sqrt{\big(m + d \ln(1 + \ve_m)\big) \pr{1 + \ve_m}} + S \sqrt{\lambda} \patch{\pr{1 + \frac{1}{\lambda}}} \pr{1 + \ve_m}}^2\\
&\defOtilde R^2 \big(m + d \ln(1 + \ve_m)\big) \pr{1 + \ve_m} + S^2 \lambda \patch{\pr{1 + \frac{1}{\lambda}}^2} \pr{1 + \ve_m}^2~.
\end{align*}
Lemma~\ref{lem:potential} gives
\[
  \sum_{t=1}^T \min\cbr{1, \|X_t\|_{\btilV_{t-1}^{-1}}^2} \defOtilde \pr{1 + \ve_m} \Big( m \ln(T) + d \ln\pr{1 + \ve_m} \Big)~.
\]
Then, using again $M_{\lambda} = \max\cbr{1, \frac{L}{\sqrt{\lambda}}}$ and $\mtil = m + d \ln(1 + \ve_m)$,
\begin{align*}
  R_T &\defOtilde \frac{M_{\lambda}^2}{\Delta} \pr{R^2 \mtil \pr{1 + \ve_m} + S^2 \lambda \patch{\pr{1 + \frac{1}{\lambda}}^2} \pr{1 + \ve_m}^2}
  \pr{1 + \ve_m} \mtil \\
      &\defOtilde \frac{M_{\lambda}^2}{\Delta} \pr{\mtil R^2 + S^2 \lambda \patch{\pr{1 + \frac{1}{\lambda}}^2}}
  \pr{1 + \ve_m}^3 \mtil \\
      &\defOtilde \frac{M_{\lambda}^2}{\Delta} \pr{R^2 + S^2 \lambda \patch{\pr{1 + \frac{1}{\lambda}}^2}}
  \pr{1 + \ve_m}^3 \mtil^2
\end{align*}
concluding the proof.
\end{proof}

\subsection{Proof of the regret bound for Sketched Linear TS (Theorem~\ref{thm:regret_ts})}
\label{sec:thompson_proof}
Here $\btilw_{t-1}\ts$ is used to denote the FD-sketched RLS estimate of linear TS (Algorithm~\ref{alg:sketched_ts}). As in~\citep{abeille2017linear}, we split the regret as follows
\begin{align}
  R_T &= \sum_{t=1}^T \big(\bxstar_t - \bx_t\big)\tp \bwstar \nonumber\\
      &= \sum_{t=1}^T \pr{ \bxstartp_t \bwstar - \bx_t\tp \btilw_{t-1}\ts } + \sum_{t=1}^T \pr{ \bx_t\tp \btilw_{t-1}\ts - \bx_t\tp \bwstar } \nonumber\\
      &= \sum_{t=1}^T \pr{ J_t(\bwstar) - J_t(\btilw_{t-1}\ts) } + \sum_{t=1}^T \pr{ \bx_t\tp \btilw_{t-1}\ts - \bx_t\tp \bwstar } \label{eq:ts_regret_decomposition}
\end{align}
where
\[
J_t(\bw) = \max_{\bx \in D_t} \bx\tp \bw
\]
is an ``optimistic'' reward function.
Most of the proof is concerned with bounding the first term in~\eqref{eq:ts_regret_decomposition}. The second term is instead obtained in way similar to the analysis of OFUL. Fix any $\delta \in (0, 1)$, let $\delta' = \frac{\delta}{4T}$, and introduce events
\begin{align*}
  \tilde{E}_t &\equiv \Big\{\|\btilw_s - \bwstar\|  \leq \tilde{\beta}_s(\delta'), \ s = 1, \ldots, t\Big\}
\\
  \tilde{E}_t\ts &\equiv \Big\{\|\btilw_s\ts - \btilw_s\| \leq \tilde{\gamma}_s(\delta'), \ s = 1, \ldots, t\Big\}
\end{align*}
and
$
E_t \equiv \tilde{E}_t \cap \tilde{E}_t\ts
$.
Observe that, by definition,
\begin{equation}
\label{eq:ts_event_inclusion}
\tilde{E}_T \subset \cdots \subset \tilde{E}_1 \qquad\text{ and }\qquad \tilde{E}_T\ts \subset \cdots \subset \tilde{E}_1\ts
\end{equation}
We also use the following lower bound on the probability of $E_T$.
\begin{lemma}
\label{lem:ts_prob_bounds}
${\dt
\P\pr{ E_T } \geq 1 - \frac{\delta}{2}
}$.
\end{lemma}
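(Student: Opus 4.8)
The goal is to lower-bound $\P(E_T)$ where $E_T = \tilde E_T \cap \tilde E_T\ts$. The plan is to bound $\P(E_T^c) \le \P(\tilde E_T^c) + \P\big((\tilde E_T\ts)^c\big)$ by a union bound and to show each of the two probabilities is at most $\delta/4$, giving $\P(E_T) \ge 1 - \delta/2$. First I would handle $\tilde E_T$. By the nesting~\eqref{eq:ts_event_inclusion}, $\tilde E_T^c = \bigcup_{t=1}^T \big\{\|\btilw_t - \bwstar\|_{\btilV_t} > \tilde\beta_t(\delta')\big\}$, but rather than union-bounding over $t$ I would invoke Theorem~\ref{thm:confidence} directly: that theorem asserts that with probability at least $1-\delta'$, \emph{for all $t$ simultaneously}, $\bwstar \in \widetilde C_t$, i.e. $\|\btilw_t-\bwstar\|_{\btilV_t} \le \tilde\beta_t(\delta')$ --- this is because its proof goes through the uniform-in-$t$ self-normalized martingale bound of Theorem~\ref{thm:martingales}. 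Since $\btilV_t \succeq \lambda\bI$, the event in $\tilde E_t$ (stated with the Euclidean norm) is implied by $\|\btilw_t - \bwstar\|_{\btilV_t} \le \tilde\beta_t(\delta')\sqrt\lambda$ --- here I should be careful about whether the definition of $\tilde E_t$ uses $\|\cdot\|$ or $\|\cdot\|_{\btilV_t}$; in either reading the bad event is contained in the complement of the Theorem~\ref{thm:confidence} event (possibly after rescaling $\tilde\beta$ by constants, which is absorbed in $\defOtilde$), so $\P(\tilde E_T^c) \le \delta' = \delta/(4T) \le \delta/4$.

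Next I would handle $\tilde E_T\ts$. Conditionally on the filtration up to round $t$, the difference $\btilw_t\ts - \btilw_t = \tilde\beta_t(\delta')\,\btilV_t^{-1/2}\boldeta_t$ (reading off Algorithm~\ref{alg:sketched_ts}), so $\|\btilw_t\ts - \btilw_t\| \le \tilde\beta_t(\delta')\,\|\btilV_t^{-1/2}\|_{\mathrm{op}}\,\|\boldeta_t\| \le \tilde\beta_t(\delta')\lambda^{-1/2}\|\boldeta_t\|$ since $\btilV_t^{-1} \preceq \lambda^{-1}\bI$. By the concentration property of the TS-sampling distribution (Definition~\ref{def:ts_dist}, applied with confidence $\delta'$), $\P\big(\|\boldeta_t\| \le \sqrt{cd\ln(c'd/\delta')}\big) \ge 1-\delta'$, and since the $\boldeta_t$ are i.i.d., a union bound over $t=1,\dots,T$ gives $\|\boldeta_t\| \le \sqrt{cd\ln(c'd/\delta')}$ for all $t$ with probability at least $1 - T\delta' = 1 - \delta/4$. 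On this event, $\|\btilw_t\ts - \btilw_t\| \le \tilde\beta_t(\delta')\sqrt{cd\ln(c'd/\delta')}\cdot(\text{factor }\lambda^{-1/2}\text{ or }1) = \tilde\gamma_t(\delta')$ (up to the $M_\lambda$-type constant folded into the notation), matching the definition~\eqref{eq:gamma_tilde_bound} and hence $\tilde E_t\ts$ holds for all $t$; thus $\P\big((\tilde E_T\ts)^c\big) \le \delta/4$. Note this step needs the \emph{concentration} half of Definition~\ref{def:ts_dist} only, not the anti-concentration half --- the latter is used elsewhere for the optimism term in~\eqref{eq:ts_regret_decomposition}.

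Combining, $\P(E_T^c) \le \P(\tilde E_T^c) + \P\big((\tilde E_T\ts)^c\big) \le \delta/4 + \delta/4 = \delta/2$, so $\P(E_T) \ge 1 - \delta/2$, as claimed. The main obstacle --- really just a bookkeeping point --- is the norm mismatch: the events $\tilde E_t, \tilde E_t\ts$ are stated with the plain Euclidean norm $\|\cdot\|$ whereas Theorem~\ref{thm:confidence} controls $\|\cdot\|_{\btilV_t}$ and $\btilV_t^{-1/2}$ is what appears in the TS update; one must consistently use $\btilV_t \succeq \lambda\bI$ (equivalently $\btilV_t^{-1} \preceq \lambda^{-1}\bI$) to pass between them, and verify that the resulting constants are exactly what was baked into the definitions of $\tilde\beta_t$ and $\tilde\gamma_t$ (or are harmlessly absorbed by $\widetilde{\scO}(\cdot)$). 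Everything else is a routine union bound.
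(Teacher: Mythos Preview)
Your plan is correct and coincides with the argument the paper defers to \citep[Lemma~1]{abeille2017linear}: a union bound splitting $E_T^c$ into the RLS part (handled by the uniform-in-$t$ confidence ellipsoid of Theorem~\ref{thm:confidence}, contributing at most $\delta'\le\delta/4$) and the TS-sampling part (handled by the concentration half of Definition~\ref{def:ts_dist} applied to each $\boldeta_t$ and a union bound over $t$, contributing at most $T\delta'=\delta/4$). The norm mismatch you worried about is simply a typo in the statement of $\tilde E_t,\tilde E_t\ts$: as their use later in the proof makes clear (e.g., ``$E_{t-1}$ implies $\btilw_{t-1}\ts\in\tilde C_{t-1}\ts$'' and the Cauchy--Schwarz steps in the $r_t\rls$ bound), the intended norm is $\|\cdot\|_{\btilV_s}$, under which $\|\btilw_s\ts-\btilw_s\|_{\btilV_s}=\tilde\beta_s(\delta')\|\boldeta_s\|$ exactly, so no stray $\lambda^{-1/2}$ factors arise and nothing needs to be absorbed into $\widetilde{\scO}$.
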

\begin{proof}
The proof is identical to the proof of \citep[Lemma~1]{abeille2017linear}, the only difference being that we use the confidence ellipsoid defined in Theorem~\ref{thm:confidence}.
\end{proof}
We study the regret when $E_T$ occurs,
\begin{align}
 \ind{E_T} R_T &= \sum_{t=1}^T \ind{E_T} \pr{ J_t(\bwstar) - J_t(\btilw_{t-1}\ts) } + \sum_{t=1}^T \ind{E_T} \pr{ \bx_t\tp \btilw_{t-1}\ts - \bx_t\tp \bwstar } \nonumber \\
                      &\leq \sum_{t=1}^T \ind{E_{t-1}} \pr{ J_t(\bwstar) - J_t(\btilw_{t-1}\ts) } + \sum_{t=1}^T \ind{E_{t-1}} \pr{ \bx_t\tp \btilw_{t-1}\ts - \bx_t\tp \bwstar } \tag{using~\eqref{eq:ts_event_inclusion}}\\
                      &= \sum_{t=1}^T r_t\ts + \sum_{t=1}^T r_t\rls  \label{eq:ts_regret_whp}
\end{align}
where we introduced the notation
\begin{align*}
  r_t\ts = \ind{E_{t-1}} \pr{ J_t(\bwstar) - J_t(\btilw_{t-1}\ts) }
\qquad\text{and}\qquad
  r_t\rls = \ind{E_{t-1}} \pr{ \bx_t\tp \btilw_{t-1}\ts - \bx_t\tp \bwstar }~.
\end{align*}
First we focus on $r_t\ts$, and get that
\begin{align*}
  r_t\ts &= \pr{J_t(\bwstar) - J_t(\btilw_{t-1}\ts)} \ind{E_{t-1}}\\
         &\leq \pr{J_t(\bwstar) - \inf_{\bw \in \tilde{C}_{t-1}\ts} J_t(\bw)} \ind{E_{t-1}} \tag{because $E_{t-1}$ implies $\btilw_{t-1}\ts \in \tilde{C}_{t-1}\ts$}\\
         &\leq \pr{J_t(\bwstar) - \inf_{\bw \in \tilde{C}_{t-1}\ts} J_t(\bw)} \ind{\tilde{E}_{t-1}}~. \tag{using \eqref{eq:ts_event_inclusion}}
\end{align*}
Consider the following set of ``optimistic'' coefficients $\bw$ such that $J_t(\bwstar) \leq J_t(\bw)$ and, moreover, $\bw$ belongs to the sketched TS confidence ellipsoid,
\[
W\optts_t \equiv \cbr{\bw \in \reals^d ~:~ J_t(\bwstar) \leq J_t(\bw)} \cap \tilde{C}_t\ts~.
\]
Then, for $\btilw\ts \in W_{t-1}\optts$
\begin{equation}
  \label{eq:ts_rt_expectation}
  r_t\ts \leq \pr{J_t(\btilw\ts) - \inf_{\bw \in \tilde{C}_{t-1}\ts} J_t(\bw)} \ind{\tilde{E}_{t-1}}~.
\end{equation}
We now use \citep[Proposition~3 and Lemma~2]{abeille2017linear} (restated below here for convenience) to argue about the convexity of $J$ and relate its gradient to the chosen action.
\begin{proposition}
  \label{prop:ts_J_is_convex}
  For any finite set $D$ of actions $\bx$ such that $\norm{\bx}\le 1$, $\max_{\bx \in D} \bx\tp \bw$ is convex on $\reals^d$. Moreover, it is continuous with continuous first derivatives (except for a zero-measure set w.r.t.\ the Lebesgue measure).
\end{proposition}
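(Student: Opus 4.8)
The plan is to derive all three claims from the single observation that $J_t(\bw) = \max_{\bx \in D} \bx\tp \bw$ is a finite pointwise maximum of linear functions --- i.e.\ a piecewise-linear function --- and then invoke elementary convex analysis.

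\emph{Convexity and continuity.} First I would note that each map $\bw \mapsto \bx\tp\bw$ is linear, hence convex, and that a pointwise maximum of finitely many convex functions is convex; this gives the first claim directly. Since $D$ is finite, $J_t$ is finite-valued on all of $\reals^d$, so as a finite convex function it is automatically locally Lipschitz and in particular continuous, which handles the continuity part of the second claim.

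\emph{Continuous differentiability off a null set.} I would assume w.l.o.g.\ that $D$ consists of distinct vectors (replacing $D$ by its set of distinct elements does not change $J_t$). Let $N = \bigcup \{\bw \in \reals^d : (\bx - \bx')\tp \bw = 0\}$, where the union runs over all unordered pairs of distinct $\bx, \bx' \in D$. Since $D$ is finite, $N$ is a finite union of hyperplanes, hence closed with empty interior and Lebesgue measure zero. I would then show $J_t \in C^1(\reals^d \setminus N)$: fixing $\bw_0 \notin N$, for any two distinct $\bx, \bx' \in D$ we have $\bx\tp \bw_0 \neq \bx'\tp\bw_0$, so the maximizer $\bx^\star(\bw_0) = \argmax_{\bx\in D} \bx\tp\bw_0$ is unique and its value strictly exceeds $\bx\tp\bw_0$ for every other $\bx \in D$. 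By continuity of the finitely many linear functions $\bw \mapsto \bx\tp\bw$, this strict inequality persists on a neighborhood of $\bw_0$, so there $J_t(\bw) = \bx^\star(\bw_0)\tp\bw$; hence $J_t$ is affine near $\bw_0$ and $\nabla J_t$ equals the constant vector $\bx^\star(\bw_0)$ on that neighborhood. Thus $\nabla J_t$ exists and is locally constant --- in particular continuous --- on the open set $\reals^d\setminus N$, whose complement is Lebesgue-null, which is precisely the second claim.

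The only step requiring genuine care --- and the one I would write out rather than assert --- is the passage from ``$\bx^\star(\bw_0)$ is the unique maximizer at $\bw_0$'' to ``$\bx^\star(\bw_0)$ is the unique maximizer throughout a neighborhood of $\bw_0$'', which relies on the strict inequality together with continuity of finitely many functions. It is elementary but should not be hand-waved; everything else is a routine invocation of standard facts about finite convex piecewise-linear functions, so I do not expect any real obstacle here.
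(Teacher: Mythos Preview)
Your argument is correct: the three claims all follow from the observation that $J_t$ is a finite piecewise-linear function, and your treatment of the differentiability part---identifying the bad set $N$ as a finite union of hyperplanes and showing $J_t$ is locally affine on its complement---is clean and complete. The paper does not actually prove this proposition; it is restated verbatim from \cite{abeille2017linear} (Proposition~3 there), so there is no ``paper's own proof'' to compare against. Your self-contained elementary argument is therefore a strict addition rather than an alternative route.
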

\begin{lemma}
  \label{lem:ts_gradient_to_action}
  For any $\bw \in \reals^d$, we have
\[
	\nabla \Big( \max_{\bx \in D} \bx\tp\bw \Big) = \argmax_{\bx\in D} \bx\tp\bw
\]
(except for a zero-measure w.r.t.\ the Lebesgue measure).
\end{lemma}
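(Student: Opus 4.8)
The plan is to exploit that $J_t(\bw)=\max_{\bx\in D}\bx\tp\bw$ is a pointwise maximum of \emph{finitely many} linear functionals: it is convex and piecewise linear (as already recorded in Proposition~\ref{prop:ts_J_is_convex}), and wherever its gradient exists it must coincide with the coefficient vector of the locally active linear piece. So I would first isolate the "degenerate" set on which different actions tie for the maximum,
\[
\sN = \bigcup_{\substack{\bx,\bx'\in D \\ \bx\neq\bx'}} \cbr{\bw\in\reals^d ~:~ (\bx-\bx')\tp\bw = 0}~,
\]
which is a finite union of hyperplanes and hence has Lebesgue measure zero. Any $\bw$ at which $\argmax_{\bx\in D}\bx\tp\bw$ fails to be a singleton lies in $\sN$, so it suffices to prove the identity for every $\bw\notin\sN$, i.e. for every $\bw$ admitting a unique maximizer.

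Next I would fix such a $\bw$ and let $\bx^\star=\argmax_{\bx\in D}\bx\tp\bw$, which is well defined. Because $D$ is finite there is a strict margin $\gamma = \bx^{\star\top}\bw - \max_{\bx\in D\setminus\{\bx^\star\}}\bx\tp\bw > 0$, and by continuity of $\bw'\mapsto\bx\tp\bw'$ for each of the finitely many $\bx\in D$ this margin survives on a small ball: there is $\epsilon>0$ such that $\bx^\star$ remains the unique maximizer for every $\bw'\in\sB(\bw,\epsilon)$. Hence $J_t$ agrees on $\sB(\bw,\epsilon)$ with the single linear map $\bw'\mapsto\bx^{\star\top}\bw'$, so $J_t$ is differentiable at $\bw$ with $\nabla J_t(\bw)=\bx^\star=\argmax_{\bx\in D}\bx\tp\bw$. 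Since $\reals^d\setminus\sN$ has full measure, this establishes the claim off a zero-measure set.

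This is essentially the entire argument, and there is no serious obstacle. The one point that deserves a line of care is the persistence of the unique maximizer under perturbations of $\bw$; this is exactly where finiteness of $D$ enters, turning "the maximizer is unique" into "unique with a uniform margin $\gamma>0$", which in turn yields the local linearity of $J_t$ that makes the gradient computation trivial. (For an infinite action set one would instead invoke compactness and a uniform-continuity argument; here it is immediate.) Combining the two observations — $J_t$ is locally linear, hence differentiable, at each $\bw\notin\sN$, and its gradient there is the active action — gives the statement.
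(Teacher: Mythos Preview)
Your argument is correct. The paper does not actually prove this lemma: it merely restates \citep[Lemma~2]{abeille2017linear} for convenience and cites that reference for the proof. Your write-up supplies exactly the standard details one would expect --- identifying the tie set $\sN$ as a finite union of hyperplanes (hence Lebesgue-null), using finiteness of $D$ to get a strict margin $\gamma>0$ at any $\bw\notin\sN$, and concluding that $J_t$ is locally affine there with gradient equal to the unique maximizer. One cosmetic remark: your set $\sN$ is a slight over-cover (it also contains points where two \emph{non-maximizing} actions tie), but since you only need the inclusion ``non-singleton argmax $\Rightarrow \bw\in\sN$'' this is harmless, and you state it that way.
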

Relying on the two results above, we can proceed as follows.
Introduce $J_t^{/L}(\bw) = J_t(\bw) / L = \max_{\bx \in D_t} (\bx/L)\tp \bw$.
Then by Proposition~\ref{prop:ts_J_is_convex}, $J_t^{/L}(\bw)$ is convex for $\bw \in \reals^d$ since $\|\bx/L\| \leq 1$.
Then, by letting $\bxstar(\btilw\ts) = \nabla J_t(\btilw\ts)$, for any $\btilw\ts \in W_{t-1}\optts$ we have
\begin{align*}
  J_t(\btilw\ts) - \inf_{\bw \in \tilde{C}_{t-1}\ts} J_t(\bw)
  &= L \pr{J_t^{/L}(\btilw\ts) - \inf_{\bw \in \tilde{C}_{t-1}\ts} J_t^{/L}(\bw)}\\
  &\leq L \sup_{\bw \in \tilde{C}_{t-1}\ts}\cbr{ \nabla J_t^{/L}(\btilw\ts)\tp \pr{\btilw\ts - \bw} }\\
&= L \sup_{\bw \in \tilde{C}_{t-1}\ts}\cbr{ \pr{\frac{\bxstar(\btilw\ts)}{L}} \tp \pr{\btilw\ts - \bw} } \\
&\leq \|\bxstar(\btilw\ts)\|_{\btilV_{t-1}^{-1}} \sup_{\bw \in \tilde{C}_{t-1}\ts} \|\btilw\ts - \bw\|_{\btilV_{t-1}} \tag{by Cauchy-Schwartz}\\
&\leq 2 \tilde{\gamma}_{t-1}(\delta') \|\bxstar(\btilw\ts)\|_{\btilV_{t-1}^{-1}}
\end{align*}
where the last inequality holds for all $\btilw\ts \in \tilde{C}_{t-1}\ts$ and by the triangle inequality. Substituting this into~\eqref{eq:ts_rt_expectation}, and taking expectation with respect to $\btilw\ts$ yields
\begin{equation}
\label{eq:ts_r_t_bounded_by_cond_E}
  r_t\ts \leq 2 \tilde{\gamma}_{t-1}(\delta') \E\left[\|\bxstar(\btilw\ts)\|_{\btilV_{t-1}^{-1}}  \ind{\tilde{E}_{t-1}} \,\middle|\, \btilw\ts \in W\optts_{t-1},\,\sF_{t-1}\right]~.
\end{equation}
where we use $\sF_t$ to denote the $\sigma$-algebra generated by the random variables $\eta_1,\boldeta_1,\dots,\eta_{t-1},\boldeta_{t-1}$.
Now we further upper bound $r_t\ts$ while bounding the probability of event $\btilw\ts \in W_{t-1}\optts$ occurring in~\eqref{eq:ts_r_t_bounded_by_cond_E}. This is done in the following lemma, whose proof (omitted here) is identical to the proof of \cite[Lemma~3]{abeille2017linear}, where ellipsoids are replaced by their sketched counterparts.
\begin{lemma}
\label{lem:ts_p_lb}
Assume that $\sD\ts$ is a TS-sampling distribution with anti-concentration parameter $p$. Then, for $\boldeta \sim \sD\ts$ we have that
  \[
  \P\pr{\btilw\ts \in W_{t-1}\optts \,\middle|\, \tilde{E}_{t-1}, \sF_{t-1}} \geq \frac{p}{2} \qquad t=1,\ldots,T~.
  \]
\end{lemma}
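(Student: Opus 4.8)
The plan is to reproduce the argument of \cite[Lemma~3]{abeille2017linear} essentially verbatim, replacing the correlation matrix $\bV_{t-1}$ and the radii $\beta_{t-1}$, $\gamma_{t-1}$ by their sketched versions $\btilV_{t-1}$, $\tilde{\beta}_{t-1}$, $\tilde{\gamma}_{t-1}$. The only features of \ac{FD} sketching that are used are: (i) $\btilV_{t-1} = \bS_{t-1}\tp\bS_{t-1} + \lambda\bI$ is positive definite since $\lambda > 0$, so $\btilV_{t-1}^{-1/2}$ is well defined and $\norm{\cdot}_{\btilV_{t-1}}$, $\norm{\cdot}_{\btilV_{t-1}^{-1}}$ are genuine norms; and (ii) Theorem~\ref{thm:confidence} guarantees that on $\tilde{E}_{t-1}$ one has $\norm{\btilw_{t-1} - \bwstar}_{\btilV_{t-1}} \le \tilde{\beta}_{t-1}(\delta')$. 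All remaining ingredients are properties of the sampling distribution $\sD\ts$ and are untouched by sketching, which is why the proof carries over unchanged.

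Conditionally on $\sF_{t-1}$ the quantities $\btilw_{t-1}$, $\btilV_{t-1}$, $D_t$, $\bxstar_t$ are fixed, the event $\tilde{E}_{t-1}$ is determined, and the perturbed estimate $\btilw\ts = \btilw_{t-1} + \tilde{\beta}_{t-1}(\delta')\,\btilV_{t-1}^{-1/2}\boldeta$ depends only on the fresh draw $\boldeta \sim \sD\ts$, which is independent of $\sF_{t-1}$. I would then exhibit two events whose intersection, on $\tilde{E}_{t-1}$, forces $\btilw\ts \in W_{t-1}\optts$. For membership in $\tilde{C}_{t-1}\ts$: on $B = \big\{\norm{\boldeta} \le \sqrt{c\,d\ln(c'd/\delta')}\big\}$ we get $\norm{\btilw\ts - \btilw_{t-1}}_{\btilV_{t-1}} = \tilde{\beta}_{t-1}(\delta')\norm{\boldeta} \le \tilde{\gamma}_{t-1}(\delta')$. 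For optimism ($J_t(\bwstar) \le J_t(\btilw\ts)$): since $J_t(\btilw\ts) \ge \bxstartp_t \btilw\ts$ and $J_t(\bwstar) = \bxstartp_t \bwstar$, it suffices that $\bxstartp_t(\btilw\ts - \bwstar) \ge 0$; writing $\btilw\ts - \bwstar = (\btilw_{t-1}-\bwstar) + \tilde{\beta}_{t-1}(\delta')\btilV_{t-1}^{-1/2}\boldeta$, bounding $|\bxstartp_t(\btilw_{t-1}-\bwstar)| \le \norm{\bxstar_t}_{\btilV_{t-1}^{-1}}\,\norm{\btilw_{t-1}-\bwstar}_{\btilV_{t-1}} \le \tilde{\beta}_{t-1}(\delta')\norm{\bxstar_t}_{\btilV_{t-1}^{-1}}$ on $\tilde{E}_{t-1}$ (Cauchy--Schwarz and Theorem~\ref{thm:confidence}), and observing that $\bxstartp_t\btilV_{t-1}^{-1/2}\boldeta = \norm{\bxstar_t}_{\btilV_{t-1}^{-1}}\,\bu\tp\boldeta$ with $\bu = \btilV_{t-1}^{-1/2}\bxstar_t / \norm{\bxstar_t}_{\btilV_{t-1}^{-1}}$ a fixed unit vector, one finds that the event $A = \{\bu\tp\boldeta \ge 1\}$ is enough.

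It then remains to apply Definition~\ref{def:ts_dist}: since $\boldeta$ is independent of $\sF_{t-1}$ (hence of $\tilde{E}_{t-1}$) and $\bu$ is $\sF_{t-1}$-measurable, anti-concentration gives $\P(A \mid \tilde{E}_{t-1}, \sF_{t-1}) \ge p$ and concentration gives $\P(B^c \mid \tilde{E}_{t-1}, \sF_{t-1}) \le \delta'$; a union bound then yields $\P\big(\btilw\ts \in W_{t-1}\optts \mid \tilde{E}_{t-1}, \sF_{t-1}\big) \ge p - \delta' \ge p/2$, the last step using $\delta' = \delta/(4T) \le p/2$. I expect the fussiest part to be the conditioning bookkeeping: being precise about which $\sigma$-algebra makes $\btilw_{t-1}$, $\btilV_{t-1}$ and $\bxstar_t$ measurable while keeping $\boldeta$ independent, so that the anti-concentration bound (stated for a fixed unit vector) applies to $\bu$, exactly as handled in \cite{abeille2017linear}. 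The sketching itself poses no genuinely new difficulty, since Theorem~\ref{thm:confidence} supplies precisely the confidence bound the original argument relies on.
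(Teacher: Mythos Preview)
Your proposal is correct and is precisely what the paper does: it omits the proof, stating that it is identical to that of \cite[Lemma~3]{abeille2017linear} with the ellipsoids (i.e., $\bV_{t-1}$, $\beta_{t-1}$, $\gamma_{t-1}$) replaced by their sketched counterparts and Theorem~\ref{thm:confidence} supplying the needed confidence bound. Your write-up faithfully reconstructs that argument, including the anti-concentration/concentration split and the final union bound $p-\delta'\ge p/2$.
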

We now proceed with the main argument of the proof. Using $g(\btilw\ts) = \|\bxstar(\btilw\ts)\|_{\btilV_{t-1}^{-1}}$,
\begin{align*}
	\E\br{ g(\btilw\ts)  \,\middle|\, \tilde{E}_{t-1}, \sF_{t-1}  }
&\geq
	\E\br{ g(\btilw\ts) \ind{\btilw\ts \in W\optts_{t-1}}  \,\middle|\, \tilde{E}_{t-1}, \sF_{t-1} }
\\&=
	\E\br{ g(\btilw\ts)  \,\middle|\, \btilw\ts \in W\optts_{t-1}, \tilde{E}_{t-1}, \sF_{t-1} }
	\P\pr{\btilw\ts \in W\optts_{t-1} \,\middle|\, \tilde{E}_{t-1}, \sF_{t-1}}
\\&\ge
	\E\br{ g(\btilw\ts)  \,\middle|\, \btilw\ts \in W\optts_{t-1}, \tilde{E}_{t-1}, \sF_{t-1} } \frac{p}{2} \tag{by Lemma~\ref{lem:ts_p_lb}.}
\end{align*}
The above combined with~\eqref{eq:ts_r_t_bounded_by_cond_E} implies that
\begin{align}
	  r_t\ts
&\le
	2 \tilde{\gamma}_{t-1}(\delta') \E\left[g(\btilw\ts) \ind{\tilde{E}_{t-1}} \,\middle|\, \btilw\ts \in W\optts_{t-1}, \sF_{t-1}\right]
\nonumber
\\&=
\nonumber
	2 \tilde{\gamma}_{t-1}(\delta') \E\left[g(\btilw\ts) \,\middle|\, \btilw\ts \in W\optts_{t-1}, \tilde{E}_{t-1}, \sF_{t-1}\right] \P\big(\tilde{E}_{t-1}\big)
\nonumber
\\&\le
	\frac{4}{p} \tilde{\gamma}_{t-1}(\delta') \E\br{ g(\btilw\ts) \,\middle|\, \tilde{E}_{t-1}, \sF_{t-1} }~. \label{eq:ts_bound_on_r_ts}
\end{align}
Finally, summing~\eqref{eq:ts_bound_on_r_ts} over time we get
\begin{align*}
\sum_{t=1}^T r_t\ts &\leq \frac{4}{p} \left(\max_{t=0,\ldots,T}\cbr{\tilde{\gamma}_t(\delta')}\right) \sum_{t=1}^T \E\br{ \|\bxstar(\btilw\ts)\|_{\btilV_{t-1}^{-1}}  \ \middle| \ \sF_{t-1}  }~.
\end{align*}
Note that we can already bound $\tilde{\gamma}_t$ using~\eqref{eq:gamma_tilde_bound}. However, we cannot bound the expectation right away, so we rewrite the above as follows
\begin{equation}
\label{eq:ts_rts_bound}
\sum_{t=1}^T r_t\ts \leq \frac{4}{p} \left(\max_{t=0,\ldots,T}\cbr{\tilde{\gamma}_t(\delta')}\right) \pr{ \sum_{t=1}^T \|X_t\|_{\btilV_{t-1}^{-1}} + M_T }
\end{equation}
where we introduce the martingale
\[
M_T = \sum_{t=1}^T \pr{ \E\br{ \|\bxstar(\btilw\ts)\|_{\btilV_{t-1}^{-1}} \ \middle| \ \sF_{t-1}  } - \|X_t\|_{\btilV_{t-1}^{-1}} }~.
\]
Next, we use the Azuma-Hoeffding inequality to upper-bound $M_T$.
\begin{theorem}[Azuma-Hoeffding inequality]
If a supermartingale $Y_t$ corresponding to a filtration $\sF_t$ satisfies $|Y_t - Y_{t-1}| \leq c_t$ for some constant $c_t$ for $t=1,2,\ldots$, then for any $\alpha$,
\[
\P\pr{Y_T - Y_0 \geq \alpha} \leq \exp\left(- \frac{\alpha^2}{2 \sum_{t=1}^T c_t^2}\right)~.
\]
\end{theorem}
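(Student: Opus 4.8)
The plan is to apply the Cram\'er--Chernoff (exponential moment) method to the increments of $Y_t$. Fix $s > 0$. By Markov's inequality applied to the nonnegative random variable $e^{s(Y_T - Y_0)}$,
\[
  \P\pr{Y_T - Y_0 \geq \alpha} \leq e^{-s\alpha}\, \E\br{e^{s(Y_T - Y_0)}}~.
\]
Write $D_t = Y_t - Y_{t-1}$, so that $Y_T - Y_0 = \sum_{t=1}^T D_t$; the supermartingale property gives $\E[D_t \mid \sF_{t-1}] \leq 0$ while the hypothesis gives $|D_t| \leq c_t$.

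The key estimate is the conditional subgaussian bound $\E[e^{s D_t}\mid \sF_{t-1}] \leq e^{s^2 c_t^2/2}$. To see this, set $\mu_t = \E[D_t\mid\sF_{t-1}]$, so $\mu_t \in [-c_t,0]$. Since $s > 0$ and $\mu_t \leq 0$ we have $e^{s D_t} \leq e^{s(D_t - \mu_t)}$, and $D_t - \mu_t$ is conditionally centered and takes values in an interval of length at most $2c_t$. Hoeffding's lemma --- whose proof is the standard convexity bound on $\lambda \mapsto \ln \E[e^{\lambda X}]$ for a bounded centered $X$ --- then yields $\E[e^{s(D_t-\mu_t)}\mid\sF_{t-1}] \leq e^{s^2(2c_t)^2/8} = e^{s^2 c_t^2/2}$.

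Next I would peel off the increments one at a time using the tower rule: conditioning on $\sF_{T-1}$, the factor $e^{s\sum_{t<T} D_t}$ is measurable and can be pulled out, leaving
\[
  \E\br{e^{s(Y_T-Y_0)}} = \E\br{ e^{s\sum_{t=1}^{T-1} D_t}\, \E\br{e^{sD_T}\mid\sF_{T-1}} } \leq e^{s^2 c_T^2/2}\,\E\br{e^{s\sum_{t=1}^{T-1} D_t}}~,
\]
and iterating $T$ times gives $\E[e^{s(Y_T-Y_0)}] \leq \exp\pr{\tfrac{s^2}{2}\sum_{t=1}^T c_t^2}$. Substituting back, $\P(Y_T - Y_0 \geq \alpha) \leq \exp\pr{-s\alpha + \tfrac{s^2}{2}\sum_{t=1}^T c_t^2}$ for every $s > 0$; minimizing the exponent at $s = \alpha/\sum_{t=1}^T c_t^2$ yields the claimed bound $\exp\pr{-\alpha^2/(2\sum_{t=1}^T c_t^2)}$.

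The only genuinely non-routine ingredient is Hoeffding's lemma, together with the bookkeeping that ensures the supermartingale (rather than martingale) hypothesis is used in the right direction --- namely that the negative conditional drift $\mu_t$ can only decrease the moment generating function when $s>0$. Everything else is the standard Chernoff bound. As this is a classical inequality, one may alternatively simply cite it; I would include the short argument above for completeness.
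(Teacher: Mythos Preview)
Your proof is correct and is the standard textbook argument for Azuma--Hoeffding via the Cram\'er--Chernoff method. In the paper, however, this theorem is not proved at all: it is simply quoted as a classical concentration inequality and then applied to bound the martingale $M_T$ arising in the analysis of sketched linear TS. So there is nothing to compare against; you have supplied a (correct) proof where the paper gives none, and indeed you anticipated this possibility in your closing remark.
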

Now verify that for any $t=1,\ldots,T$,
\[
M_t - M_{t-1} = \E\br{ \|\bxstar(\btilw\ts)\|_{\btilV_{t-1}^{-1}} \ \middle| \ \sF_{t-1}  } - \|X_t\|_{\btilV_{t-1}^{-1}} \leq \frac{2 L}{\sqrt{\lambda}}~.
\]
Thus, by the Azuma-Hoeffding inequality, with probability at least $1 - \delta/2$ we have
\begin{equation}
\label{eq:ts_bound_M_T}
M_T \leq \sqrt{\frac{4 L T}{\lambda} \ln\pr{\frac{4}{\delta}}}~.
\end{equation}
Now we focus our attention on the remaining term:
\begin{align}
  \sum_{t=1}^T \|X_t\|_{\btilV_{t-1}^{-1}} &\leq \sum_{t=1}^T \min\cbr{\frac{L}{\sqrt{\lambda}}, \|X_t\|_{\btilV_{t-1}^{-1}}} \nonumber\\
                                        &\leq \max\cbr{1, \frac{L}{\sqrt{\lambda}}} \sum_{t=1}^T \min\cbr{1, \|X_t\|_{\btilV_{t-1}^{-1}}} \nonumber\\
                                        &\leq \max\cbr{1, \frac{L}{\sqrt{\lambda}}} \sqrt{T \sum_{t=1}^T \min\cbr{1, \|X_t\|_{\btilV_{t-1}^{-1}}^2}} \tag{by Cauchy-Schwartz} \nonumber\\
  &\defOtilde \max\cbr{1, \frac{1}{\sqrt{\lambda}}} \sqrt{(1 + \ve_m) \pr{d \ln(1 + \ve_m) + m} T} \label{eq:ts_potential_bound}
\end{align}
where the last step is due to Lemma~\ref{lem:potential}.

For brevity denote $\mtil = m + d \ln(1 + \ve_m)$. Now, we substitute into~\eqref{eq:ts_rts_bound} the bound~\eqref{eq:ts_bound_M_T} on $M_T$, the bound~\eqref{eq:ts_potential_bound}, and the bound~\eqref{eq:gamma_tilde_bound} on $\tilde{\gamma}_t$. This gives
\begin{align}
  \sum_{t=1}^T r_t\ts &\defOtilde \sqrt{d} \pr{ R \sqrt{\mtil \pr{1 + \ve_m}} + S \sqrt{\lambda} \patch{\pr{1 + \frac{1}{\lambda}}} \pr{1 + \ve_m} }
\pr{ \max\cbr{1, \frac{1}{\sqrt{\lambda}}} \sqrt{(1 + \ve_m) \mtil T} + \sqrt{\frac{T}{\lambda}} } \nonumber\\
                      &\defOtilde \max\cbr{1, \frac{1}{\sqrt{\lambda}}} \mtil \pr{1 + \ve_m}^{\frac{3}{2}} \pr{ R + S \sqrt{\lambda} \patch{\pr{1 + \frac{1}{\lambda}}} } \sqrt{dT} \label{eq:ts_rts_final_bound}
\end{align}
which holds with high probability (due to Azuma-Hoeffding inequality).

Now we bound the remaining RLS term of the regret. In particular,
\begin{align}
\sum_{t=1}^T r_t\rls &= \sum_{t=1}^T \ind{E_{t-1}} \pr{ X_t\tp \btilw_{t-1}\ts - X_t\tp \bwstar } \nonumber \\
  &= \sum_{t=1}^T \ind{E_{t-1}} \pr{ X_t\tp \btilw_{t-1}\ts -  X_t\tp \btilw_{t-1}}
    + \sum_{t=1}^T \ind{E_{t-1}} \pr{ X_t\tp \btilw_{t-1} - X_t\tp \bwstar } \nonumber \\
  &\leq \sum_{t=1}^T \ind{E_{t-1}} \|X_t\|_{\btilV_{t-1}} \|\btilw_{t-1}\ts -  \btilw_{t-1}\|_{\btilV_{t-1}^{-1}} \nonumber \\
    &+ \sum_{t=1}^T \ind{E_{t-1}} \|X_t\|_{\btilV_{t-1}} \|\btilw_{t-1} - \bwstar\|_{\btilV_{t-1}^{-1}} \tag{by Cauchy-Schwartz} \nonumber \\
  &\leq \sum_{t=1}^T \|X_t\|_{\btilV_{t-1}} \tilde{\gamma}_{t-1}(\delta') \tag{by definition of event $\tilde{E}_{t-1}\ts$} \nonumber \\
    &+ \sum_{t=1}^T \|X_t\|_{\btilV_{t-1}} \tilde{\beta}_{t-1}(\delta') \tag{by definition of event $\tilde{E}_{t-1}$} \nonumber \\
  &\defOtilde \max\cbr{1, \frac{1}{\sqrt{\lambda}}} \sqrt{\mtil (1 + \ve_m) T} \tag{using~\eqref{eq:ts_potential_bound}}\\
  &\cdot d \pr{ R \sqrt{\mtil \pr{1 + \ve_m}} + S \sqrt{\lambda} \patch{\pr{1 + \frac{1}{\lambda}}} \cdot \pr{1 + \ve_m} } \tag{using Theorem~\ref{thm:confidence} to bound $\tilde{\beta}$ and~\eqref{eq:gamma_tilde_bound} to bound $\tilde{\gamma}$}\\
  &\defOtilde \max\cbr{1, \frac{1}{\sqrt{\lambda}}}
    \pr{ R \mtil \pr{1 + \ve_m} + S \sqrt{\lambda} \patch{\pr{1 + \frac{1}{\lambda}}} \sqrt{\mtil} \pr{1 + \ve_m}^{\frac{3}{2}} } \sqrt{dT} \nonumber \\
  &\defOtilde \max\cbr{1, \frac{1}{\sqrt{\lambda}}} \mtil \pr{1 + \ve_m}^{\frac{3}{2}}
    \pr{ R + S \sqrt{\lambda} \patch{\pr{1 + \frac{1}{\lambda}}} } \sqrt{dT}~. \label{eq:ts_rls_final_bound}
\end{align}
Hence, combining~\eqref{eq:ts_regret_whp}, \eqref{eq:ts_rts_final_bound}, and~\eqref{eq:ts_rls_final_bound} gives, with high probability,
\begin{align*}
  \ind{E_T} R_T = \sum_{t=1}^T r_t\ts + \sum_{t=1}^T r_t\rls
\defOtilde \max\cbr{1, \frac{1}{\sqrt{\lambda}}} \mtil \pr{1 + \ve_m}^{\frac{3}{2}} \pr{ R + S \sqrt{\lambda} \patch{\pr{1 + \frac{1}{\lambda}}} } \sqrt{dT}
\end{align*}
The proof is concluded by observing that Lemma~\ref{lem:ts_prob_bounds} proves that $E_T$ also holds with high probability.

\section{Experiments}
In this section we present experiments on six publicly available classification datasets.
\paragraph{Setup.}
The idea of our experimental setup is similar to the one described by~\cite{cesa2013gang}. Namely, we convert a $K$-class classification problem into a contextual bandit problem as follows: given a dataset of labeled instances $(\bx,y)\in\reals^d\times\{1,\dots,K\}$, we partition it into $K$ subsets according to the class labels. Then we create $K$ sequences by drawing a random permutation of each subset. At each step $t$ the decision set $D_t$ is obtained by picking the $t$-th instance from each one of these $K$ sequences. Finally, rewards are determined by choosing a class $y \in \{1,\dots,K\}$ and then consistently assigning reward $1$ to all instances labeled with $y$ and reward $0$ to all remaining instances.
\paragraph{Datasets.}
We perform experiments on six publicly available datasets for multiclass classification from the \verb!openml! repository~\citep{openml} ---dataset IDs 1461, 23, 32, 182, 22, and 44, see the table below here for details.
\begin{center}
\begin{tabular}{llrr}
Dataset & Examples & Features & Classes \\
\hline
Bank & 45k & 17 & 2 \\
SatImage & 6k & 37 & 6 \\
Spam & 4k & 58 & 2 \\
Pendigits & 11k & 17 & 10 \\
MFeat & 2k & 48 & 10 \\
CMC & 1.4k & 10 & 3 \\
\end{tabular}
\end{center}
\paragraph{Baselines.}
The hyperparameters $\beta$ (confidence ellipsoid radius) and $\lambda$ (RLS regularization parameter) are selected on a validation set of size $100$ via grid search on $(\beta, \lambda) \in \cbr{1, 10^2, 10^3, 10^4} \times \cbr{10^{-2}, 10^{-1}, 1}$ for OFUL, and $\cbr{1, 10^2, 10^3} \times \cbr{10^{-2}, 10^{-1}, 1, 10^2}$ for linear TS.
\paragraph{Results}
We observe that on three datasets, Figure~\ref{fig:appendix:cumulative_reward}, sketched algorithms indeed do not suffer a substantial drop in performance when compared to the non-sketched ones, even when the sketch size amounts to $60\%$ of the context space dimension. This demonstrates that sketching successfully captures relevant subspace information relatively to the goal of maximizing reward.

Because the FD-sketching procedure considered in this paper is essentially performing online PCA, it is natural to ask how our sketched algorithms would compare to their non-sketched version run on the best $m$-dimensional subspace (computed by running PCA on the entire dataset). In Figure~\ref{fig:appendix:pca_oful}, we compare SOFUL and sketched linear TS to their non-sketched versions. In particular, we keep $60\%, 40\%$, and $20\%$ of the top principal components, and notice that, like in Figure~\ref{fig:appendix:cumulative_reward}, there are cases with little or no loss in performance.
\begin{figure*}
  \centering
  \includegraphics[width=5cm]{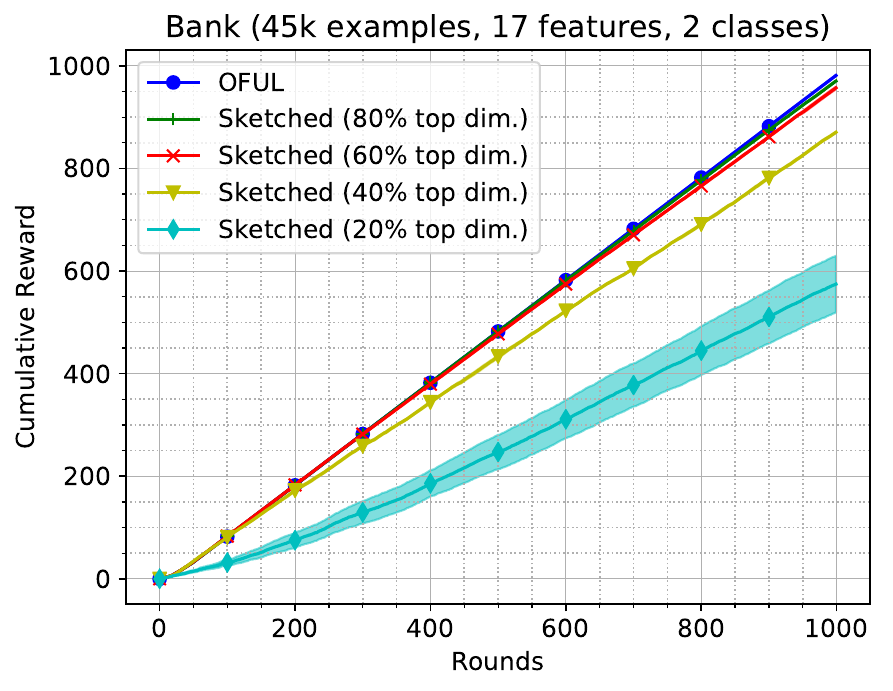}
  \includegraphics[width=5cm]{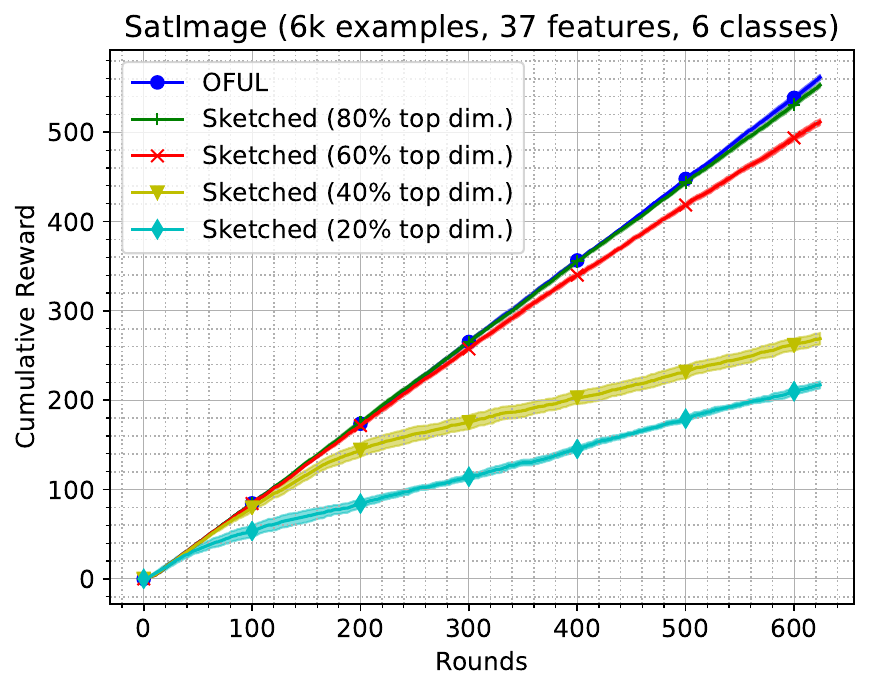}
  \includegraphics[width=5cm]{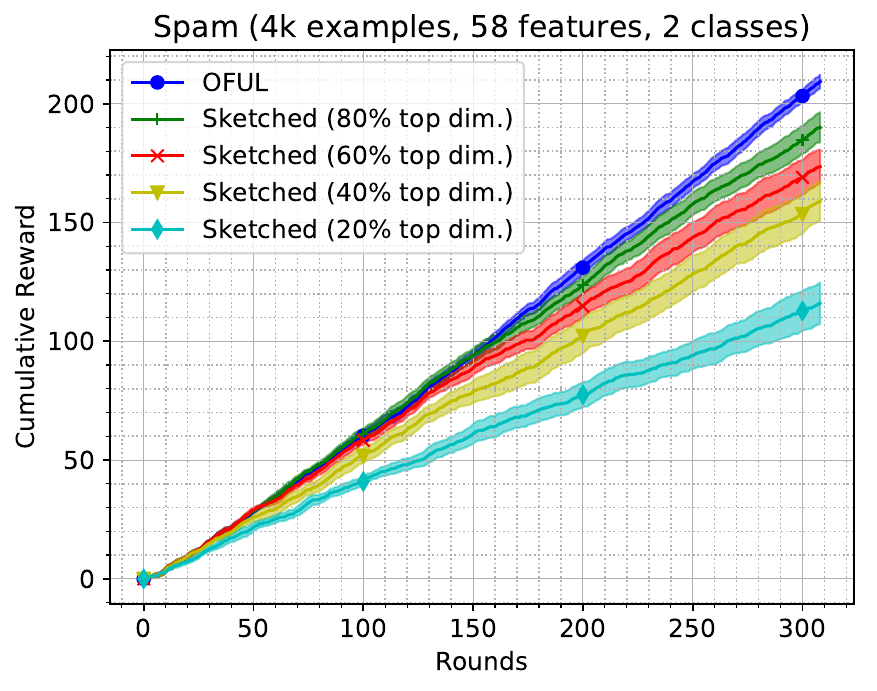}\\
  \includegraphics[width=5cm]{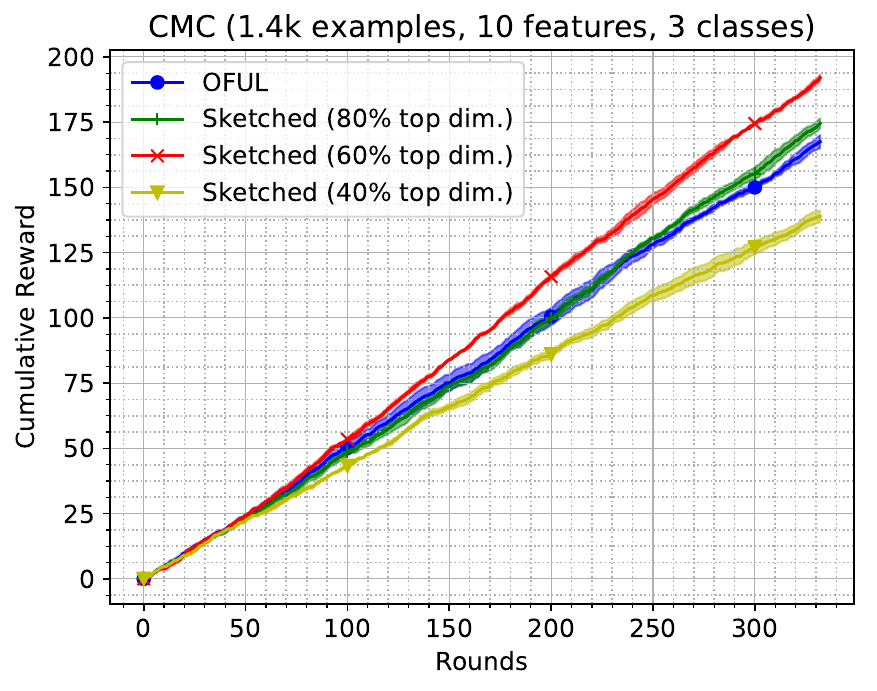}
  \includegraphics[width=5cm]{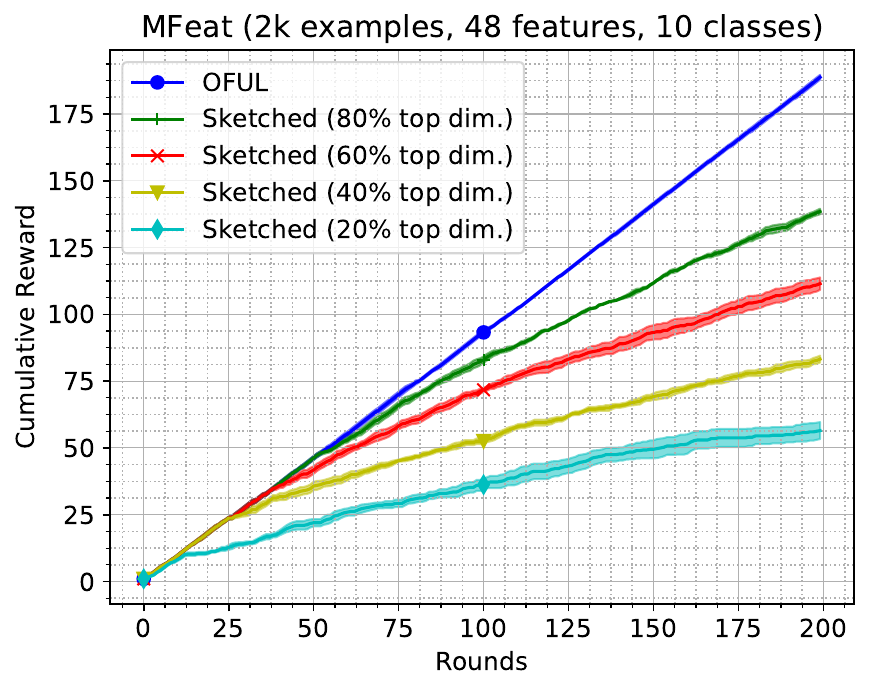}
  \includegraphics[width=5cm]{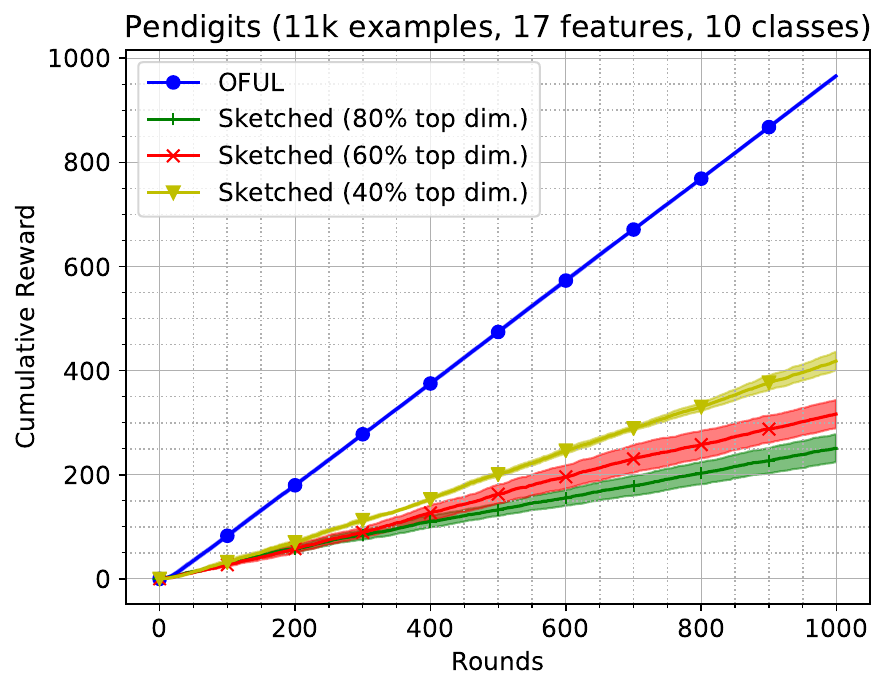}
  \caption{Comparison of SOFUL to OFUL on six real-world datasets and for different sketch sizes.
    Note that, in some cases, a sketch size equal to $80\%$ and even $60\%$ of the context space dimension does not significantly affect the perfomance.}
\end{figure*}
\begin{figure*}
  \ContinuedFloat
  \centering
  \includegraphics[width=5cm]{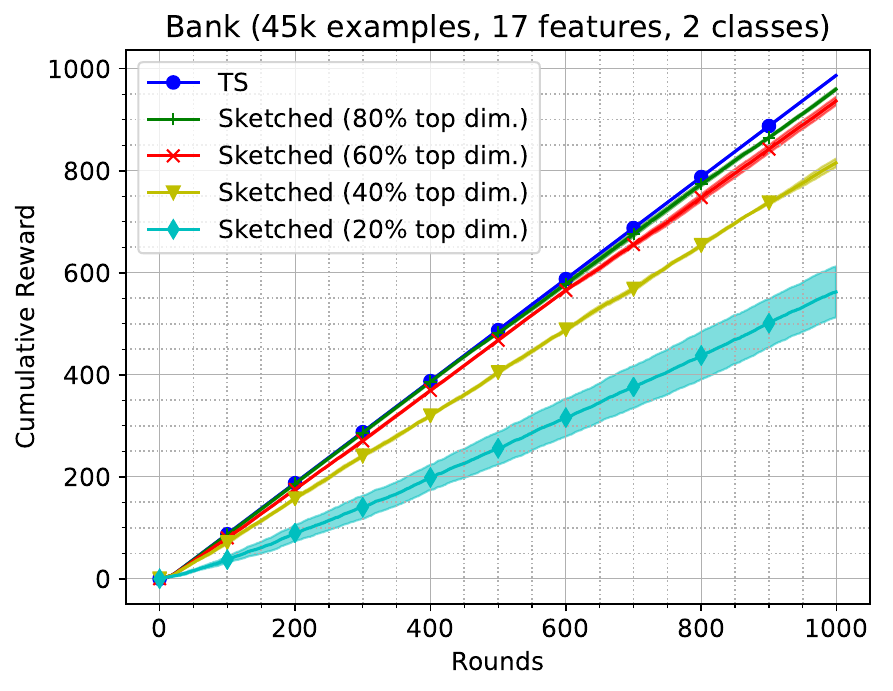}
  \includegraphics[width=5cm]{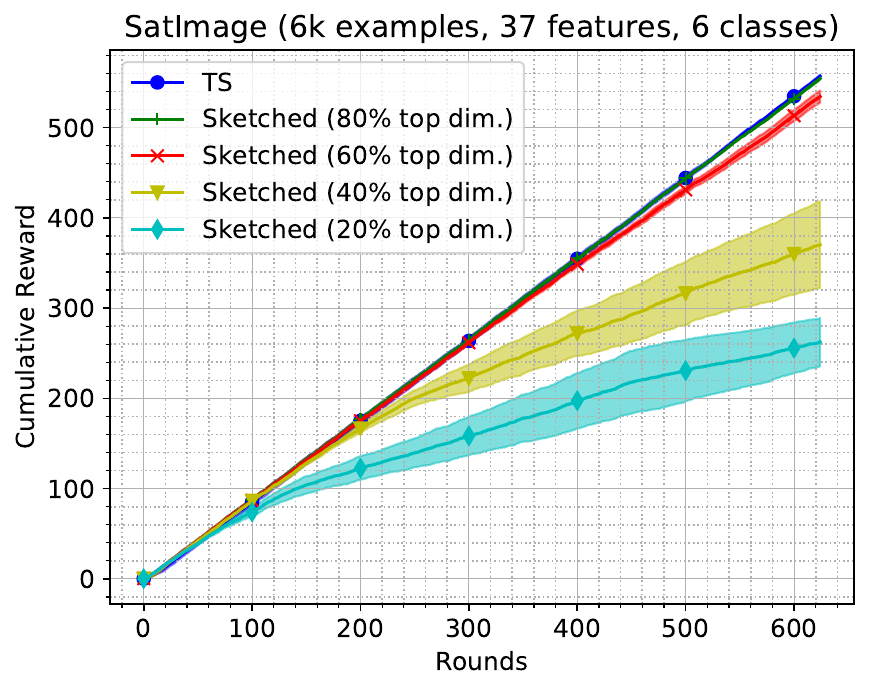}
  \includegraphics[width=5cm]{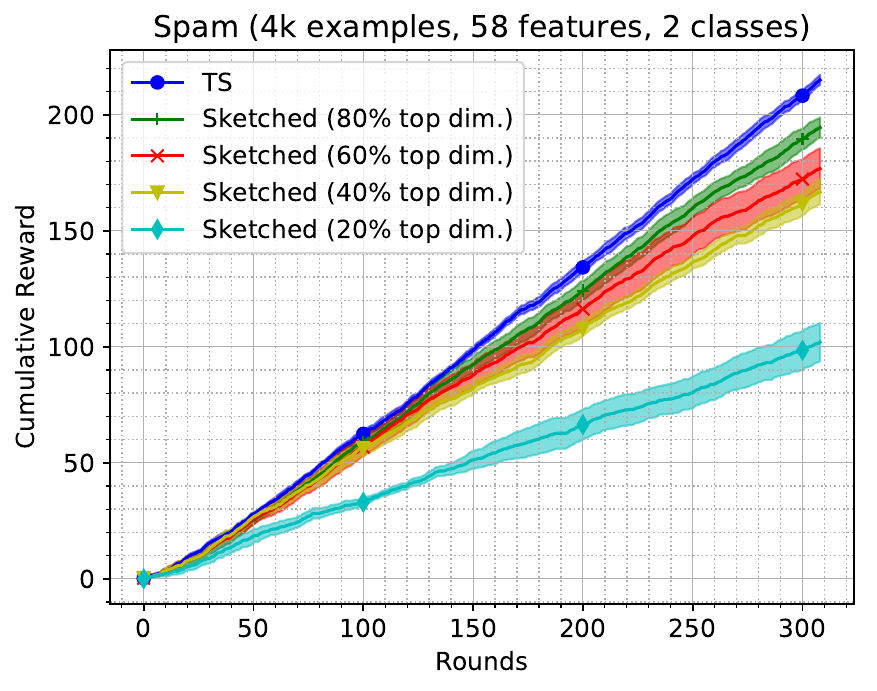}\\
  \includegraphics[width=5cm]{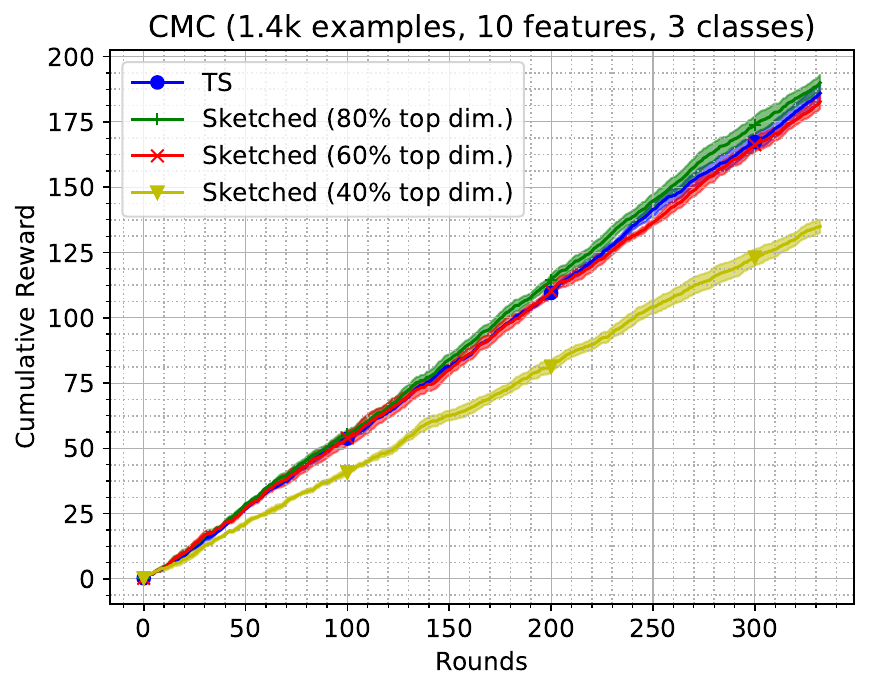}
  \includegraphics[width=5cm]{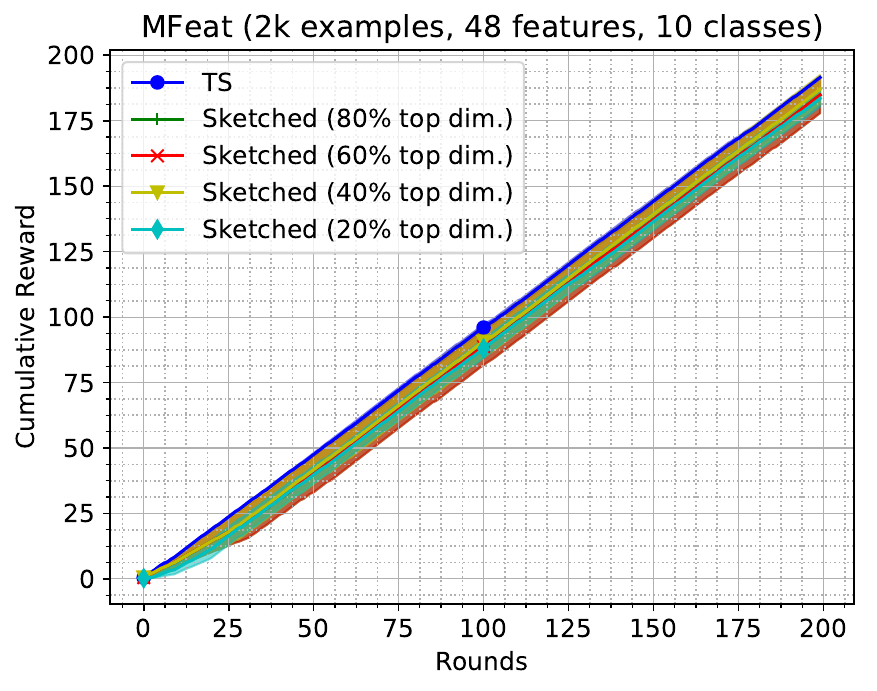}
  \includegraphics[width=5cm]{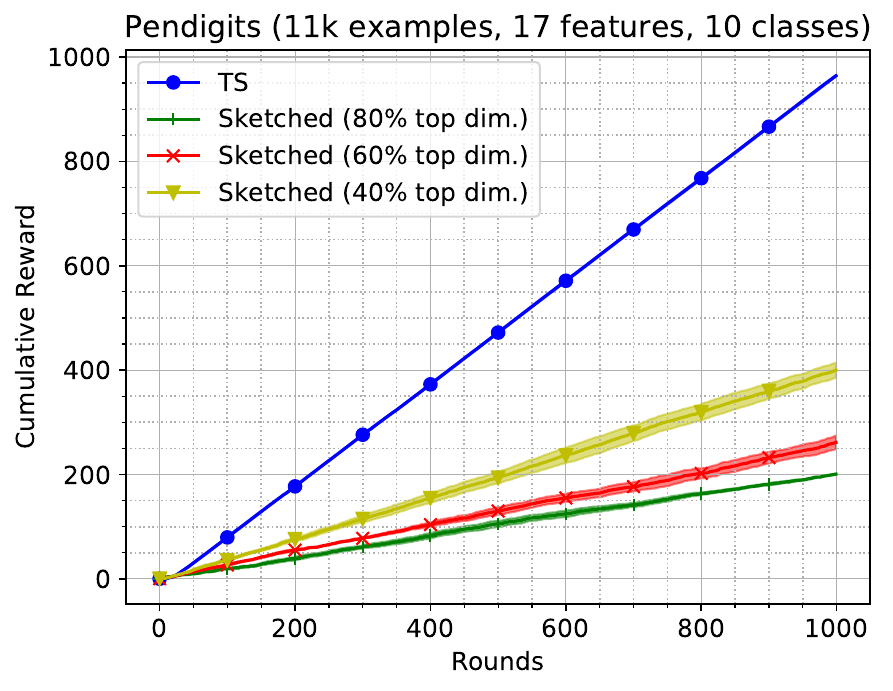}
  \caption{Comparison of sketched linear TS to linear TS on six real-world datasets and for different sketch sizes.
    Note that, in some cases, a sketch size equal to $80\%$ and even $60\%$ of the context space dimension does not significantly affect the perfomance.}
\label{fig:appendix:cumulative_reward}
\end{figure*}
\begin{figure*}
  \centering
  \includegraphics[width=4cm]{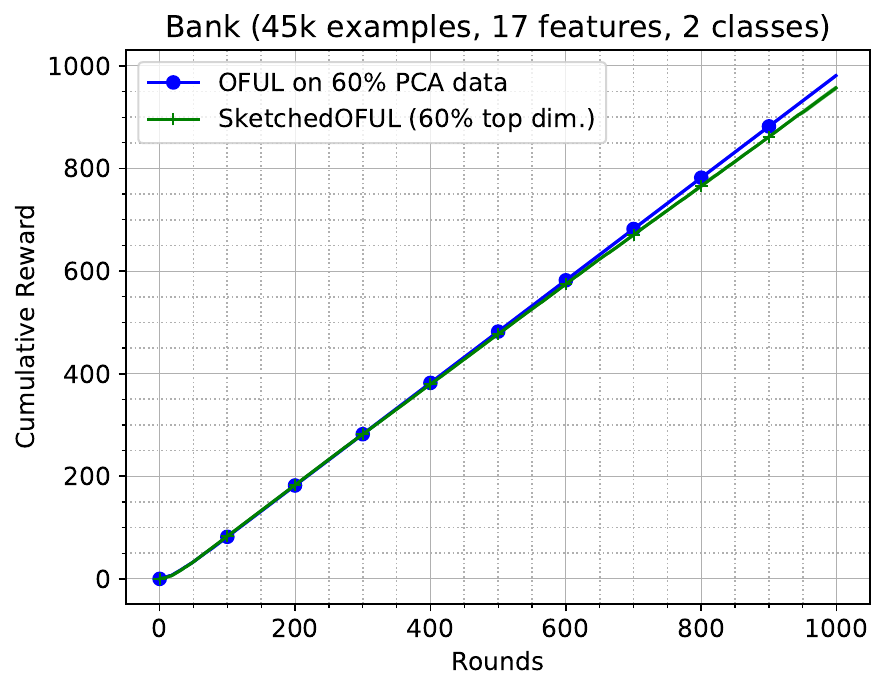}
  \includegraphics[width=4cm]{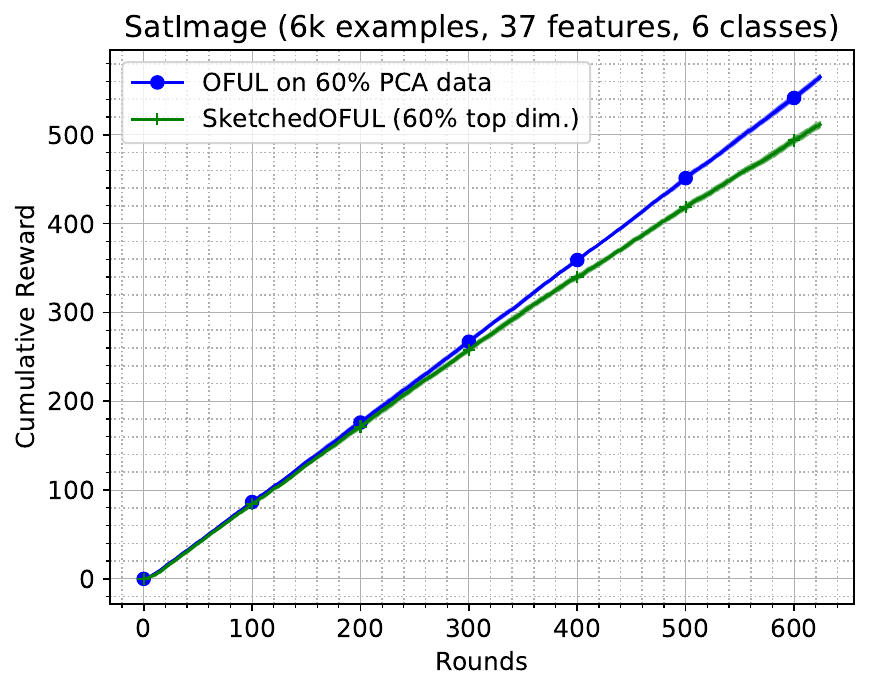}
  \includegraphics[width=4cm]{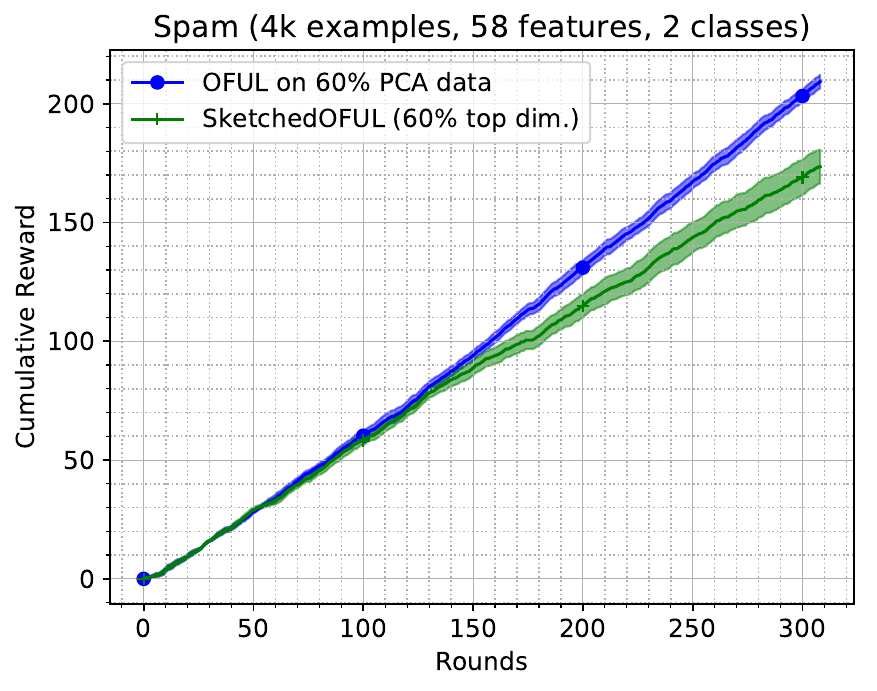}\\
  \includegraphics[width=4cm]{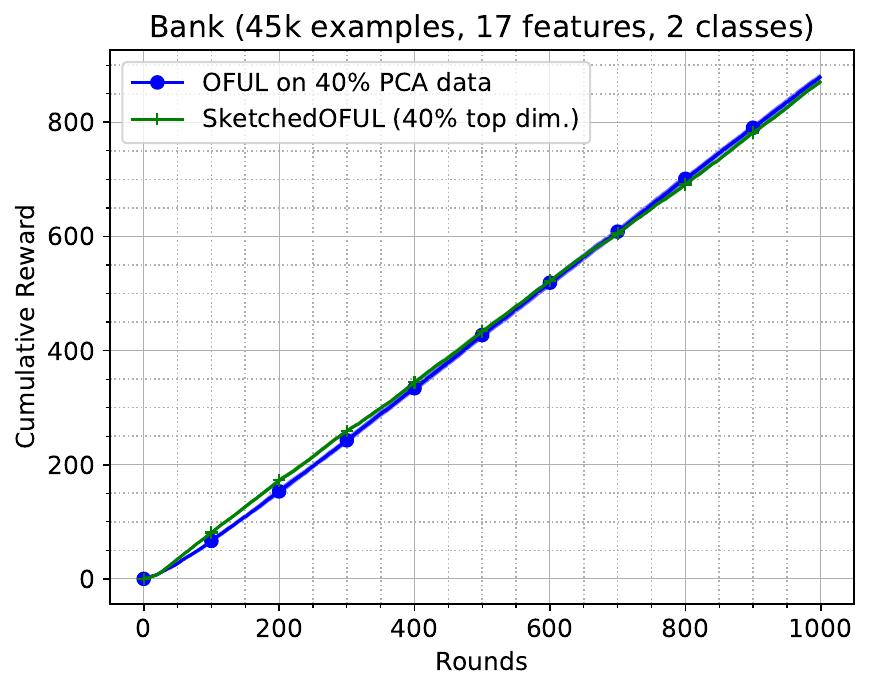}
  \includegraphics[width=4cm]{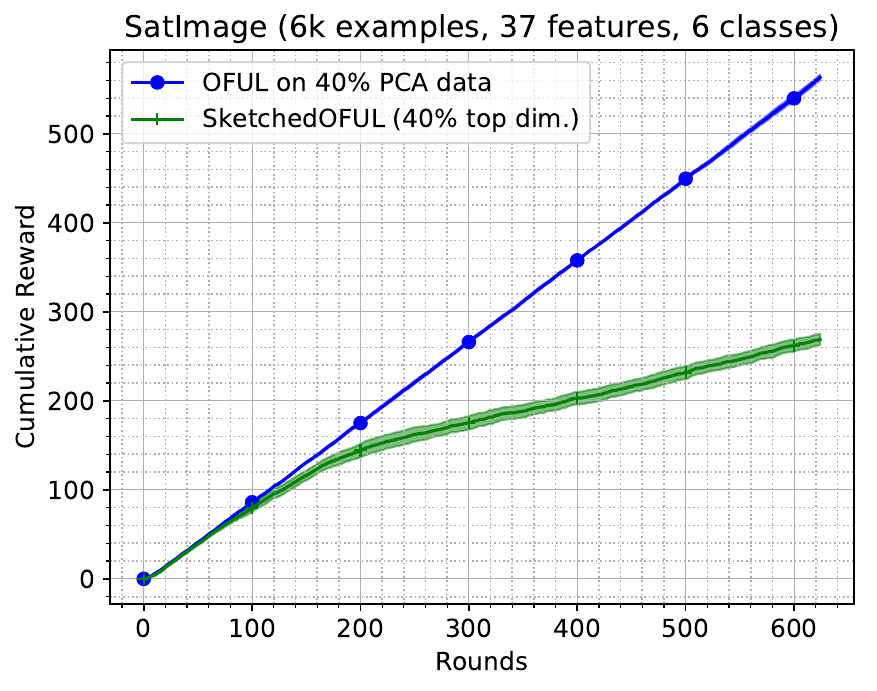}
  \includegraphics[width=4cm]{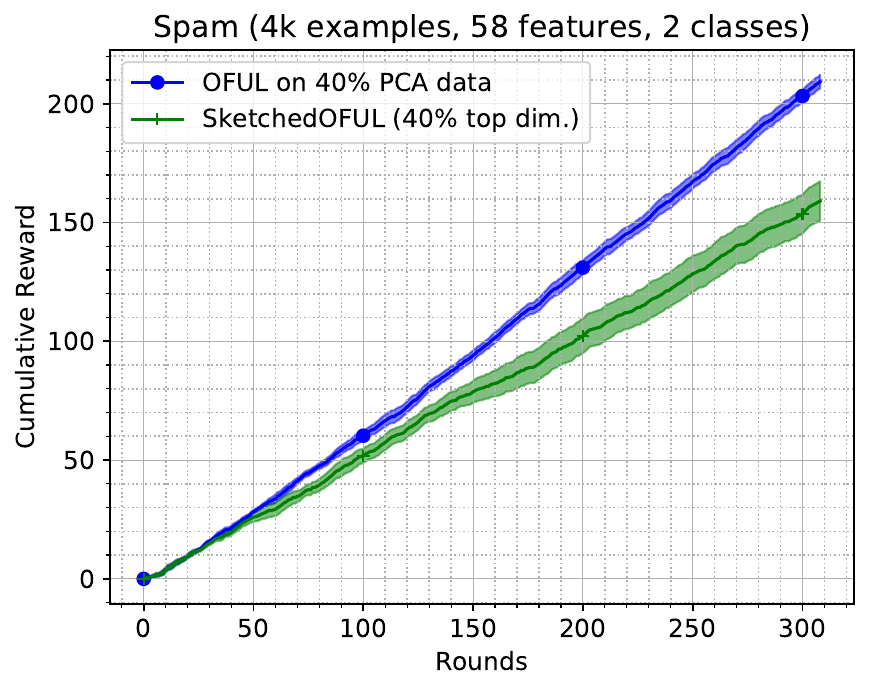}\\
  \includegraphics[width=4cm]{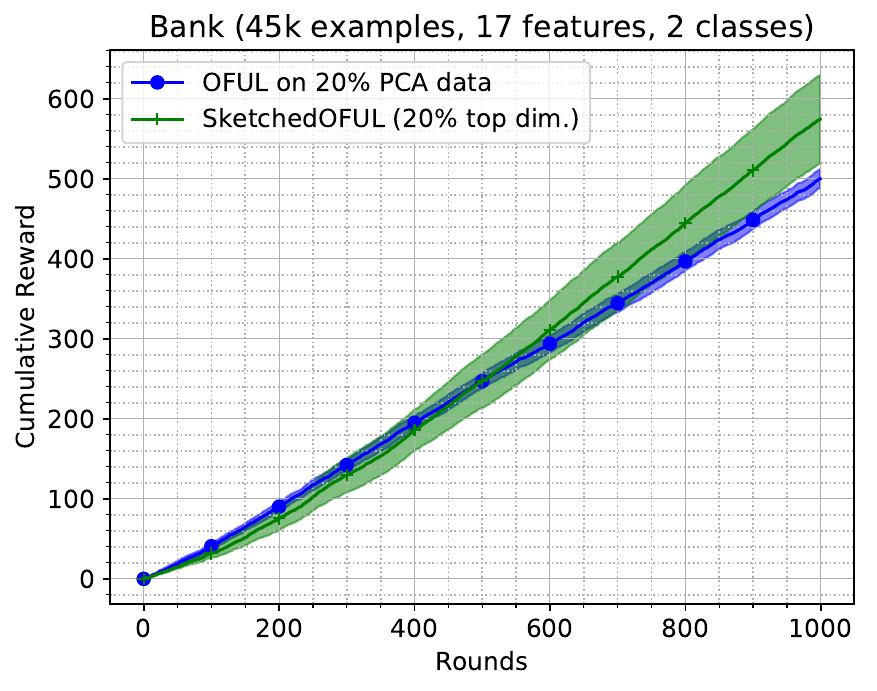}
  \includegraphics[width=4cm]{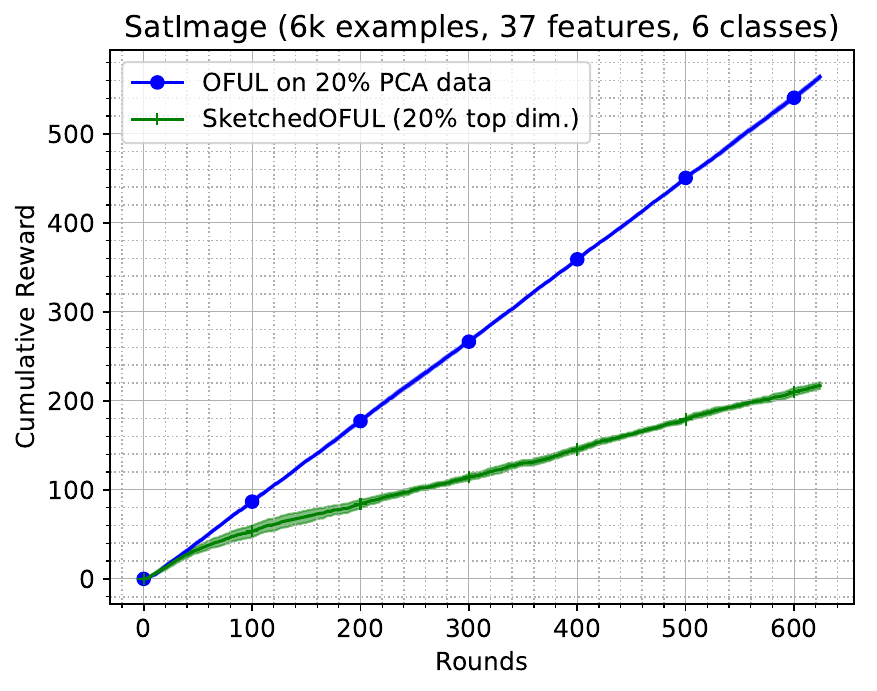}
  \includegraphics[width=4cm]{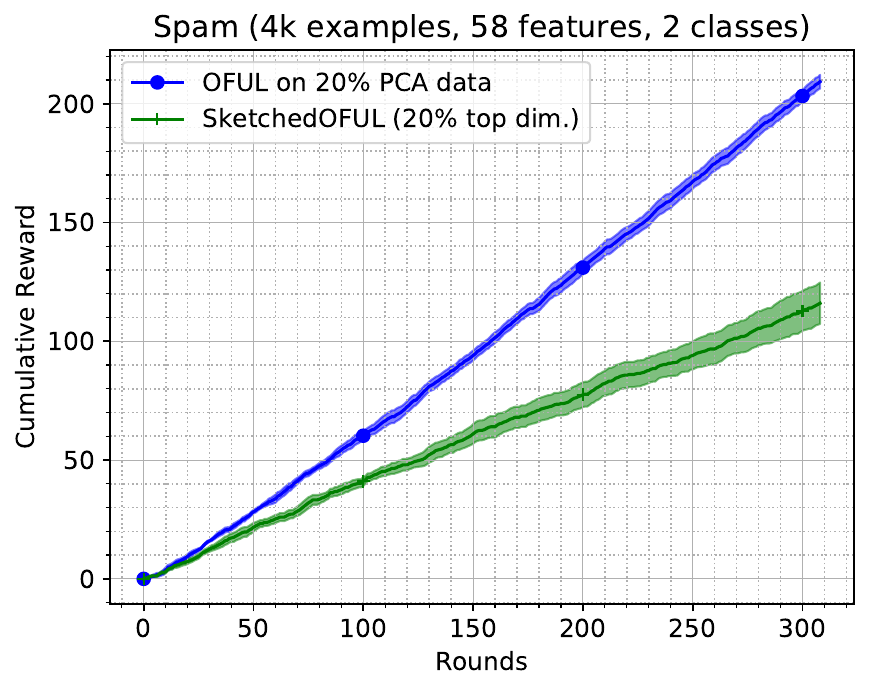}
  ~\\
  {\color[rgb]{0.75,0.75,0.75} \par\noindent\rule{0.75\textwidth}{0.4pt}}
  ~\\~\\
  \includegraphics[width=4cm]{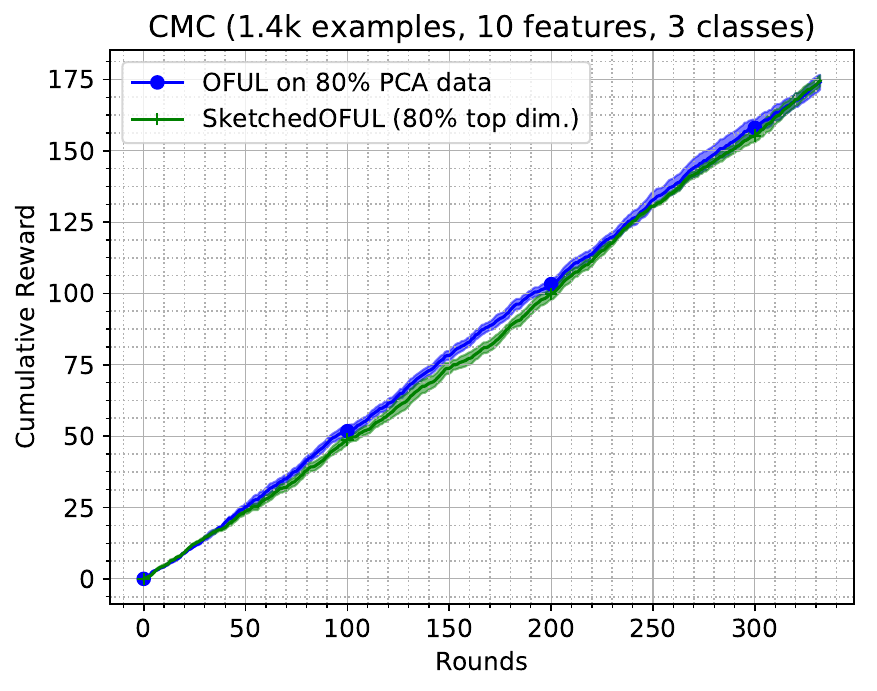}
  \includegraphics[width=4cm]{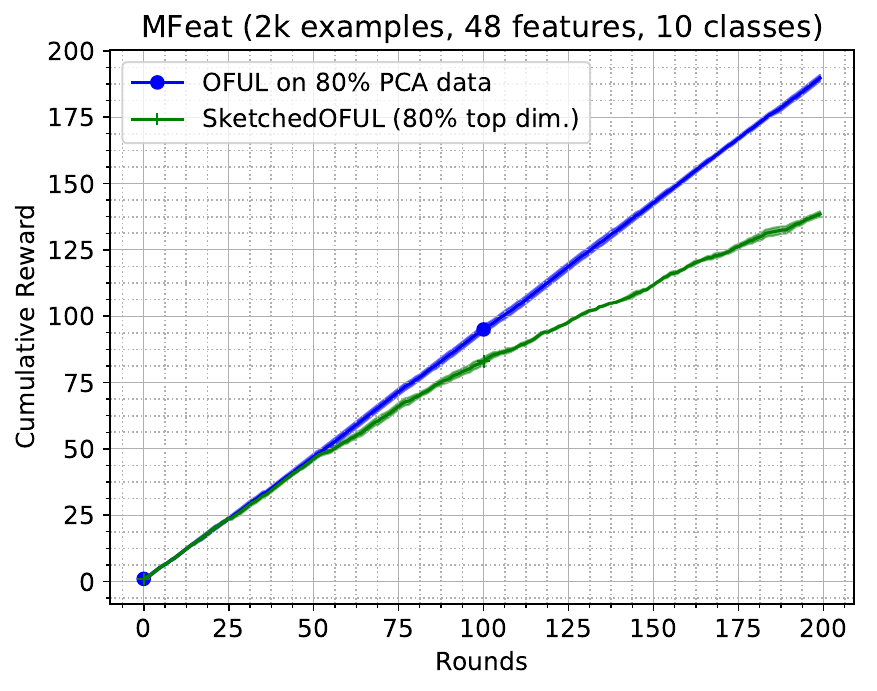}
  \includegraphics[width=4cm]{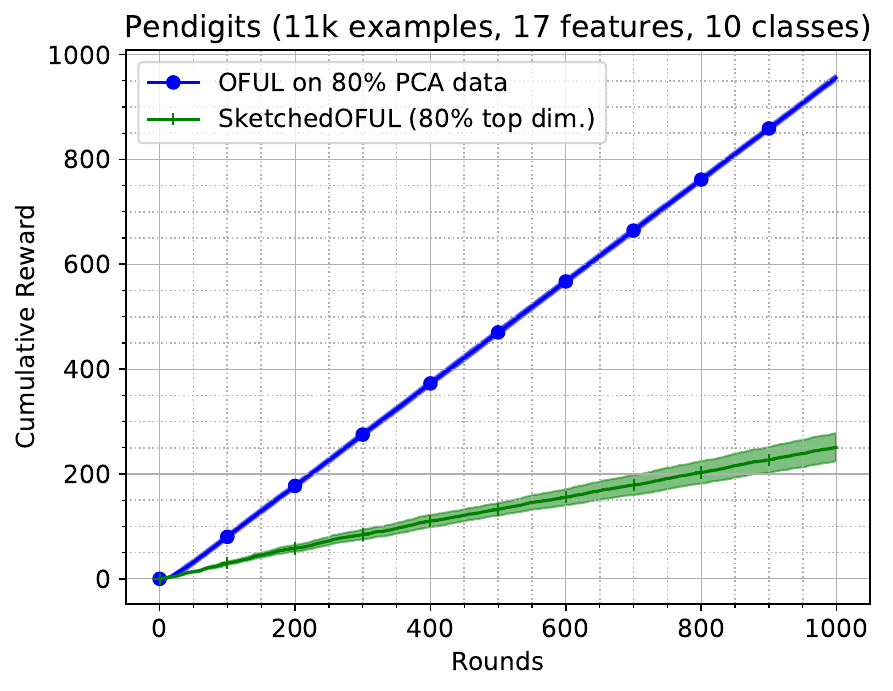}\\
  \includegraphics[width=4cm]{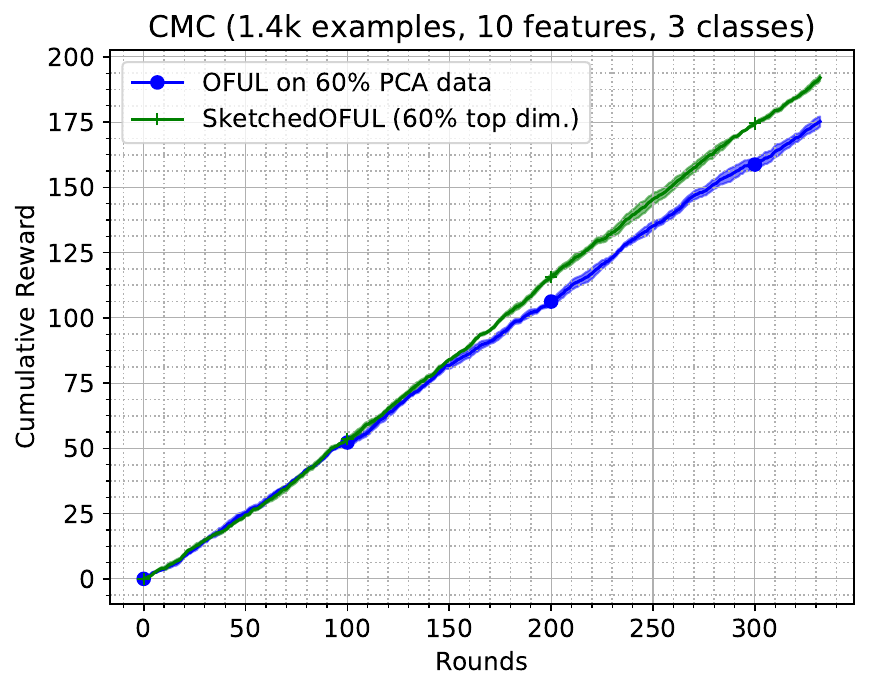}
  \includegraphics[width=4cm]{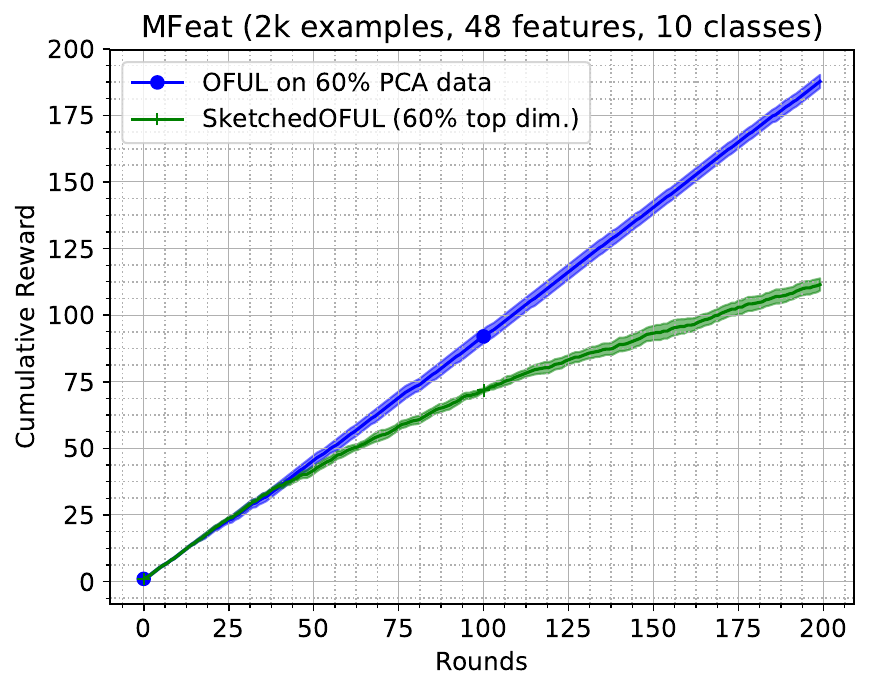}
  \includegraphics[width=4cm]{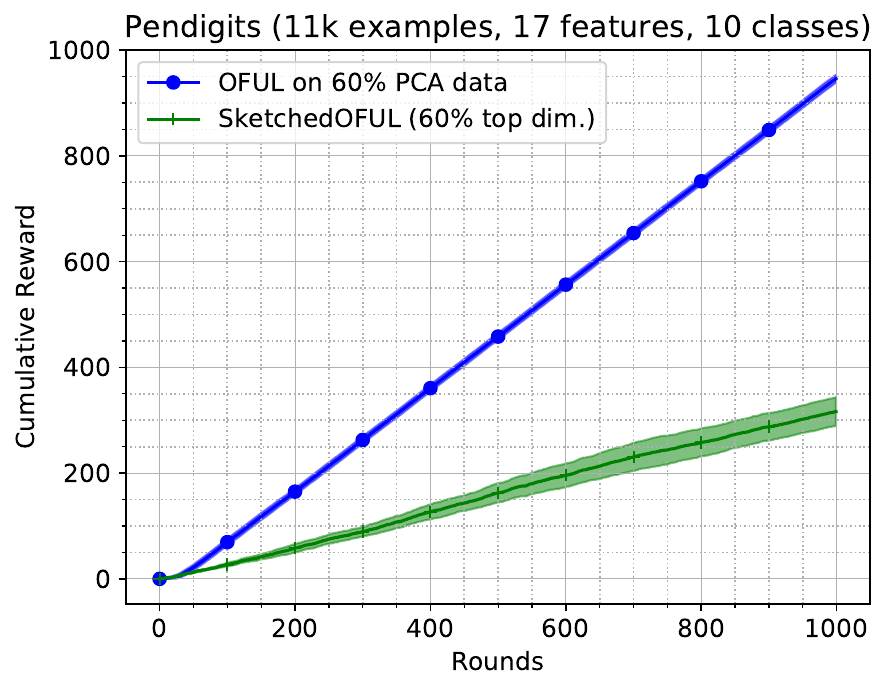}\\
  \includegraphics[width=4cm]{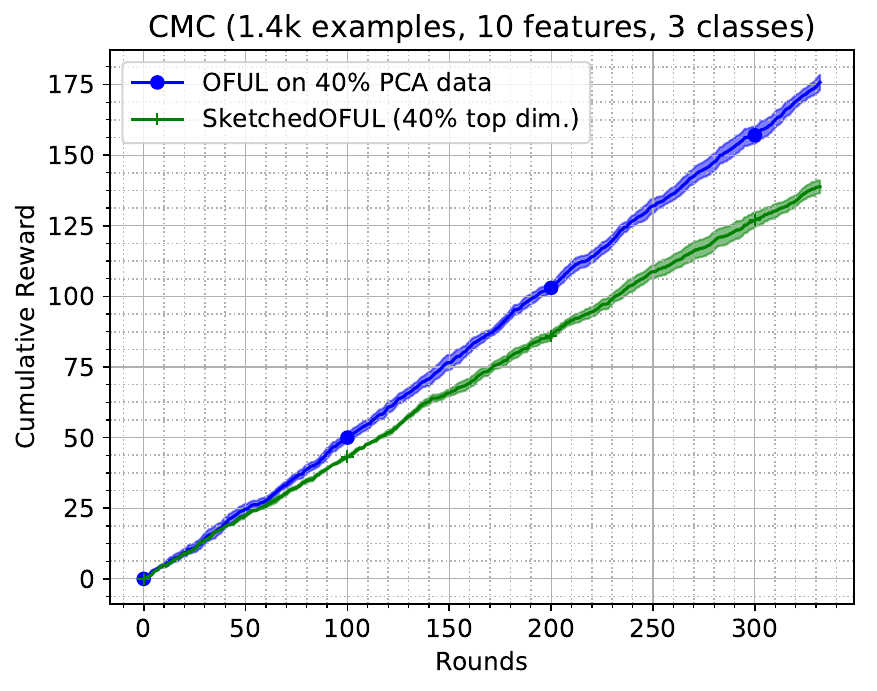}
  \includegraphics[width=4cm]{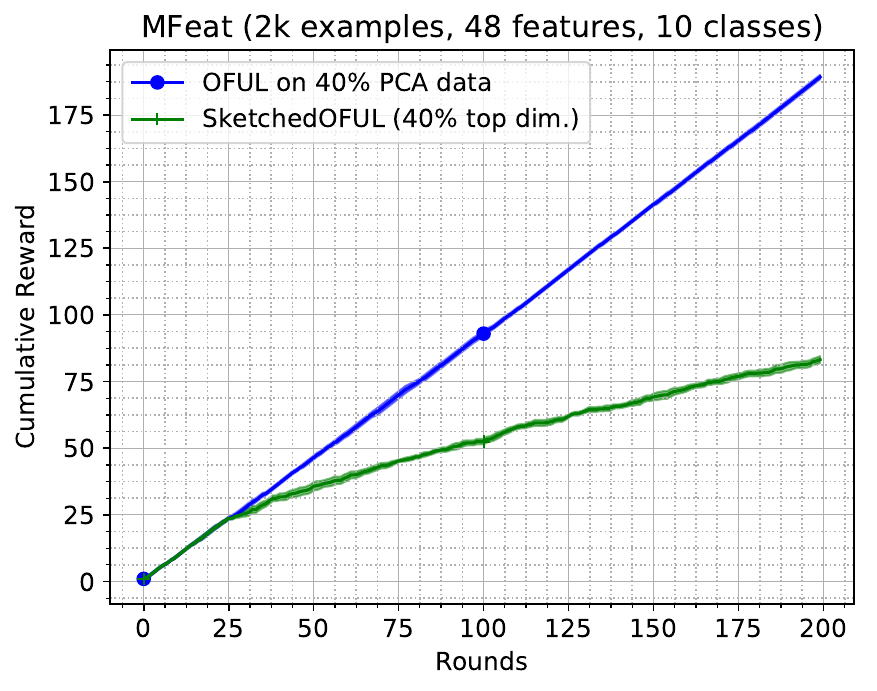}
  \includegraphics[width=4cm]{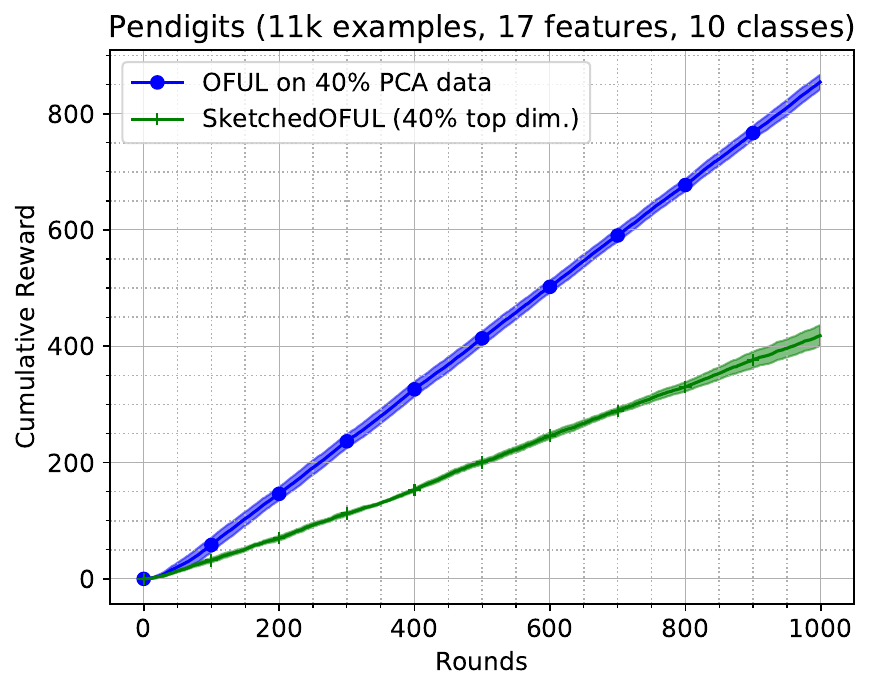}
\caption{Comparison of OFUL run on the best $m$-dimensional subspace against SOFUL run with sketch size $m$. Rows show $m$ as a fraction of the context space dimension: $60\%, 40\%, 20\%$ (for the first three datasets), while columns correspond to different datasets.
  Note that, in some cases (with sketch size $m$ of size at least $60\%$), SOFUL performs as well as if the best $m$-dimensional subspace had been known in hindsight.}
\label{fig:appendix:pca_oful}
\end{figure*}
\begin{figure*}
  \centering
  \includegraphics[width=4cm]{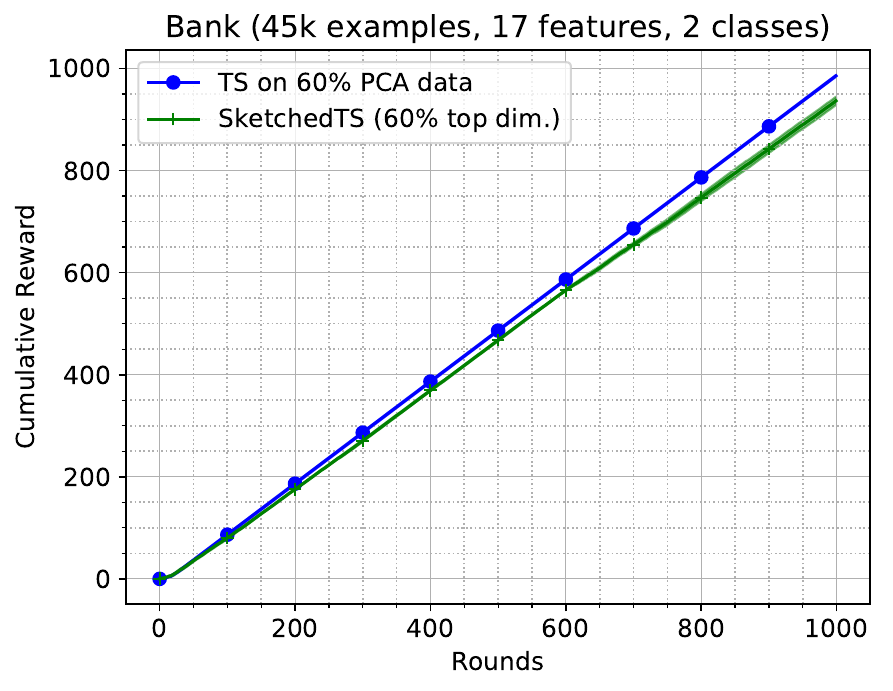}
  \includegraphics[width=4cm]{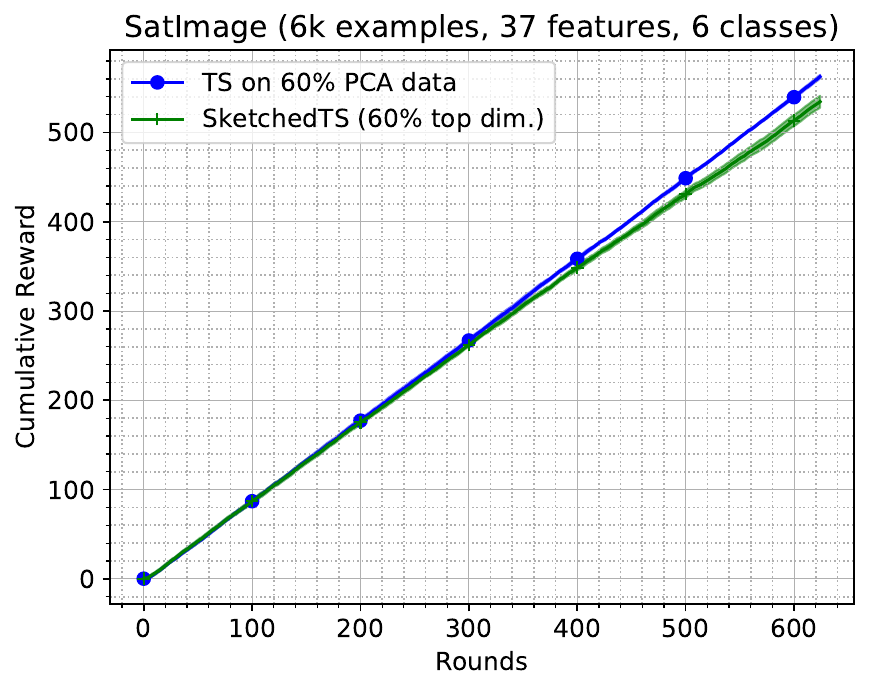}
  \includegraphics[width=4cm]{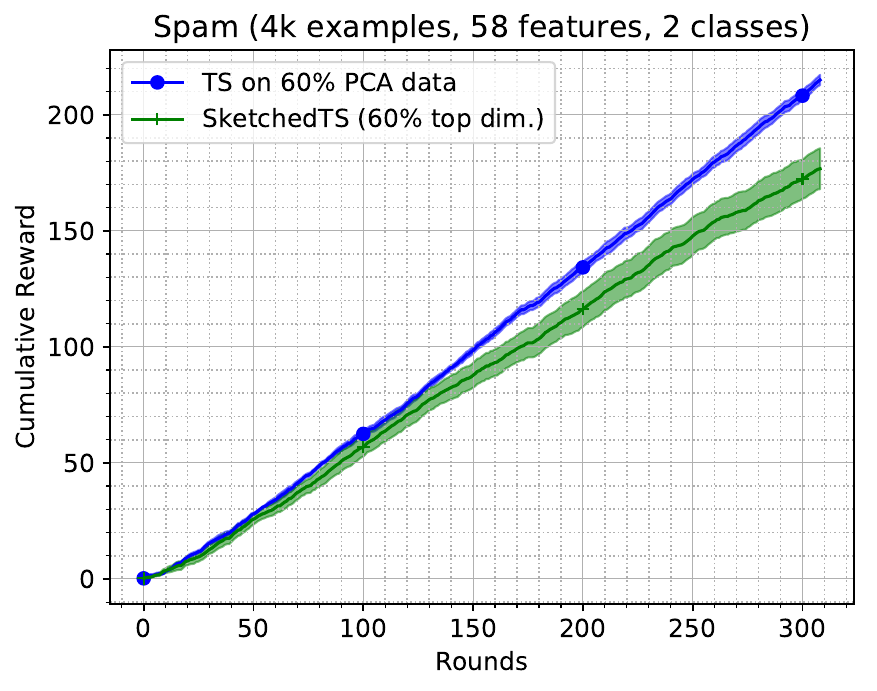}\\
  \includegraphics[width=4cm]{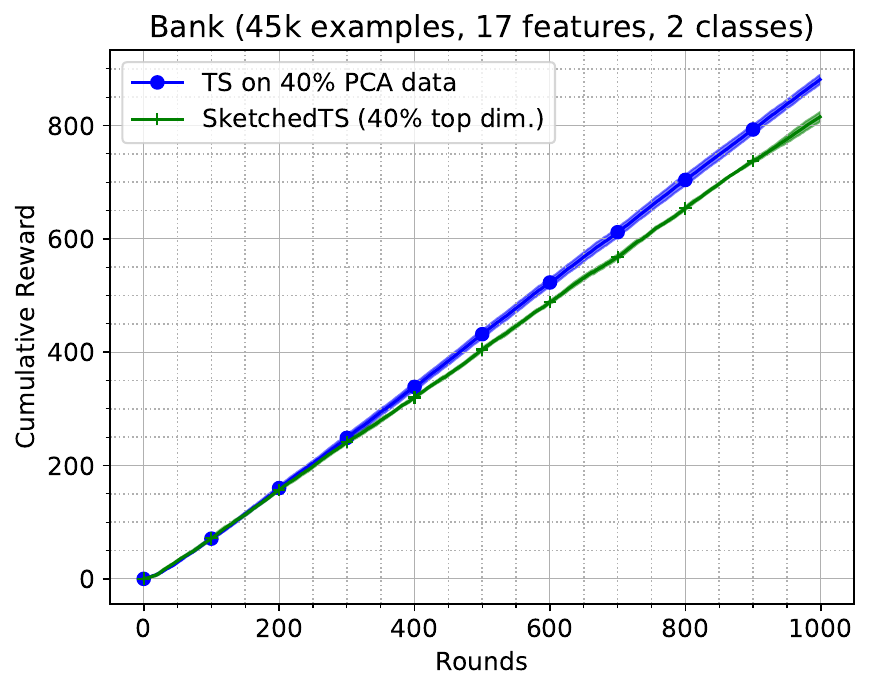}
  \includegraphics[width=4cm]{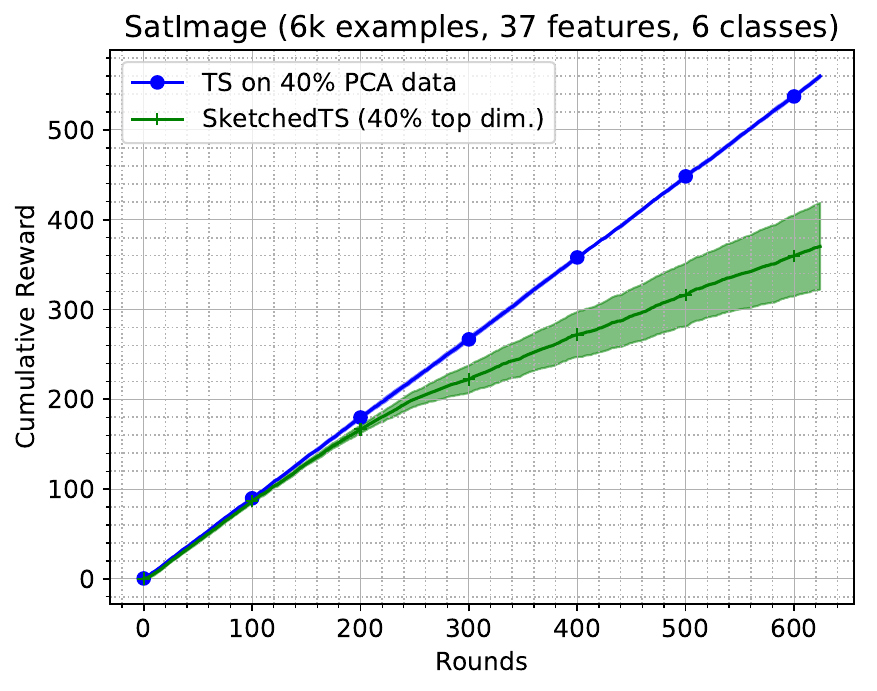}
  \includegraphics[width=4cm]{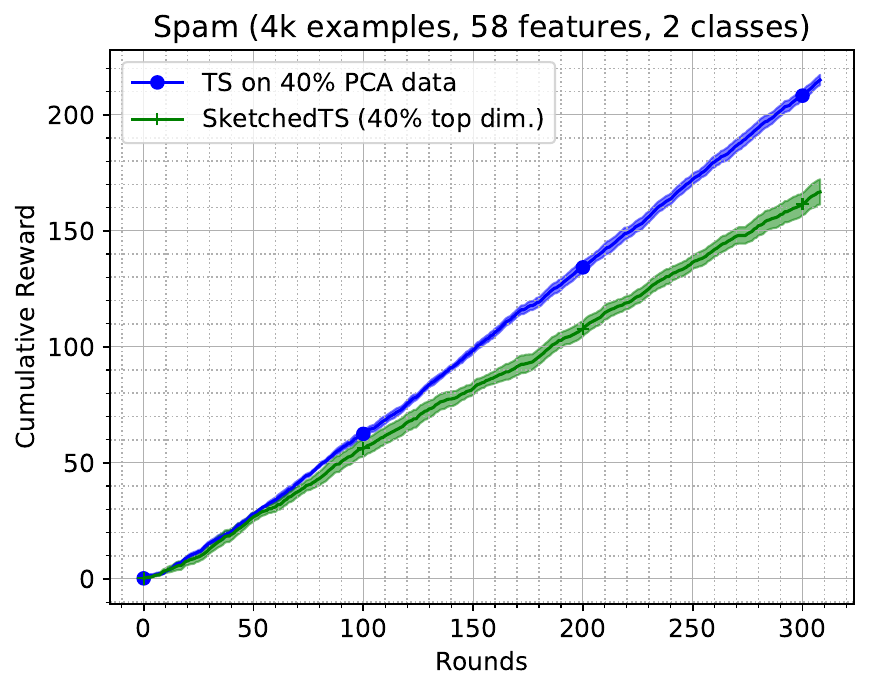}\\
  \includegraphics[width=4cm]{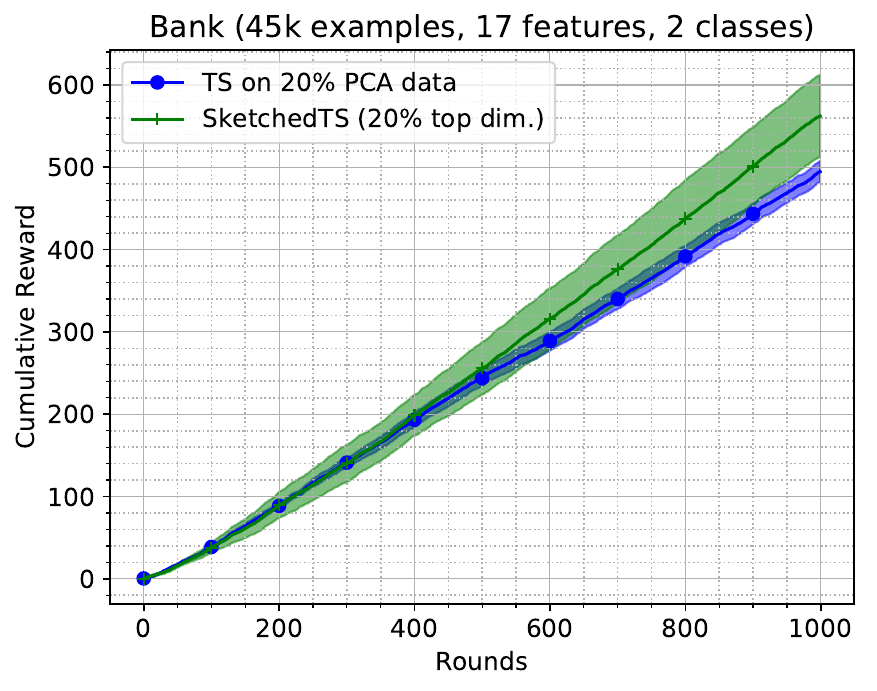}
  \includegraphics[width=4cm]{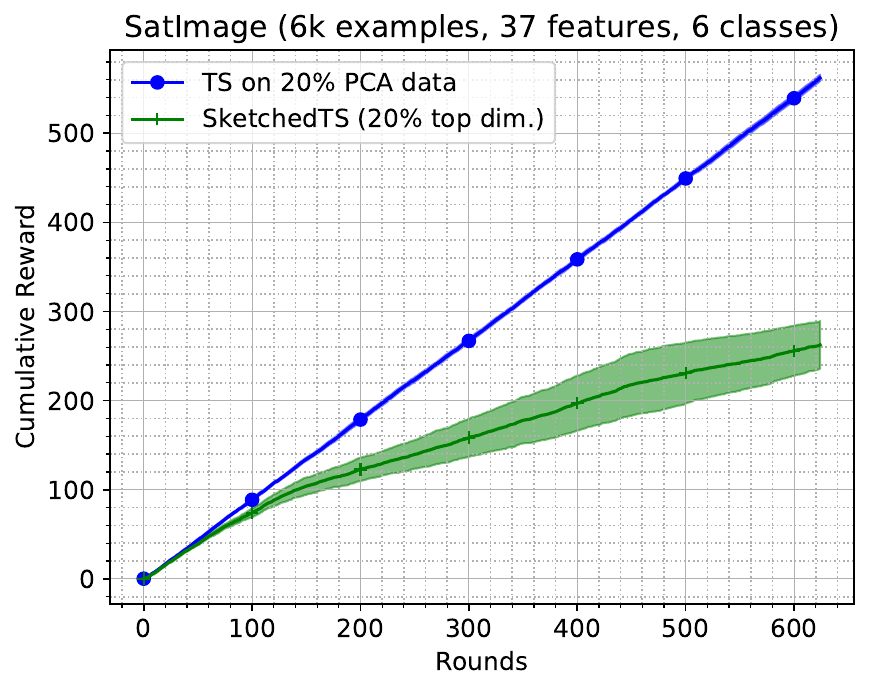}
  \includegraphics[width=4cm]{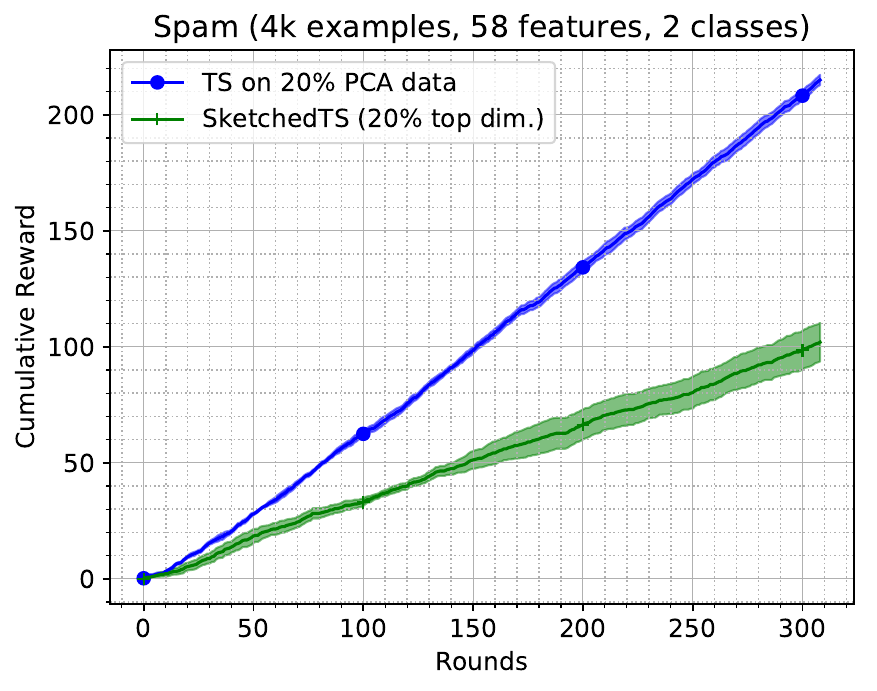}
  ~\\
  {\color[rgb]{0.75,0.75,0.75} \par\noindent\rule{0.75\textwidth}{0.4pt}}
  ~\\~\\
  \includegraphics[width=4cm]{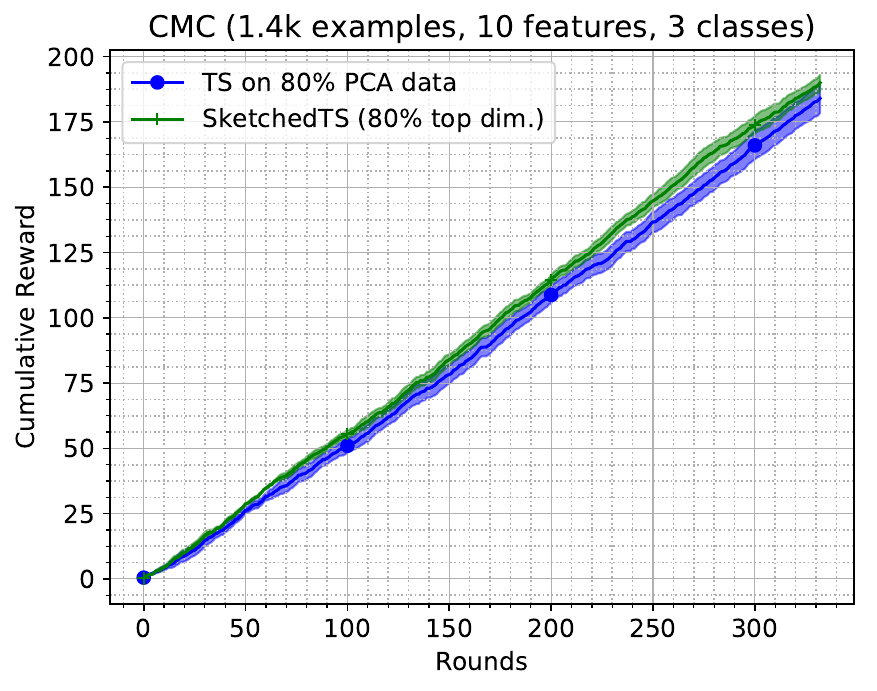}
  \includegraphics[width=4cm]{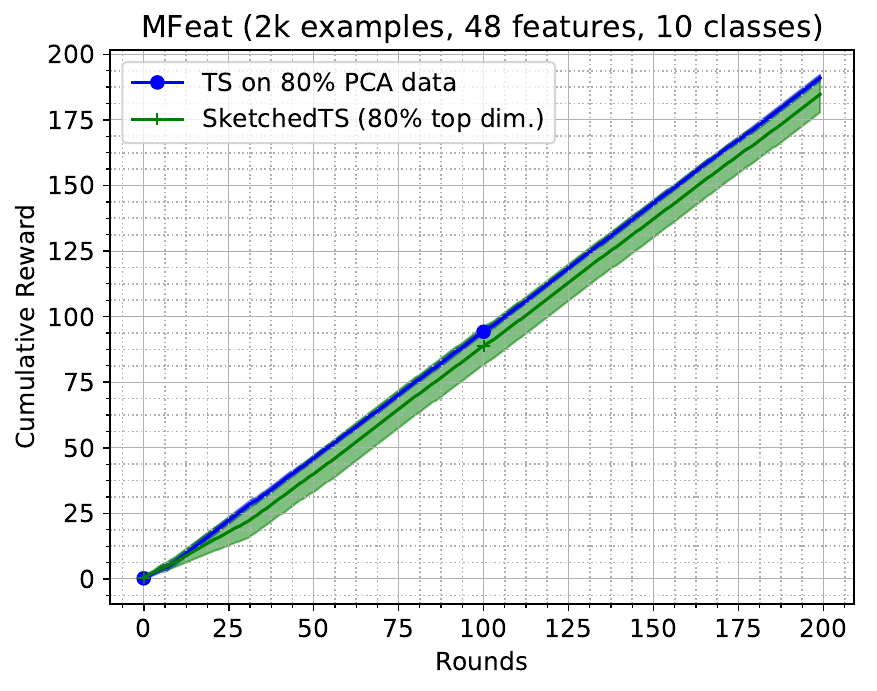}
  \includegraphics[width=4cm]{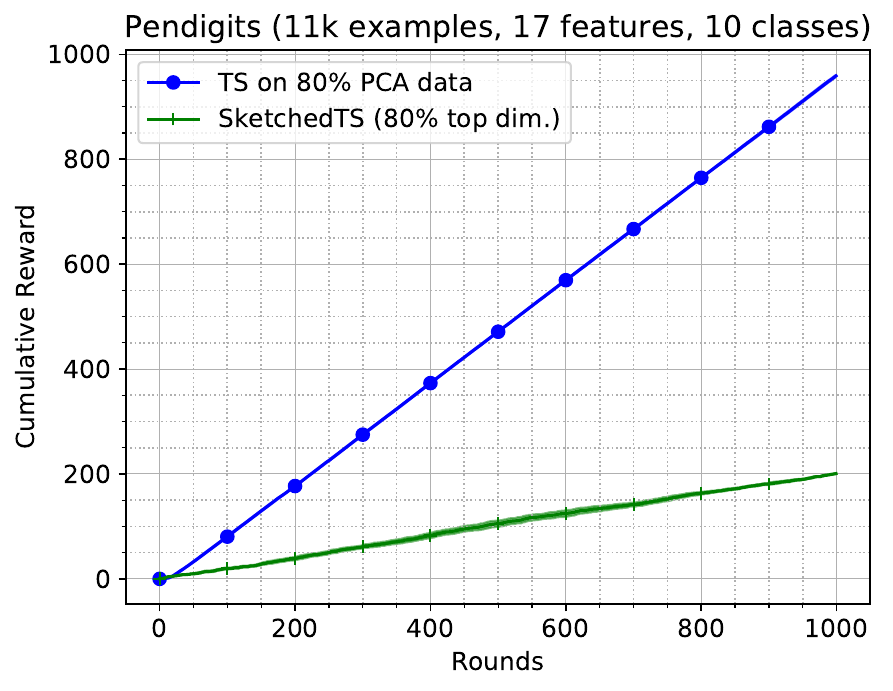}\\
  \includegraphics[width=4cm]{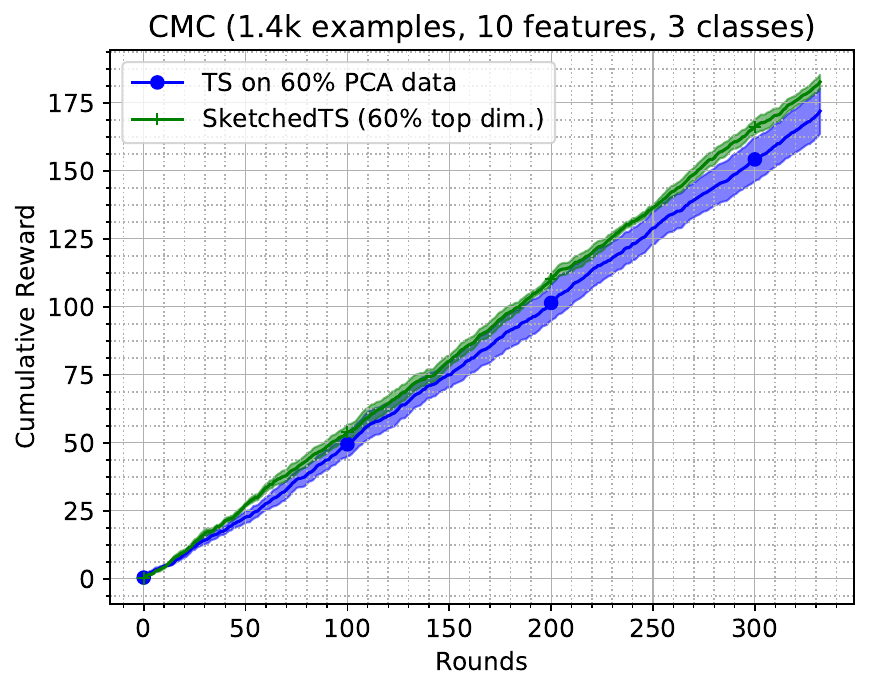}
  \includegraphics[width=4cm]{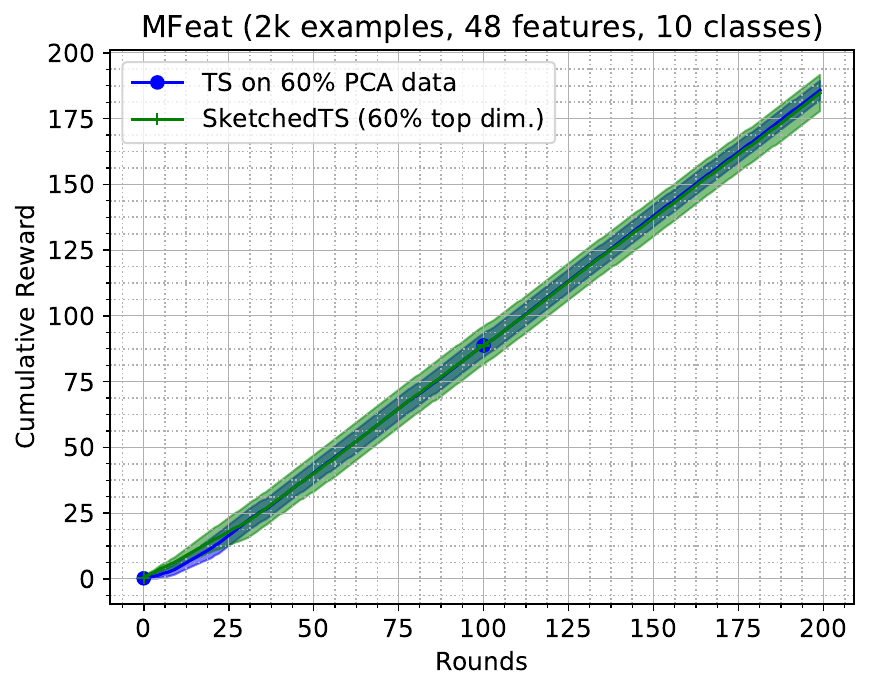}
  \includegraphics[width=4cm]{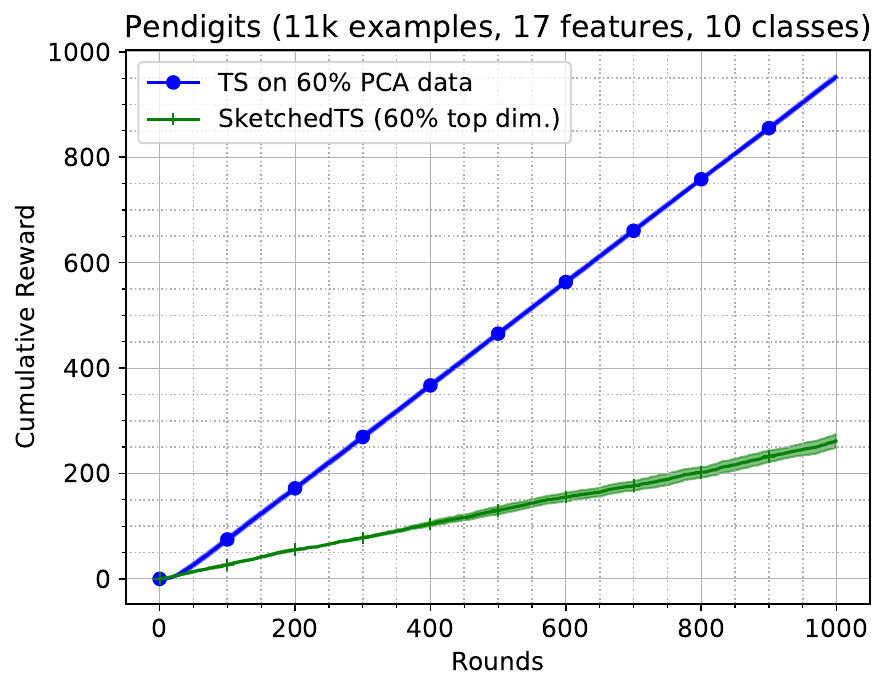}\\
  \includegraphics[width=4cm]{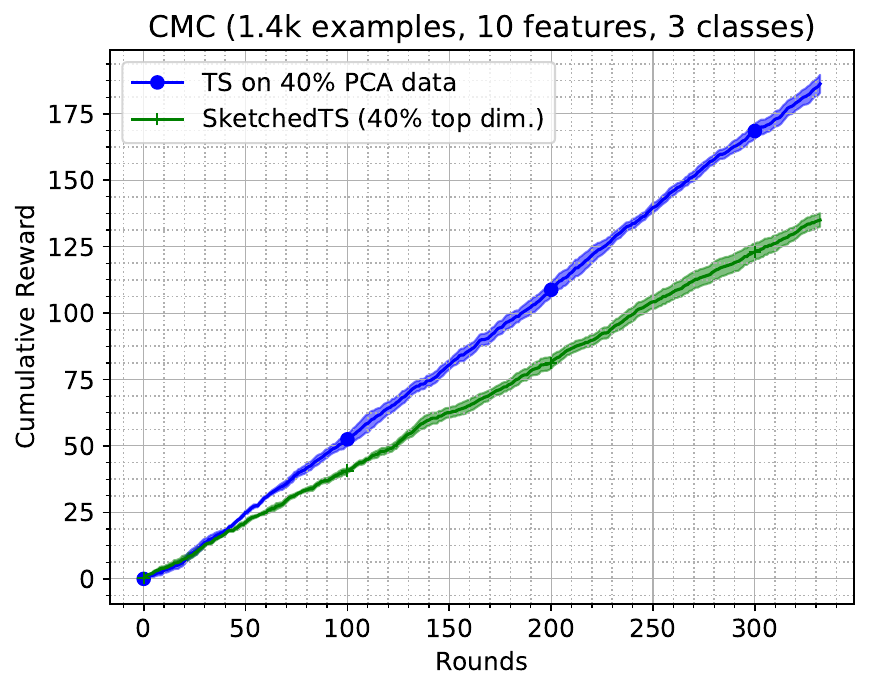}
  \includegraphics[width=4cm]{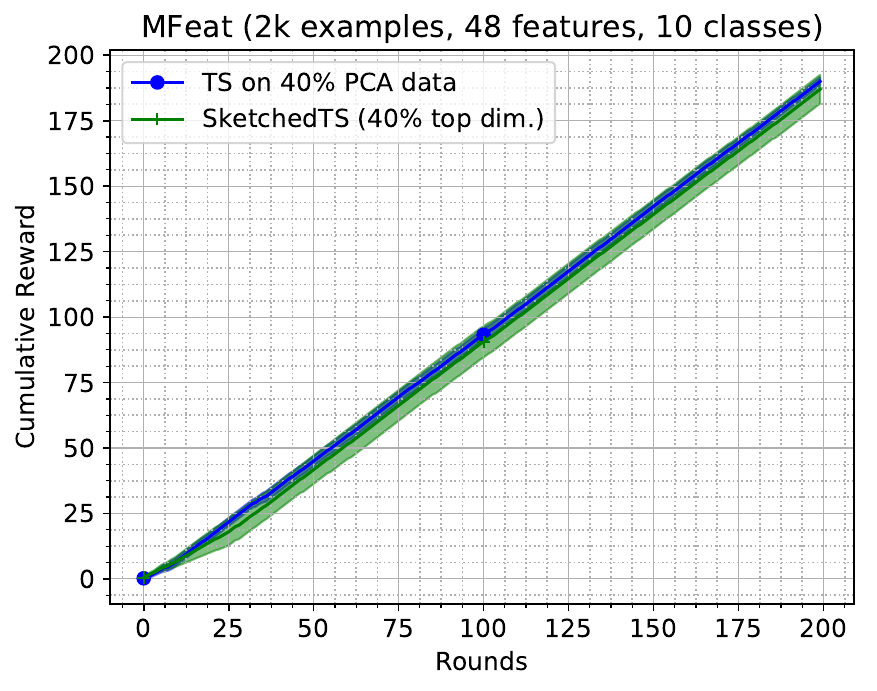}
  \includegraphics[width=4cm]{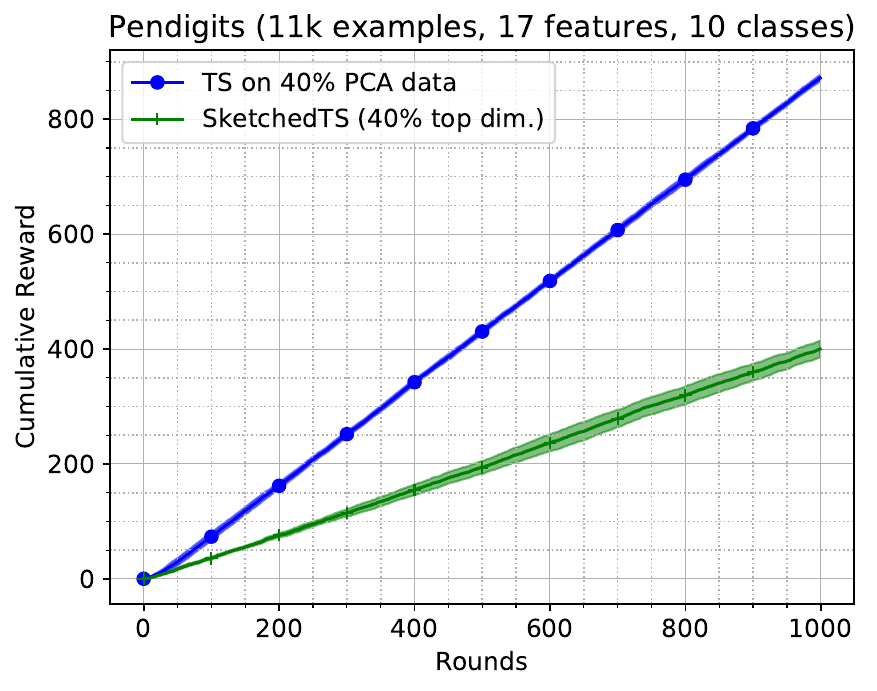}
\caption{Comparison of linear TS run on the best $m$-dimensional subspace against sketched linear TS run with sketch size $m$. Rows show $m$ as a fraction of the context space dimension: $60\%, 40\%, 20\%$ (for the first three datasets), while rows correspond to different datasets.
    Note that, in some cases (with sketch size $m$ of size at least $60\%$), sketched linear TS performs as well as if the best $m$-dimensional subspace had been known in hindsight.
}
\label{fig:appendix:pca_ts}
\end{figure*}

\clearpage

\end{document}